\newtheorem{assumption}{Assumption}
\newcommand{\ep}{\varepsilon}
\newcommand{\R}{\mathbb{R}}
\newcommand{\E}{\mathbb{E}}
\newcommand{\caH}{\mathcal{H}}
\newcommand{\caL}{\mathcal{L}}
\newcommand{\caX}{\mathcal{X}}
\newcommand{\bbN}{\mathbb{N}}
\newcommand{\bbP}{\mathbb{P}}
\newcommand{\bbT}{\mathbb{T}}
\newcommand{\T}{\top}
\newcommand{\hf}{\frac{1}{2}}
\newcommand{\mr}{\mathrm}
\providecommand{\ang}[1]{\left\langle{#1}\right\rangle}
\newcommand{\xk}[1]{\left(#1\right)}
\newcommand{\zk}[1]{\left[#1\right]}
\newcommand{\dk}[1]{\left\{#1\right\}}
\newcommand{\xkm}[1]{\mleft(#1\mright)}
\providecommand{\cref}{\prettyref}
\providecommand{\cref}{\prettyref}
\begin{document}

\title{Diagonal Over-parameterization in Reproducing Kernel Hilbert Spaces as an Adaptive Feature Model: Generalization and Adaptivity}

\author{\name Yicheng Li \email liyc22@mails.tsinghua.edu.cn \\
       \addr Department of Statistics and Data Science\\
       Tsinghua University\\
       Beijing, 100084, China
       \AND
       \name Qian Lin \email qianlin@tsinghua.edu.cn \\
       \addr Department of Statistics and Data Science\\
       Tsinghua University\\
       Beijing, 100084, China}

\editor{My editor}

\maketitle

\begin{abstract}

This paper introduces a diagonal adaptive kernel model that dynamically learns kernel eigenvalues and output coefficients simultaneously during training.
Unlike fixed-kernel methods tied to the neural tangent kernel theory, the diagonal adaptive kernel model adapts to the structure of the truth function, significantly improving generalization over fixed-kernel methods, especially when the initial kernel is misaligned with the target.
Moreover, we show that the adaptivity comes from learning the right eigenvalues during training, showing a feature learning behavior.
By extending to deeper parameterization, we further show how extra depth enhances adaptability and generalization.
This study combines the insights from feature learning and implicit regularization
and provides new perspective into the adaptivity and generalization potential of neural networks beyond the kernel regime.
\end{abstract}

\begin{keywords}
       adaptive kernel, over-parameterization, feature learning, implicit regularization, generalization
\end{keywords}

\section{Introduction}\label{sec:introduction}

Understanding the generalization capabilities of neural networks is a central problem in modern machine learning.
The remarkable success of neural networks across diverse domains, such as natural language processing and computer vision, has highlighted their ability to generalize well beyond their training data.
This phenomenon has sparked intense research interest, leading to the development of several theoretical frameworks over the past decade.
Among these, the Neural Tangent Kernel (NTK) theory has emerged as a foundational tool for analyzing the training dynamics of wide neural networks in the infinite-width regime~\citep{jacot2018_NeuralTangent}.
NTK theory demonstrates that under specific conditions, neural networks behave like kernel methods during training, offering valuable insights into their convergence and generalization properties.
Parallel to this, an extensive literature on kernel regression~(see, e.g., \citet{caponnetto2007_OptimalRates,yao2007_EarlyStopping}) has studied its generalization properties, showing its minimax optimality under certain conditions and providing insights into the bias-variance trade-off.
Thus, one can almost fully understand the generalization properties of neural networks in the NTK regime by analyzing the kernel regression method.

However, the application of NTK theory to analyze neural networks, while invaluable, essentially frames the problem within a traditional statistical method by a fixed kernel.
The NTK analysis, by its reliance on the fixed kernel approximation, can not entirely account for the adaptability and flexibility exhibited by neural networks, particularly those of finite width that deviate from the theoretical infinite-width limit~\citep{woodworth2020_KernelRich}.
However, empirical evidence and theoretical studies have shown that real-world neural networks exhibit dynamic changes in their feature representations during training.
This evolution, often termed \emph{feature learning}, has been increasingly recognized as a key driver behind the adaptability and success of neural networks beyond the NTK regime~\citep{wenger2023_DisconnectTheory,seleznova2022_AnalyzingFinite}.

Recent studies on understanding feature learning have particularly focused the context of shallow neural networks as a random feature model.
It has been found that that even a single gradient descent step can induce significant changes in the feature space, enabling the learning of task-relevant representations~\citep{ba2022_HighdimensionalAsymptotics}.
This phenomenon enables neural networks to achieve a level of generalization superior to that of fixed-kernel methods
~\citep{damian2022_NeuralNetworks,dandi2023_HowTwoLayer,moniri2024_TheoryNonLinear,cui2024_AsymptoticsFeature,bordelon2024_HowFeature}.
However, these studies are mostly restricted to scenarios where the feature weights are updated with a single gradient descent step, and the output coefficients are subsequently computed through a separate ridge regression.

Another branch of research has focused on the over-parameterization nature of neural networks beyond the NTK regime, exploring how over-parameterized models trained with gradient methods can lead to \textit{implicit regularization} and improve generalization.
Studies in this domain~(e.g., \citet{hoff2017_LassoFractional,gunasekar2017_ImplicitRegularization,arora2019_ImplicitRegularization,kolb2023_SmoothingEdges}) have revealed that over-parameterized models, particularly those trained with gradient descent and its variants, exhibit biases towards simpler, more generalizable functions, even in the absence of explicit regularization terms.
Additionally, recent works~\citep{vaskevicius2019_ImplicitRegularization,zhao2022_HighdimensionalLinear,li2021_ImplicitSparse} have shown that in the setting of high-dimensional linear regression, over-parameterized models with proper initialization and early stopping can achieve minimax optimal recovery under certain conditions.
These findings highlight the advantages of over-parameterized models in transcending traditional statistical paradigms.

Building on these lines of research,
we propose to investigate \emph{adaptive feature model}~\citep{zhang2024_StatisticalUnderstanding}, wherein the kernel, or equivalently the feature map,
evolves dynamically during training to better align with the underlying data structure.
Particularly, introducing a certain parameterization of the feature map, we can study the gradient descent on both the feature parameters and the output coefficients.
This paradigm has a two-fold interpretation:
in terms of feature learning, the parameterized feature map enables the learning of feature during training,
while in terms of implicit regularization, the extra parameters introduced in the feature map allow the model to implicitly regularize with the gradient descent dynamics.
Moreover, by considering only gradient training dynamics, we can obtain a clearer understanding of the adaptive nature during training.

In this paper, we study a \textit{diagonal adaptive feature} model (see \cref{eq:DiagonalFeatureMap} and \cref{eq:DiagonalKernelGD})
where the eigenvalues of the kernel and the coefficients are simultaneously learned during training,
which serve as a first concrete instantiation of the adaptive feature learning paradigm.
Under the non-parametric regression framework, we show that, by dynamically adjusting the eigenvalues of the kernel, the model can significantly improve its generalization properties compared with the fixed-kernel method.
Moreover, the adaptivity of the model comes from learning the right eigenvalues during the training process, showing a feature learning behavior.
These results provide insights into the adaptivity and generalization properties of neural networks beyond the NTK regime.


\subsection{Our Contributions}

This paper is an extended version of the conference paper \citet{li2024_ImprovingAdaptivity}.
In this paper, we study the generalization properties of a diagonal adaptive kernel regression model, where the eigenvalues of the kernel are learned simultaneously with the output coefficients during training.
Our contributions are summarized as follows:

\textit{Limitations of the (fixed) kernel regression.}
In this work, we first investigate the limitations of the fixed kernel regression method by specific examples,
illustrating that the traditional kernel regression method suffers from the misalignment between the kernel and the truth function.
We show that even when the eigen-basis of the kernel is fixed, the associated eigenvalues, particularly their alignment with the truth function's coefficients in the eigen-basis, can significantly affect the generalization properties of the method.

\textit{Advantages of diagonal adaptive kernel.}
Focusing on the alignment between the kernel's eigenvalues and the truth function's coefficients,
we introduce a diagonal parameterization of the kernel and study the gradient descent dynamics on both the feature parameters and the output coefficients.
Our theoretical results show the following:
\begin{itemize}
  \item We show in \cref{thm:EigenvalueGD} that the diagonal adaptive kernel method, with proper early-stopping, can achieve nearly the oracle convergence rate regardless of the underlying structure of the signal, significantly outperforming the vanilla fixed-kernel method when the misalignment is severe.
  \item In \cref{prop:EigLearn}, we show that the adaptive kernel can adapt to the signal's structure by learning the right eigenvalues during training, showing a feature learning behavior.
  \item Considering a deeper parameterization in \cref{thm:EigenvalueDeepGD}, we show that adding depth can further ease the impact of the initial choice of the eigenvalues, thus improving the generalization capability of the model.
  \item As the main technical improvement, we consider directly the non-parametric regression framework instead of the Gaussian sequence model approximation in the conference paper~\citep{li2024_ImprovingAdaptivity}.
\end{itemize}


\subsection{Related Works}

\textit{Fixed kernel regression.}
The regression problem with a fixed kernel has been well studied in the literature~\citep{andreaschristmann2008_SupportVector}.
With proper regularization, kernel methods can achieve the minimax optimal rates under various conditions
~\citep{caponnetto2007_OptimalRates,lin2018_OptimalRatesa,fischer2020_SobolevNorm,zhang2023_OptimalityMisspecifieda}.
More recently, a sequence of works provided more refined results on the generalization error of kernel methods
~\citep{bordelon2020_SpectrumDependent,cui2021_GeneralizationError,li2023_SaturationEffect,mallinar2022_BenignTempered,li2023_AsymptoticLearning,li2024_GeneralizationError}.

\textit{The NTK regime of neural networks.}
Over-parameterized neural networks are connected to kernel methods through the neural tangent kernel (NTK) theory proposed by \citet{jacot2018_NeuralTangent},
which shows that the dynamics of the neural network at infinite width limited can be approximated by a kernel method with respect to the corresponding NTK\@.
The theory was further developed by many follow-up works~\citep{arora2019_ExactComputation,arora2019_FinegrainedAnalysisa,du2018_GradientDescent,lee2019_WideNeurala,allen-zhu2019_ConvergenceTheory}.
Also, the properties on the corresponding NTK have also been studied~\citep{geifman2020_SimilarityLaplace,bietti2020_DeepEquals,li2024_EigenvalueDecay}.

\textit{Feature learning.}
A lot of attention has been paid to the feature learning behavior of neural networks.
Starting from \citet{ba2022_HighdimensionalAsymptotics},
most of these works focus on shallow neural networks as a random feature model where the feature weights are firstly trained by one-step gradient descent and then the output weights are computed using ridge regression, showing the generalization properties of the feature learning behavior~\citep{moniri2024_TheoryNonLinear,cui2024_AsymptoticsFeature,bordelon2024_HowFeature,lejeune2023_AdaptiveTangent,dandi2024_RandomMatrix,yang2022_FeatureLearning,damian2022_NeuralNetworks}.
These studies reveal that feature learning manifests as adaptations in the feature matrix, often leading to rank-one ``spikes'' in its spectrum, which align with the structure of the target function.
Beside the random feature model, the adaptive kernel perspective has also been investigated in various particular forms~\citep{woodworth2020_KernelRich,gatmiry2021_OptimizationAdaptive}.

\textit{Over-parameterization as implicit regularization.}
There has been a surge of interest in understanding the role of over-parameterization in deep learning.
One perspective is that over-parameterized models trained by gradient-based methods can expose certain implicit bias towards simple solutions,
which include linear models~\citep{hoff2017_LassoFractional},
matrix factorization~\citep{gunasekar2017_ImplicitRegularization,arora2019_ImplicitRegularization,li2021_ResolvingImplicit,razin2021_ImplicitRegularization},
linear networks~\citep{yun2021_UnifyingView,nacson2022_ImplicitBias}
and neural networks~\citep{kubo2019_ImplicitRegularization,woodworth2020_KernelRich}.
Moreover, variants of gradient descent such as stochastic gradient descent are also shown to have implicit regularization effects~\citep{li2022_WhatHappens,vivien2022_LabelNoise,pesme2021_ImplicitBias}.
While most of these works focus only on the optimization process and the final solution,
only a few works provided generalization guarantees for the over-parameterized models with early stopping.
Being the most relevant to our work,
they include over-parameterization in linear regression~\citep{zhao2022_HighdimensionalLinear,li2021_ImplicitSparse,vaskevicius2019_ImplicitRegularization} and single index model~\citep{fan2021_UnderstandingImplicit}.

%
%

\subsection{Notations}
We denote by $\ell^2 = \dk{(a_j)_{j\geq 1} \mid \sum_{j\geq 1}a_j^2 < \infty}$ the space of square summable sequences.
We write $a \lesssim b$ if there exists a constant $C>0$ such that $a \leq Cb$ and
$a \asymp b$ if $a \lesssim b$ and $b \lesssim a$,
where the dependence of the constant $C$ on other parameters is determined by the context.
We denote by $\abs{S}$ the cardinality of a set $S$.

\section{Limitations of Fixed Kernel Regression}\label{sec:kernel-regression}

Let us consider the non-parametric regression problem given by $y = f^*(x) + \ep$,
where $x \in \caX$ and $\caX$ is the input space with $\mu$ being a probability measure supported on $\caX$
and $\ep$ is a mean-zero $\sigma^2$-sub-Gaussian noise.
The function $f^*(x)$ represents the unknown regression function we aim to learn.
Suppose we are given samples $\{(x_i,y_i)\}_{i=1}^n$, drawn i.i.d.\ from the model.
We denote $X = (x_1,\dots,x_n)^\T$ and $Y = (y_1,\dots,y_n)^\T$.
For an estimator $\hat{f}$ of $f^*$, we consider the mean squared error defined as
$\E_{x \sim \mu} \zk{\hat{f}(x) - f^*(x)}^2 = \norm{\hat{f} - f^*}_{L^2(\caX,\dd \mu)}^2$.

Let $k: \caX \times \caX \to \R$ be a continuous positive definite kernel and $\caH_k$ be its associated reproducing kernel Hilbert space (RKHS).
The well-known Mercer's decomposition~\citep{steinwart2012_MercerTheorem} of the kernel function $k$ gives
\begin{align}
  \label{eq:Mercer}
  k(x,y) = \sum_{j=1}^\infty \lambda_j e_j(x) e_j(y),
\end{align}
where $(e_j)_{j \geq 1}$ is an orthonormal basis of $L^2(\caX,\dd \mu)$, and $(\lambda_j)_{j \geq 1}$ are the eigenvalues of $k$ in descending order.
Moreover, we can introduce the feature map (as a column vector)
\begin{align}
  \label{eq:FixedKernelFeatureMap}
  \Phi(x) = (\lambda_j^{\hf} e_j(x))_{j \geq 1} : \caX \to \ell^2
\end{align}
such that $k(x,x') = \ang{\Phi(x),\Phi(x')}$.
With the feature map, a function $f \in \caH_k$ can be represented as $f(x) = \ang{\Phi(x),\beta}_{\ell^2}$ for some $\beta \in \ell^2$.

Defining the empirical loss as
\begin{align}
  \label{eq:EmpiricalLoss}
  \caL_n(f) = \frac{1}{2n}\sum_{i=1}^n (y_i - f(x_i))^2,
\end{align}
we can consider an estimator $\hat{f}_t = \ang{\Phi(x),\beta_t}_{\ell^2}$ governed by the following gradient flow on the feature space
\begin{align}
  \label{eq:KernelGD_RKHS}
  \dot{\beta}_t = - \nabla_{\beta} \caL =  \frac{1}{n} \sum_{i=1}^n (y_i - \ang{\Phi(x_i),\beta_t}_{\ell^2}) \Phi(x_i),
  \qq{where} \beta_0 = \bm{0}.
\end{align}
This kernel gradient descent (flow) estimator corresponds to neural networks at infinite width limit by the celebrated neural tangent kernel (NTK) theory~\citep{jacot2018_NeuralTangent,allen-zhu2019_ConvergenceTheory}.

An extensive literature~\citep{yao2007_EarlyStopping,lin2018_OptimalRatesa,li2024_GeneralizationError}
has studied the generalization performance of such kernel gradient descent estimator.
From the Mercer's decomposition, we can further introduce interpolation spaces for $s \geq 0$ as
\begin{align}
  \label{eq:RKHS_Expansion}
  \zk{\caH_k}^s \coloneqq \Big\{ \sum_{j =1}^\infty \beta_j \lambda_j^{\frac{s}{2}} e_j \;\big|\; (\beta_j)_{j \geq 1} \in \ell^2 \Big\},
\end{align}
which is equipped with the norm $\norm{f}_{[\caH_k]^s} = \norm{\bm{\beta}}_{\ell^2}$ for $f = \sum_{j =1}^\infty \beta_j \lambda_j^{\frac{s}{2}} e_j$.
Particularly, the interpolation space $[\caH_k]^1$ corresponds to the RKHS $\caH_k$ itself.
Then, assuming the eigenvalue decay rate $\lambda_j \asymp j^{-\gamma}$,
the classical results~(see, e.g., \citet{yao2007_EarlyStopping,li2024_GeneralizationError}) in kernel regression state that
the optimal rate of convergence under the source condition $f^* \in [\caH_k]^s$ with $\norm{f^*}_{[\caH_k]^s} \leq 1$ is
$n^{-\frac{s\gamma}{s\gamma+1}}$.
However, since the interpolation space $[\caH_k]^s$ is defined via the eigen-decomposition of the kernel,
the generalization performance of kernel regression methods depends substantially on the eigen-decomposition of the kernel and the expansion of the target function under the basis.
Therefore, the choice of the kernel could significantly affect the performance of the kernel method.
To demonstrate this quantitatively, let us consider the following examples.

\begin{example}[Eigenfunctions in common order]
  \label{example:common-eigenfunctions}
  It is well known that kernels possessing certain symmetries, such as dot-product kernels on the sphere or translation-invariant periodic kernels on the torus,
  share the same set of eigenfunctions (such as the spherical harmonics or the Fourier basis).
  If we consider a fixed set of eigenfunctions $\{e_j\}_{j\geq 1}$ and a given truth function $f^*$,
  for two kernels $k_1$ and $k_2$ with eigenvalue decay rates $\lambda_{j,1} \asymp j^{-\gamma_1}$ and $\lambda_{j,2} \asymp j^{-\gamma_2}$ respectively,
  it follows that:
  \begin{align*}
    f^* \in [\caH_{k_1}]^{s_1} \Longleftrightarrow f^* \in [\caH_{k_2}]^{s_2} \qq{for} \gamma_1 s_1 = \gamma_2 s_2.
  \end{align*}
  Given that the convergence rate is dependent solely on the product $s \gamma$, the convergence rates relative to the two kernels will be identical.
\end{example}

\cref{example:common-eigenfunctions} seems to show that when the eigenfunctions are fixed,
kernel regression methods yield similar performance across different kernels.
However, it's important to note that this similarity is due to both kernels having \textit{the same eigenvalue decay order},
which aligns with the predetermined order of the basis.
In fact, if the eigenvalue decay order of a kernel deviates from that of the true function,
even if the eigenfunction basis remain the same, it can lead to significantly different convergence rates.
Let us consider the following example to illustrate this point.

\begin{example}[Low-dimensional structure]
  \label{example:low-dim-structure}
  Consider translation-invariant periodic kernels on the torus $\mathbb{T}^d = [-1,1)^d$,
  with eigenfunctions forming the Fourier basis $\phi_{\bm{m}}(\bm{x}) = \exp(i \pi \ang{\bm{m},\bm{x}})$,
  where $\bm{m} \in \mathbb{Z}^d$.
  Within this basis, a target function $f^*(\bm{x})$ can be represented as
  $f^* = \sum_{\bm{m} \in \mathbb{Z}^d} f_{\bm{m}} \phi_{\bm{m}}(\bm{x})$.
%
  Assuming $f^*$ exhibits a low-dimensional structure, specifically $f^*(\bm{x}) = g(x_1,\ldots,x_{d_0})$ for some $d_0 < d$,
  and considering $g$ belongs to the Sobolev space $H^t(\mathbb{T}^{d_0})$ of order $t$, the coefficients $f_{\bm{m}}$ can be shown to simplify to:
  \begin{align*}
    f_{\bm{m}} =
    \begin{cases}
      g_{\bm{m}_1}, & \bm{m} = (\bm{m}_1,\bm{0}),~ \bm{m}_1 \in \mathbb{Z}^{d_0}, \\
      0, & \text{otherwise}.
    \end{cases}
  \end{align*}
  Let us now consider two translation-invariant periodic kernels $k_1$ and $k_2$ given in terms of their eigenvalues:
  $k_1$ is given by $\lambda_{\bm{m},1} = (1 + \norm{\bm{m}}^2)^{-r}$ for some $r > d/2$,
  whose RKHS is the full-dimensional Sobolev space $H^r(\bbT^d)$;
  $k_2$ is given by $\lambda_{\bm{m},2} = (1 + \norm{\bm{m}}^2)^{-r}$ for $\bm{m} = (\bm{m}_1,\bm{0})$ and $\lambda_{\bm{m},2} = 0$ otherwise.
  Then, the function $f^*$ belongs to both $[\mathcal{H}_{k_1}]^{s}$ and $[\mathcal{H}_{k_2}]^{s}$ for $s= t / r$.
  After reordering the eigenvalues in descending order,
  the decay rates for the two kernels are identified as $\gamma_1 = 2r/d$ and $\gamma_2 = 2r/d_0$.
  Thus, the convergence rates with respect to the two kernels are respectively:
  \begin{align*}
    \frac{2t}{2t + d} \qand \frac{2t}{2t + d_0}.
  \end{align*}
  Therefore, we see that when $d$ is significantly larger than $d_0$, the convergence rate for the second kernel notably surpasses that of the first.


\end{example}

This example illustrates that the eigenvalues can significantly impact the learning rate, even when the eigenfunctions are the same.
In the scenario presented, the second kernel benefits from the low-dimensional structure of the target function by focusing only on the relevant dimensions,
whereas the first one suffers from the curse of dimensionality since it considers all dimensions.
The key point to take away from this example is the \textit{alignment between the kernel and the target function}.
To generalize this example, we can consider the following example where the order of the eigenvalues does not align with the order of the target function's coefficients.

\begin{example}[Misalignment]
  \label{example:misalignment}

  Let us fix a set of the eigenfunctions $\xk{e_j}_{j\geq 1}$
  and expand the truth function as $f^* = \sum_{j \geq 1} \theta_j^* e_j$.
  Note that by giving $\xk{e_j}_{j\geq 1}$, we already defined an order of the basis in $j$,
  but coefficients $\theta_j^*$ of the truth function are not necessarily ordered by $j$.
  Suppose that an index sequence $\ell(j)$ gives the descending order of $\abs{\theta_{\ell(j)}^*}$,
  namely $\abs{\theta_{\ell(1)}^*} \geq \abs{\theta_{\ell(2)}^*} \geq \dots$
  Then we can characterize the misalignment by the difference between $\ell(j)$ and $j$.
  Specifically, we assume that
  \begin{align}
    \label{eq:GappedDecay}
    \abs{\theta_{\ell(j)}^*} \asymp j^{-(p+1)/2} \qq{and} \ell(j) \asymp j^q \qq{for} p > 0,~q \geq 1,
  \end{align}
  where larger $q$ indicates a more severe misalignment.
  In terms of eigenvalues, let us consider $\lambda_{j,1} \asymp j^{-\gamma}$, which is in the order of $j$,
  while $\lambda_{\ell(j),2} \asymp j^{-\gamma}$, which is in the order of $\ell(j)$.
  Then, the convergence rates with the two sequences of coefficients are respectively
  \begin{align*}
    \frac{p}{p + q} \qand \frac{p}{p + 1}.
  \end{align*}
  Therefore, the convergence rates can differ greatly if the misalignment is significant, namely when $q$ is large.
\end{example}

From \cref{example:low-dim-structure} and \cref{example:misalignment},
we find that it is beneficial that \textit{the eigenvalues of the kernel align with the structure of the target function}.
However, one can hardly choose the proper kernel a priori, especially when the structure of the target function is unknown,
so the fixed kernel regression can be limited by the kernel itself and be unsatisfactory.
Motivated by these examples, we would like to explore the idea of an ``adaptive kernel approach,''
where the kernel can be learned from the data.

\section{Diagonal Adaptive Kernel Regression}\label{sec:diagonal-adaptive-kernel-regression}

Motivated by the examples in the last section,
as a first step toward the adaptive kernel approach, we consider \textit{adapting the eigenvalues of the kernel with eigenfunctions fixed}.
To this end, we first recall the fixed kernel feature map \cref{eq:FixedKernelFeatureMap} and the kernel gradient flow \cref{eq:KernelGD_RKHS}.
To learn the eigenvalues, let us consider the feature map parameterized by $\bm{a} = \xk{a_j}_{j \geq 1}$ and the corresponding kernel:
\begin{align}
  \label{eq:DiagonalFeatureMap}
  \Phi_{\bm{a}}(x) = \xk{a_j e_j(x)}_{j \geq 1},\quad
  k_{\bm{a}}(x,x') = \sum_{j \geq 1} a_j^2 e_j(x) e_j(x').
\end{align}
For the predictor, we consider similarly $f(x) = \ang{\Phi_{\bm{a}}(x), \bm{\beta}}_{\ell^2}$,
where $\bm{\beta} = \xk{\beta_j}_{j \geq 1} \in \ell^2$ is the coefficient vector under the feature map.
Then, we consider the gradient flow that trains both $\bm{a}$ and $\bm{\beta}$:
\begin{align}
  \label{eq:DiagonalKernelGD}
  \left\{
  \begin{aligned}
    \dot{\bm{\beta}}(t) &= - \nabla_{\bm\beta} \caL_n(f_t),\quad \bm{\beta}(0) = \bm{0}; \\
    \dot{\bm{a}}(t) &= - \nabla_{\bm a} \caL_n(f_t), \quad a_j(0) = \lambda_j^{1/2},
  \end{aligned}
  \right.
\end{align}
where $f_t = \ang{\Phi_{\bm{a}(t)}(x), \bm{\beta}(t)}_{\ell^2}$ and $\caL_n$ is the empirical loss defined in \cref{eq:EmpiricalLoss}.
Here, $(\lambda_j)_{j \geq 1}$ are the initial eigenvalues of the kernel.
We will denote by $\hat{f}^{\mr{EigGD}}_t = f_t$ the estimator obtained by the gradient flow \cref{eq:DiagonalKernelGD} at time $t$.

%

First, let us introduce the following assumption that the eigenfunctions are uniformly bounded and the eigenvalues decay polynomially.
This assumption is common in the kernel methods literature~\citep{steinwart2009_OptimalRates,zhang2023_OptimalityMisspecifieda},
and is satisfied, for example, the eigenfunctions are the Fourier basis and the kernel is associated with a Sobolev space~\citet{fischer2020_SobolevNorm}.

\begin{assumption}
  \label{assu:EigenSystem}
  We assume that $\sup_{j \geq 1} \norm{e_j(x)}_{\infty} \leq C_{\mr{eigf}}$
  and for some $\gamma > 1$,
  \begin{align}
    c_{\mr{eig}} j^{-\gamma} \leq \lambda_j \leq C_{\mr{eig}}j^{-\gamma},
  \end{align}
  where $C_{\mr{eigf}}, c_{\mr{eig}}, C_{\mr{eig}} > 0$ are constants.
\end{assumption}

For the truth function, since $\xk{e_j(x)}_{j \geq 1}$ forms an orthonormal basis,
we can write the truth function as
$f^*(x) = \sum_{j \geq 1} \theta_j^* e_j(x).$
Then, we introduce the following quantities on the truth coefficients $(\theta_j^*)_{j \geq 1}$.
\begin{align}
  \label{eq:PhiPsi}
  \begin{aligned}
    J_{\mr{sig}}(\delta) &\coloneqq \left\{ j : \abs{\theta_{j}^*} \geq \delta \right\}, \quad
    \Phi(\delta) \coloneqq \abs{J_{\mr{sig}}(\delta)},\\
    \Psi(\delta) &= \sum_{j \notin J_{\mr{sig}}(\delta)} (\theta_j^*)^2 = \sum_{j = 1}^\infty  (\theta_j^*)^2 \bm{1}\{(\theta_j^*)^2 < \delta^2\}.
  \end{aligned}
\end{align}
Our assumption on the truth coefficients is as follows.

\begin{assumption}
  \label{assu:SignalSpan}
  There exists constant $B_{\infty}$ such that $|\theta_j^*| \leq B_{\infty}$ for all $j \geq 1$.
  Moreover, there are constants $\kappa \geq 1$ and $B_{\mr{spn}} > 0$ such that
  \begin{align}
    \label{eq:SignificantSpan}
    \max J_{\mr{sig}}(\delta) \leq B_{\mr{spn}} \delta^{-\kappa}, \quad \forall \delta > 0.
  \end{align}
  and there is a constant $s_0, B_{\mr{sig}} > 0$ such that
  \begin{align}
    \label{eq:FastDecay}
    \Phi(\delta) \leq B_{\mr{sig}} \delta^{-(1-s_0)}.
  \end{align}
\end{assumption}

In \cref{assu:SignalSpan}, condition \cref{eq:SignificantSpan} says that the span of the significant components, namely those with $\abs{\theta_j^*} \geq \delta$,
grows at most polynomially in $1/\delta$.
This assumption is mild and holds for many practical settings, such as cases considered in \cref{example:misalignment} ($\kappa = \frac{2q}{p+1}$ for the first kernel).
In other perspective, it imposes a mild condition on the misalignment between the ordering of the truth signal and the ordering of the eigenvalues,
where $\kappa$ measures the misalignment between the ordering of $\theta_j$ and the ordering of $j$ itself.
On the other hand, the condition \cref{eq:FastDecay} requires that the significant components decay fast enough such that $(\theta_j^*)_{j \geq 1}$ is summable,
which is also a mild regularity condition.

With these assumptions, we can now present the generalization error of the diagonal adaptive kernel regression.
The proof of the following theorem is given in \cref{sec:proofs}.
We will discuss the implications of this result in \cref{subsec:discussions}.

\begin{theorem}
  \label{thm:EigenvalueGD}
  Suppose that \cref{assu:EigenSystem} and \cref{assu:SignalSpan} hold.
  Then, for any $s > 0$, by choosing $t \asymp n^{\hf}$,
  when $n$ is sufficiently large, with probability at least $1-C/n^2$,
  we have
  \begin{align}
    \norm{f^* - \hat{f}^{\mr{EigGD}}_t}_{L^2}^2
    \lesssim \frac{1}{n}\zk{ \Phi\xkm{n^{-1/2}\sqrt {\ln n}} \ln n + n^{\frac{1+s}{2\gamma}}} + \Psi\xkm{n^{-1/2} {\ln n}}.
  \end{align}
  Here, the constant $C$ and the hidden constants may depend on the constants in the assumptions and the choice of $s$.
\end{theorem}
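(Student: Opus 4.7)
The first move is to exploit the product parameterization. Along the flow \cref{eq:DiagonalKernelGD} one computes $\frac{d}{dt}(a_j^2-\beta_j^2)=2a_j\dot a_j-2\beta_j\dot\beta_j=0$, so the conservation law $a_j(t)^2=\beta_j(t)^2+\lambda_j$ holds for all $t\geq 0$. Introducing the effective coefficient $w_j(t):=a_j(t)\beta_j(t)$, so that $f_t=\sum_j w_j e_j$, the identity $w_j^2=\beta_j^2(\beta_j^2+\lambda_j)$ gives $\beta_j^2=\tfrac12(-\lambda_j+\sqrt{\lambda_j^2+4w_j^2})$ and \cref{eq:DiagonalKernelGD} collapses to the scalar equations
\[
\dot w_j(t)=\sqrt{\lambda_j^2+4w_j(t)^2}\,g_j(t),\quad w_j(0)=0,\quad g_j(t)=\frac{1}{n}\sum_{i=1}^n(y_i-f_t(x_i))e_j(x_i).
\]
Writing $y_i=f^*(x_i)+\ep_i$ and expanding $f^*,f_t$ in the basis $(e_j)$ further decomposes $g_j(t)=(\theta_j^*-w_j(t))+R_j(t)+N_j$, where $N_j=\frac{1}{n}\sum_i\ep_i e_j(x_i)$ is an empirical noise component and $R_j(t)=\sum_{k\neq j}(\theta_k^*-w_k(t))M_{jk}$ is a cross-coordinate coupling through $M_{jk}=\frac{1}{n}\sum_i e_j(x_i)e_k(x_i)-\delta_{jk}$. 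The multiplier $\sqrt{\lambda_j^2+4w_j^2}$ is of order $\lambda_j$ while $|w_j|\ll\lambda_j^{1/2}$ and switches to $2|w_j|$ once $|w_j|$ exceeds $\lambda_j^{1/2}$; this is exactly the implicit-sparsity phenomenon of diagonal linear networks, specialized to a non-orthonormal design.

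\textbf{Concentration and coordinate classification.}
Set the signal threshold $\delta\asymp n^{-1/2}\sqrt{\ln n}$ and a truncation level $N\asymp n^{(1+s)/(2\gamma)}$. Under \cref{assu:EigenSystem}, Hoeffding plus a union bound and a matrix-Bernstein estimate yield, with probability at least $1-C/n^2$, simultaneously $\max_{j\leq N}|N_j|\lesssim\delta$, $\|M^{(N)}\|_{\mr{op}}\lesssim\sqrt{N\ln n/n}=o(1)$ (here $M^{(N)}$ is the principal $N\times N$ submatrix of $M$; the bound uses $\gamma>1$ and small $s$), and $\max_{j\leq N}\sum_{k\leq N}|M_{jk}|\lesssim 1$. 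The span condition \cref{eq:SignificantSpan} forces $J_{\mr{sig}}(\delta)\subset[1,N]$ for $n$ large, so every significant coordinate lies inside the truncation. Split the indices into \emph{signal} $S=J_{\mr{sig}}(\delta)$, \emph{null-inside} $\{j\leq N\}\setminus S$, and \emph{tail} $\{j>N\}$.

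\textbf{Trajectory analysis and aggregation.}
On the high-probability event, each signal coordinate $j\in S$ has drift $|\theta_j^*-w_j|\gtrsim|\theta_j^*|$ until $|w_j|$ reaches $|\theta_j^*|/2$; beyond that point $\sqrt{\lambda_j^2+4w_j^2}\gtrsim|\theta_j^*|$ produces exponential contraction toward $\theta_j^*+N_j$. A two-phase estimate shows $w_j$ enters an $O(\delta)$-neighborhood of $\theta_j^*$ by $t\asymp\sqrt n$; summing squared errors over $S$ contributes $\Phi(\delta)\cdot\delta^2\asymp \Phi(n^{-1/2}\sqrt{\ln n})\cdot n^{-1}\ln n$. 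For null-inside coordinates, $|g_j|\lesssim\delta$ prevents $w_j$ from entering the exponential regime, so $w_j$ evolves essentially as in kernel gradient flow with rate $\lambda_j$; summing the resulting variance $\sigma^2/n$ over $N$ active coordinates gives $n^{-1}\cdot N=n^{-1}\cdot n^{(1+s)/(2\gamma)}$, with the leftover bias $\sum_{j\in \text{null-inside}}(\theta_j^*)^2$ absorbed into $\Psi(n^{-1/2}\ln n)$ (the slack between the $\Phi$- and $\Psi$-thresholds accommodates the $\sqrt{\ln n}$ in the learning-time estimates). For tail coordinates $j>N$, $\lambda_j t\lesssim N^{-\gamma}\sqrt n=n^{-s/2}=o(1)$, so $|w_j(t)|=o(\delta)$ throughout, and \cref{eq:FastDecay} allows their squared-signal mass to be folded into $\Psi(n^{-1/2}\ln n)$. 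Combining the three contributions is the stated bound.

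\textbf{Main obstacle.}
The substantive difficulty is that the scalar ODEs are not genuinely decoupled: $R_j(t)$ couples every coordinate, and the multiplier is time-dependent, so no black-box kernel bound applies. The plan is a \emph{simultaneous bootstrap}: maintain a global invariant of the form $\sum_{j\leq N}(\theta_j^*-w_j(t))^2\leq$ target bound uniformly in $t\in[0,c\sqrt n]$, using the operator-norm control on $M^{(N)}$ to absorb $R_j$ into the drift via a Gr\"onwall-type comparison with the decoupled reference dynamics; signal coordinates must be shown to enter their exponential-relaxation regime before the perturbation accumulates, and null/tail coordinates must be shown not to escape their basin of attraction. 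The logarithmic factors in the stated bound are the price of the union bounds and learning-time estimates required to keep this invariant active uniformly in time and across all $j\leq N$.
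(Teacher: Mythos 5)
There is a genuine gap, and it sits at the heart of what the theorem is supposed to prove. You truncate at $N\asymp n^{(1+s)/(2\gamma)}$ and assert that \cref{eq:SignificantSpan} forces $J_{\mr{sig}}(\delta)\subset[1,N]$ for $\delta\asymp n^{-1/2}\sqrt{\ln n}$. It does not: \cref{eq:SignificantSpan} only gives the upper bound $\max J_{\mr{sig}}(\delta)\lesssim \delta^{-\kappa}\asymp n^{\kappa/2}$, and since $\kappa\geq 1$ and $\gamma>1$ we have $\kappa/2\geq 1/2>(1+s)/(2\gamma)$ for small $s$, so significant coordinates may (and in the motivating misalignment examples do) live at indices far beyond $N$. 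Your trajectory analysis then classifies these as ``tail,'' shows $w_j(t)=o(\delta)$ there, and tries to fold their squared mass into $\Psi(n^{-1/2}\ln n)$ — but by definition these coordinates have $\abs{\theta_j^*}\geq n^{-1/2}\sqrt{\ln n}$ (many with $\abs{\theta_j^*}\geq n^{-1/2}\ln n$), so they are excluded from $\Psi$ and their error $\sum(\theta_j^*)^2$ is simply not controlled by the right-hand side of the theorem. In effect your scheme only learns the coordinates that the \emph{fixed}-kernel method already learns, which is precisely the adaptivity the theorem is meant to go beyond. The fix is the one the paper uses: define the signal set by the magnitude of $\theta_j^*$ (as $S_1=\{j:\abs{\theta_j^*}\geq n^{-1/2}\sqrt{\ln n}\}$, with $\max S_1\lesssim n^{\kappa/2}$), and exploit the fact that for the product parameterization the escape time of a signal coordinate depends only \emph{logarithmically} on $\lambda_j$ (roughly $\abs{\theta_j^*}^{-1}\ln(\abs{\theta_j^*}/\lambda_j)$), so even coordinates with polynomially small $\lambda_j$ are learned by $t\asymp\sqrt n$.

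A secondary consequence of the same choice: your concentration step controls only the operator norm of the $N\times N$ principal block of $M$, which no longer covers the coupling between high-index signal coordinates and the rest of the spectrum (including the infinite tail, whose components grow during training). The paper instead uses row-wise bounds valid for all $k\geq1$ (\cref{prop:ConcentrationQuantities}), splits the coupling into a signal-block term bounded by $\abs{S}n^{-1/2}\sqrt{\ln(nk)}\cdot\norm{(\bm\theta^*-\bm\theta)_S}_\infty$ and a noise-block term controlled via $\norm{\Lambda_R^{-1/2}\bm\theta_R}_2$ before the overfitting time $T^{\mr{err}}$, and then runs the iterative shrinkage you correctly anticipate in your ``main obstacle'' paragraph. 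Your overall bootstrap architecture (conservation law, scalar reduction, two-phase dynamics, Gr\"onwall-type absorption of the coupling) matches the paper's; it is the index-based truncation that would have to be replaced before the rest can go through.
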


\subsection{Deeper Over-parameterization}

In addition to the parameterized feature map in \cref{eq:DiagonalFeatureMap},
we can further consider adding extra parameters to the feature map to form a multilayer model.
Let us now consider the deeper parameterization
\begin{align}
  \Phi_{\bm{a},\bm{b}}(x) = \xk{a_j b_j^D e_j(x)}_{j \geq 1},\quad
  k_{\bm{a},\bm{b}}(x,x') = \sum_{j \geq 1} a_j^2 b_j^{2D} e_j(x) e_j(x'),
\end{align}
where $D > 0$ is the number of extra layers and $\bm{b} = \xk{b_j}_{j \geq 1}$ is the extra parameter.
We consider similarly $f(x) = \ang{\Phi_{\bm{a},\bm{b}}(x), \bm{\beta}}_{\ell^2}$
and the gradient flow on the parameters $\bm{a}$, $\bm{b}$, and $\bm{\beta}$:
\begin{align}
  \label{eq:DiagonalMultilayerGD}
  \left\{
  \begin{aligned}
    \dot{\bm{a}} &= -\nabla_{\bm{a}} \caL_n(f_t), \quad a_j(0) = \lambda_j^{\hf}; \\
    \dot{\bm{b}} &= -\nabla_{\bm{b}} \caL_n(f_t), \quad b_j(0) = b_0; \\
    \dot{\bm{\beta}} &= -\nabla_{\bm{\beta}} \caL_n(f_t), \quad \bm{\beta}(0) = \bm{0}.
  \end{aligned}
  \right.
\end{align}
Here, $b_0 > 0$ is the common initialization for all $b_j$'s.
We denote by $\hat{f}^{\mr{EigGD},D}_t$ the estimator obtained by this gradient flow \cref{eq:DiagonalMultilayerGD} at time $t$.

We remark here that if one considers the over-parameterization
$\Phi(x)=  \xk{a_j b_{j,1}\cdots b_{j,D} e_j(x)}_{j \geq 1}$ with the same initialization
$b_{j,k} = b_0$, $k = 1,\dots,D$,
then $b_{j,k}$'s remain to be the same during the training process by symmetry,
so this is equivalent to our parameterization $\theta_j = a_j b_j^D \beta_j$ modulo a constant factor.

Then, we have the following generalization error bound for the multilayer model.
The proof of the following theorem is deferred to \cref{sec:proofmulti}.

\begin{theorem}
  \label{thm:EigenvalueDeepGD}
  Suppose that \cref{assu:EigenSystem} and \cref{assu:SignalSpan} hold.
  Then, for any $s > 0$, by choosing $t \asymp n^{\frac{D+1}{D+2}}$ and $b_0 \asymp n^{-\frac{1}{2(D+2)}}$,
  when $n$ is sufficiently large, with probability at least $1-C/n^2$, we have
  \begin{align}
    \norm{f^* - \hat{f}^{\mr{EigGD},D}_t}_{L^2}^2
    \lesssim \frac{1}{n}\zk{ \Phi\xkm{n^{-1/2}\sqrt {\ln n}} \ln n + n^{\frac{1+s}{(D+2)\gamma}}} + \Psi\xkm{n^{-1/2} {\ln n}}
  \end{align}
  Here, the constant $C$ and the hidden constants may depend on the constants in the assumptions and the choice of $s$.
\end{theorem}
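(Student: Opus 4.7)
The plan is to extend the coordinate-wise argument behind \cref{thm:EigenvalueGD} to the deeper parameterization, exploiting the extra factor $b_j^D$ to sharpen the activation dynamics and thereby reduce the effective number of spuriously excited indices. The first step is to exploit conservation laws for the gradient flow \cref{eq:DiagonalMultilayerGD}. Writing $\theta_j(t) = a_j(t) b_j(t)^D \beta_j(t)$, direct differentiation of the coordinate-wise flow gives $a_j \dot a_j = \beta_j \dot\beta_j$ and $b_j \dot b_j = D\beta_j \dot\beta_j$, which integrate to
\begin{align*}
  a_j^2(t) = \lambda_j + \beta_j^2(t), \qquad b_j^2(t) = b_0^2 + D\beta_j^2(t).
\end{align*}
All three parameters are thus functions of $\beta_j$ alone, and the composite dynamics collapses to a scalar ODE
\begin{align*}
  \dot\theta_j = g_D(\beta_j;\lambda_j,b_0)\, r_j(t), \qquad r_j(t) \coloneqq \frac{1}{n}\sum_{i=1}^n \bigl(y_i - f_t(x_i)\bigr) e_j(x_i),
\end{align*}
where $g_D$ is a positive polynomial-type multiplier that behaves like $\lambda_j b_0^{2D}$ when $|\beta_j| \ll b_0$ and like $|\beta_j|^{2D+2}$ once $|\beta_j|$ saturates.

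Next I would decouple the coordinates by splitting $r_j(t) = (\theta_j^* - \theta_j(t)) + \xi_j + R_j(t)$, with $\xi_j = \frac{1}{n}\sum_i \ep_i e_j(x_i)$ collecting the noise and $R_j(t)$ collecting the off-diagonal empirical-minus-population error. Under \cref{assu:EigenSystem}, Bernstein's inequality combined with a union bound over $j$ up to the effective truncation scale yields $|\xi_j| \lesssim n^{-1/2}\sqrt{\ln n}$ uniformly, and a self-consistent (bootstrap) control on $\|f_t - f^*\|_{L^2}$ throughout training makes $R_j(t)$ negligible. This reduces the problem, for each $j$, to the scalar ODE $\dot\theta_j \approx g_D(\beta_j;\lambda_j,b_0)(\theta_j^* - \theta_j + \xi_j)$.

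In the coordinate-wise analysis, two regimes emerge: while $|\beta_j| \lesssim b_0$, the effective gain is $\lambda_j b_0^{2D}$, so $\theta_j$ drifts toward its target on the timescale $(\lambda_j b_0^{2D})^{-1}$; once $|\beta_j|$ crosses $b_0$, the polynomial acceleration in $g_D$ locks $\theta_j$ onto $\theta_j^* + O(\xi_j)$ within logarithmically many additional time units. Choosing $b_0 \asymp n^{-1/(2(D+2))}$ and $t \asymp n^{(D+1)/(D+2)}$ gives $b_0^{2D} t \asymp n^{1/(D+2)}$ and places the activation frontier at $j \asymp n^{1/((D+2)\gamma)}$: coordinates below this index reach the saturated regime by time $t$, while coordinates above it remain dormant and contribute at most a $b_0^{2D+2}$ floor per index. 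Splitting
\begin{align*}
  \|\hat f_t^{\mr{EigGD},D} - f^*\|_{L^2}^2 = \sum_{j \in J_{\mr{sig}}(\delta_n)} (\theta_j(t) - \theta_j^*)^2 + \sum_{j \notin J_{\mr{sig}}(\delta_n)} (\theta_j(t) - \theta_j^*)^2
\end{align*}
with $\delta_n = n^{-1/2}\sqrt{\ln n}$ then produces the three claimed terms: $\Phi(\delta_n)(\ln n)/n$ from the noise floor on activated signal indices, $n^{(1+s)/((D+2)\gamma)}/n$ from the dormant-but-spurious indices just past the frontier (the arbitrary $s>0$ absorbs the logarithmic slack in the concentration bound), and $\Psi(\delta_n)$ from the small-signal bias.

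The main obstacle I anticipate is closing the self-consistent control of $\|f_t - f^*\|_{L^2}$ jointly with the highly nonlinear coordinate dynamics: because $g_D$ varies by many orders of magnitude between the dormant and saturated regimes, the residual $R_j(t)$ must be bounded uniformly over a wide range of coordinate-wise timescales, and the spurious-index contribution just past the activation frontier must be summed carefully to avoid polynomial losses. This tension is precisely what forces the matched scaling $b_0^{2D} t \asymp n^{1/(D+2)}$, and it also explains why adding depth sharpens only the variance-like term $n^{(1+s)/((D+2)\gamma)}/n$ while leaving the $\Phi(\delta_n)(\ln n)/n$ and $\Psi(\delta_n)$ bias-type terms untouched.
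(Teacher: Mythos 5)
Your proposal follows essentially the same route as the paper's proof: the conservation laws $a_j^2(t) = \lambda_j + \beta_j^2(t)$ and $b_j^2(t) = b_0^2 + D\beta_j^2(t)$, the reduction to a perturbed scalar ODE per coordinate, the diagonal-plus-noise-plus-off-diagonal decomposition controlled by concentration and a union bound, the self-consistent (iterative shrinkage) control of the signal sup-norm, and the activation frontier at $j \asymp n^{1/((D+2)\gamma)}$ forced by $b_0^{2D} t \asymp n^{1/(D+2)}$. The one imprecision is the claimed dormant per-index floor of $b_0^{2D+2}$: the correct floor is $\lambda_j^2 b_0^{2D}$ up to subpolynomial factors, and it is precisely the $\lambda_j^2 \asymp j^{-2\gamma}$ decay that makes the tail sum over dormant indices finite and produces the $n^{(1+s)/((D+2)\gamma)}/n$ term.
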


\subsection{Discussions} \label{subsec:discussions}

Let us discuss the implications of \cref{thm:EigenvalueGD} and \cref{thm:EigenvalueDeepGD} in the following aspects.

\subsubsection{Benefits of Over-parameterization}
\cref{thm:EigenvalueGD} and \cref{thm:EigenvalueDeepGD} demonstrate the advantage of over-parameterization in the sequence model.
Compared with the vanilla fixed-eigenvalues gradient descent method,
the over-parameterized gradient descent method can significantly improve the generalization performance by adapting the eigenvalues to the truth signal.
For a more concrete example,
if we consider the setting of \cref{eq:GappedDecay}, we have
$\Phi(\delta) \asymp \delta^{-\frac{2}{p+1}}, \Psi(\delta) \asymp \delta^{\frac{2p}{p+1}}$ (for a proof, see Lemma F.2 in \citet{li2024_ImprovingAdaptivity})
and the following corollary.

\begin{corollary}
  \label{cor:OpVsGD}
  Consider eigenvalue adaptive kernel regression in \cref{eq:DiagonalKernelGD} or \cref{eq:DiagonalMultilayerGD}.
  Suppose \cref{assu:EigenSystem} and \cref{eq:GappedDecay} hold and $\gamma > \frac{p+1}{D+2}$.
  Then, by choosing $b_0 \asymp n^{-\frac{1}{2(D+2)}}$ (if $D\neq 0$) and $t \asymp n^{\frac{D+1}{D+2}}$,
  when $n$ is sufficiently large, omitting the logarithmic factor, we have
  \begin{align}
    \norm{f^* - \hat{f}^{\mr{EigGD},D}_t}_{L^2}^2
    =\tilde{O}_{\bbP}(n^{-\frac{p}{p+1}} ).
  \end{align}
  In comparison, the vanilla kernel gradient flow method yields the convergence rate $\Theta(n^{-\frac{p}{p+q}})$.
\end{corollary}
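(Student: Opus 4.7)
The proof is essentially a substitution exercise into the bounds of \cref{thm:EigenvalueGD} (when $D=0$) and \cref{thm:EigenvalueDeepGD} (when $D \geq 1$), both of which yield a bound of the unified form
\begin{align*}
  \norm{f^* - \hat{f}^{\mr{EigGD},D}_t}_{L^2}^2 \lesssim \frac{1}{n}\left[\Phi\bigl(n^{-1/2}\sqrt{\ln n}\bigr)\ln n + n^{\frac{1+s}{(D+2)\gamma}}\right] + \Psi\bigl(n^{-1/2}\ln n\bigr).
\end{align*}
The plan is to verify that \cref{assu:SignalSpan} is satisfied under \cref{eq:GappedDecay}, compute the asymptotics of $\Phi$ and $\Psi$ in this setting, and then check each of the three summands against the target rate $n^{-p/(p+1)}$.

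First I would verify \cref{assu:SignalSpan}: boundedness $|\theta_j^*| \leq B_\infty$ is immediate from $|\theta_{\ell(j)}^*| \asymp j^{-(p+1)/2}$; since $\ell(j) \in J_{\mr{sig}}(\delta)$ corresponds to $j \lesssim \delta^{-2/(p+1)}$ and $\ell(j) \asymp j^q$, we obtain $\max J_{\mr{sig}}(\delta) \asymp \delta^{-2q/(p+1)}$, so taking $\kappa = \max(1, 2q/(p+1))$ works. A direct count then gives $\Phi(\delta) \asymp \delta^{-2/(p+1)}$ and $\Psi(\delta) \asymp \delta^{2p/(p+1)}$ (this is Lemma F.2 of \citet{li2024_ImprovingAdaptivity}), from which \cref{eq:FastDecay} follows by choosing $s_0 > 0$ sufficiently small. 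Substituting $\delta \asymp n^{-1/2}$ into these asymptotics shows that the first summand $\tfrac{1}{n}\Phi(n^{-1/2}\sqrt{\ln n})\ln n$ and the third summand $\Psi(n^{-1/2}\ln n)$ are both $\tilde{O}(n^{-p/(p+1)})$.

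It remains to bound the middle term $n^{-1 + \frac{1+s}{(D+2)\gamma}}$ by $n^{-p/(p+1)}$, which reduces to the requirement $1 + s \leq \frac{(D+2)\gamma}{p+1}$. The hypothesis $\gamma > \frac{p+1}{D+2}$ makes the right-hand side strictly greater than one, so picking $s > 0$ sufficiently small makes the inequality hold, and the theorems permit any such $s$. For the comparison with the vanilla fixed-kernel method, I would invoke the source-condition computation already sketched in \cref{example:misalignment}: with $\lambda_j \asymp j^{-\gamma}$ and $|\theta_{\ell(j)}^*| \asymp j^{-(p+1)/2}$, the series $\sum_j (\theta_j^*)^2 \lambda_j^{-s}$ converges precisely for $s < p/(q\gamma)$, so the sharp source exponent is $p/(q\gamma)$ and the classical early-stopping rate $n^{-s\gamma/(s\gamma+1)}$~\citep{yao2007_EarlyStopping} tends to $n^{-p/(p+q)}$; a matching minimax lower bound completes the $\Theta$ claim. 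The only mild obstacle is the bookkeeping of logarithmic factors and the corner case $q < (p+1)/2$ where the naive $\kappa$ would fall below one, both handled by enlarging $\kappa$ and absorbing logs into hidden constants.
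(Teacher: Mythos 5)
Your proposal matches the paper's intended argument: the paper proves this corollary implicitly by computing $\Phi(\delta) \asymp \delta^{-2/(p+1)}$ and $\Psi(\delta) \asymp \delta^{2p/(p+1)}$ (citing Lemma~F.2 of the conference version) and substituting into the bounds of \cref{thm:EigenvalueGD}/\cref{thm:EigenvalueDeepGD}, exactly as you do, and your observation that $\gamma > (p+1)/(D+2)$ is precisely what makes the middle term subdominant for small enough $s$ is the right reason the hypothesis is there. One small caveat: your claim that \cref{eq:FastDecay} ``follows by choosing $s_0 > 0$ sufficiently small'' tacitly requires $p > 1$; if $p \leq 1$ then $\Phi(\delta) \asymp \delta^{-2/(p+1)}$ with $2/(p+1) \geq 1$ and no positive $s_0$ works (and $\norm{\bm{\theta}^*}_1 < \infty$, needed in \cref{prop:ConcentrationQuantities}, also fails)---a restriction the paper leaves implicit but which you should state explicitly rather than assert holds for free.
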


\cref{cor:OpVsGD} shows that the over-parameterized gradient descent method can achieve the optimal rate $n^{-\frac{p}{p+1}}$,
while the vanilla gradient descent method only achieves the rate $n^{-\frac{p}{p+q}}$.
The improvement is significant when $q$ is large, which corresponds to the case that the misalignment between the ordering of the truth signal and the ordering of the eigenvalues is severe.
Moreover, if we return to the low-dimensional regression function in \cref{example:low-dim-structure} with the isotropic kernel $k_1$,
we can see that while the vanilla gradient descent method suffers from the curse of dimensionality with the rate $\frac{2t}{2t+d}$,
the over-parameterization leads to the dimension-free rate $\frac{2t}{2t+d_0}$.
Therefore, the over-parameterization significantly improves the generalization performance.

\subsubsection{Learning the Eigenvalues}
To further investigate how the eigenvalues are adapted by over-parameterized gradient descent, we present the following proposition.

\begin{proposition}
  \label{prop:EigLearn}
  Given the same conditions as in \cref{thm:EigenvalueGD} or \cref{thm:EigenvalueDeepGD}, for the chosen stopping time $t$,
  the term learning the eigenvalues $a_k(t) b_l^D(t)$ (taking $D=0$ and $b_j^D = 1$ for \cref{thm:EigenvalueGD}) satisfies:
  \begin{itemize}
    \item {Signal component:} For $\theta_k^*$ such that $|\theta_k^*| \geq C_1 n^{-1/2}  \ln n$,
    we have
    \begin{align}
      a_k(t) b^D_k(t) \geq c_1 \abs{\theta_k^*}^{\frac{D+1}{D+2}}.
    \end{align}
    \item {Noise component:} For components where $\abs{\theta_k^*} \leq n^{-1/2}$ and $\lambda_k \leq n^{-\frac{1+s}{D+2}}$, we have
    \begin{align}
      a_k(t) b^D_k(t) \leq C_2 a_k(0) b_k^D(0) \exp(C_3(\sqrt{\ln n} + \sqrt{\ln k})).
    \end{align}
  \end{itemize}
  Here, $c_1,C_1,C_2,C_3$ are all constants not depending on either $n$ or $k$.
\end{proposition}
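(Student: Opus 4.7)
The plan is to reduce the claim to a coordinate-wise analysis of the flow \cref{eq:DiagonalMultilayerGD} via two conservation laws. Writing $g_k(t) = n^{-1}\sum_{i=1}^n (y_i - f_t(x_i)) e_k(x_i)$, the coordinate-wise ODEs read $\dot a_k = b_k^D \beta_k g_k$, $\dot b_k = D a_k b_k^{D-1} \beta_k g_k$, and $\dot\beta_k = a_k b_k^D g_k$, from which $\tfrac{d}{dt}(a_k^2 - \beta_k^2) = 0$ and $\tfrac{d}{dt}(b_k^2 - D a_k^2) = 0$. Combined with the prescribed initializations this gives the invariants
\begin{align*}
\beta_k^2(t) = a_k^2(t) - \lambda_k, \qquad b_k^2(t) = D\bigl(a_k^2(t) - \lambda_k\bigr) + b_0^2,
\end{align*}
so $|\beta_k|$ and $b_k$ are monotone functions of $r(t) \coloneqq a_k^2(t) - \lambda_k \geq 0$, and writing $L_k \coloneqq a_k b_k^D$ one obtains $\theta_k^2(t) = L_k^2(t)\, r(t)$ and $L_k^2(t) = (\lambda_k + r)(b_0^2 + D r)^D$.

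For the signal case, the bias--variance analysis already developed in the proof of \cref{thm:EigenvalueDeepGD} shows $|\theta_k(t)| \geq \tfrac{1}{2}|\theta_k^*|$ at the stopping time $t \asymp n^{(D+1)/(D+2)}$ for every index with $|\theta_k^*| \geq C_1 n^{-1/2}\ln n$. Combining this with the choice $b_0 \asymp n^{-1/(2(D+2))}$ and \cref{assu:EigenSystem}, one checks directly that $r$ dominates both $\lambda_k$ and $b_0^2/D$. In this regime the displayed identities simplify to $\theta_k^2 \asymp D^D r^{D+2}$ and $L_k^2 \asymp D^D r^{D+1}$, and eliminating $r$ yields $L_k \asymp |\theta_k|^{(D+1)/(D+2)} \gtrsim |\theta_k^*|^{(D+1)/(D+2)}$, as claimed.

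For the noise case, I would establish a Grönwall-type upper bound on $L_k$ by integrating its logarithmic derivative. A direct computation gives
\begin{align*}
\frac{\dot L_k}{L_k} = \left(\frac{b_k^D}{a_k} + D^2 a_k b_k^{D-2}\right) \beta_k g_k,
\end{align*}
which, via the invariants, is an explicit function of $r$ and $g_k$. I would decompose $g_k(s) = (\theta_k^* - \theta_k(s)) + \xi_k(s)$, where $\xi_k$ is the centered empirical part; a uniform sub-Gaussian concentration argument together with a union bound over $k$ controls $|\xi_k(s)| \lesssim \sqrt{(\ln n + \ln k)/n}$, and the hypotheses $|\theta_k^*| \leq n^{-1/2}$ and $\lambda_k \leq n^{-(1+s)/(D+2)}$, combined with the per-coordinate ceiling on $|\theta_k(s)|$ that emerges from the proof of \cref{thm:EigenvalueDeepGD}, bound the bias part at the same scale. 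Integrating over $[0,t]$ using $|\beta_k| = \sqrt{r}$ then produces the claimed exponential amplification of the initial value $\lambda_k^{\hf} b_0^D$.

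The main obstacle is the noise case, because the right-hand side above involves $\beta_k = \pm\sqrt{r}$, and through $r$ it depends on $L_k$ itself via $L_k^2 = (\lambda_k + r)(b_0^2 + Dr)^D$. I plan to resolve this self-referential dependence with a bootstrap on a continuity window: first derive a crude a-priori bound on $r(t)$ by combining the identity $\theta_k^2 = L_k^2 r$ with the theorem's ceiling on $|\theta_k(t)|$, concluding that $r$ cannot leave a small neighbourhood of $0$ without contradicting that ceiling; then plug this back into the Grönwall inequality to extract the sharp $\exp\!\bigl(C_3(\sqrt{\ln n} + \sqrt{\ln k})\bigr)$ factor.
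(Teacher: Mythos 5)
Your overall strategy matches the paper's: both parts reduce to the one-dimensional dynamics via the conservation laws $a_k^2-\beta_k^2\equiv\lambda_k$ and $b_k^2-D\beta_k^2\equiv b_0^2$, and the signal part imports $\abs{\theta_k(t)}\geq\tfrac12\abs{\theta_k^*}$ from the theorems' proofs. However, your signal argument contains an incorrect intermediate claim: it is \emph{not} true in general that $r=\beta_k^2$ dominates both $\lambda_k$ and $b_0^2/D$ for every signal component. For a low-frequency component with $\lambda_k\asymp 1$ and $\abs{\theta_k^*}$ only moderately above the threshold, the identity $\theta_k^2=(\lambda_k+r)(b_0^2+Dr)^D r$ is satisfied with $r\ll\lambda_k$, so the two-sided asymptotics $\theta_k^2\asymp D^D r^{D+2}$ and $L_k^2\asymp D^D r^{D+1}$ fail. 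Fortunately you only need one direction, and that direction holds unconditionally from your own identities: $L_k^2=(\lambda_k+r)(b_0^2+Dr)^D\geq D^D r^{D+1}$ gives $r\leq (D^{-D}L_k^2)^{1/(D+1)}$, hence $\theta_k^2=L_k^2 r\leq D^{-D/(D+1)}L_k^{2(D+2)/(D+1)}$, i.e.\ $L_k\gtrsim\abs{\theta_k}^{(D+1)/(D+2)}$. This is exactly the inequality $\abs{\theta}\leq(ab^D)^{(D+2)/(D+1)}$ the paper derives; you should replace the domination step by this one-sided bound.

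For the noise part your route is genuinely different from the paper's. You propose a Gr\"onwall bound on $\log L_k$ directly, which forces you into the self-referential bootstrap you describe (the rate depends on $\beta_k=\pm\sqrt r$, which depends on $L_k$). The paper avoids this by first bounding $\abs{\beta_k(t)}\leq\lambda_k^{1/2}\exp\bigl(Cb_0^D\int_0^t(\abs{\theta_k^*}+\abs{h_k})\bigr)$ via \cref{lem:MultiLayer_ErrorControl} --- whose internal two-stage escape-time argument is precisely the bootstrap you would need, and which has already been verified for all noise components in the proof of \cref{thm:EigenvalueDeepGD} --- and then converting to $a_kb_k^D\leq 2^{(D+1)/2}b_0^D\max(\lambda_k^{1/2},\abs{\beta_k})$ using the conservation-law estimates, with the exponent controlled by \cref{eq:Proof_Multi_ErrorBound_1}. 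Your plan is workable (one can check that the prefactor $b_k^D/a_k+D^2a_kb_k^{D-2}$ times $\abs{\beta_k}$ stays of order $b_0^D$ in the small-$\beta$ window, using $\lambda_k^{1/2}\lesssim b_0$ under the hypothesis $\lambda_k\leq n^{-(1+s)/(D+2)}$), but as written it is a plan rather than a proof; reusing the already-established $\beta$-bound is shorter and avoids re-proving the bootstrap. Also note that your ``centered empirical part'' $\xi_k$ is not a sum of i.i.d.\ terms: it contains the time-dependent off-diagonal terms $p_k,q_k$, so its control must come from the decomposition \cref{eq:ToDiagonalDecomposition} and the iterates of the shrinkage argument, not from a single concentration-plus-union bound.
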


From this proposition, we can see that for the signal components, the eigenvalues are learned to be at least a constant times a certain power of the truth signal magnitude.
Thus, over-parameterized gradient descent adjusts the eigenvalues to match the truth signal.
In the case of noise components, although the eigenvalues are still increasing due to the training process,
the eigenvalues do not exceed the initial values by some constant factor, provided that $\lambda_j$ is relatively small.
This finding suggests that over-parameterized gradient descent effectively adapts eigenvalues to the truth signal while mitigating overfitting to noise,
showing a feature learning behavior.
We remark that when $\lambda_j$ is relatively large, the method still tends to overfit the noise components,
contributing an extra $n^{\frac{1}{D+2} \frac{1}{\gamma}}$ term in the generalization error,
but this term becomes negligible when $\gamma$ is large.

\subsubsection{Adaptive Choice of the Stopping Time}
A notable advantage of the adaptive kernel approach is that it does not require the knowledge of the truth parameters to choose the stopping time.
Consider the scenario described by \cref{eq:GappedDecay},
vanilla gradient descent requires the selection of a stopping time $t \asymp n^{(q\gamma)/(p+q)}$ to achieve the optimal convergence rate.
However, this choice of stopping time critically depends on the unknown parameters $p$ and $q$ of the truth parameter.
In contrast, the over-parameterized gradient descent only need to choose the stopping time as $t \asymp n^{\frac{D+1}{D+2}}$,
which does not rely on the unknown truth parameters.

\subsubsection{Effect of the Depth}
The results in \cref{thm:EigenvalueDeepGD} also show that deeper over-parameterization can further improve the generalization performance.
In the two-layer over-parameterization,
the extra term $n^{\frac{1+s}{2\gamma}}$ in \cref{thm:EigenvalueGD} emerges due to the limitation of the adapting large eigenvalues.
With the introduction of depth, namely adding extra $D$ layers to the model with proper initialization,
this term can be improved to $n^{\frac{1+s}{(D+2)\gamma}}$ in \cref{thm:EigenvalueDeepGD}.
This improvement suggests that the depth can refine the model's sensitivity to eigenvalue adaptation,
enabling a more nuanced adjustment to the underlying signal structure.
This finding underscores the importance of model depth in enhancing the learning process,
providing also theoretical evidence for the empirical success of deep learning models.

\subsubsection{Comparison with Previous Works}

Let us compare our results with the existing literature~\citep{zhao2022_HighdimensionalLinear,li2021_ImplicitSparse,vaskevicius2019_ImplicitRegularization}
on the generalization performance of diagonal over-parameterized gradient descent in the following aspects:
\begin{itemize}
 \item \textbf{Problem settings:} While the existing literature~\citep{zhao2022_HighdimensionalLinear,li2021_ImplicitSparse,vaskevicius2019_ImplicitRegularization}
 investigate the realms of high-dimensional linear regression, focusing on implicit regularization and sparsity,
 the present study dives into non-parametric kernel regression,
 emphasizing the adaptivity of trainable kernel to the underlying signal's structure.

 \item \textbf{Over-parameterization setup:}
 The existing work \citet{zhao2022_HighdimensionalLinear} considers the over-parameterization setup by the two-layer Hadamard product $\theta = a \odot b$
 where the initialization is the same for each component that $a(0) = \alpha \bm{1}$ and $b(0) = \bm{0}$.
 In comparison, our work considers initializing the eigenvalues $a_j(0) = \lambda_j^{1/2}$ differently for each component.
 Moreover, we extend the over-parameterization to deeper models by adding extra $D$ layers.
 Although the subsequent work \citet{li2021_ImplicitSparse} of \citet{vaskevicius2019_ImplicitRegularization} considered deeper over-parameterization,
 their over-parameterization is in the form of $\theta = u^{\odot D} - v^{\odot D}$ with $u(0) = v(0) = \alpha \bm{1}$.
 Though being easy to analysis because of the homogeneous initialization,
 this setup could not bring insights into the learning of the eigenvalues, which is a key insight in our results.
 Furthermore, the analysis for \cref{thm:EigenvalueDeepGD} involves the interplay between the differently initialized $a_j$ and $b_j$,
 so our analysis is more involved than the existing works.
 We also remark that although we only consider the gradient flow in the analysis,
 the results can be extended to the gradient descent with proper learning rates.

  \item \textbf{Interpretation of the over-parameterization:}
 The previous works view the over-parameterization mainly as a mechanism for implicit regularization,
 while our work provides a novel perspective that over-parameterization adapts to the structure of the truth signal by learning the eigenvalues.
 As shown in \cref{prop:EigLearn}, the parameterization of the kernel demonstrates a feature learning behavior,
 where the eigenvalues are learned to match the truth signal.

 \item \textbf{The restricted isometry property (RIP) condition:}
 The existing literature relies on the RIP condition to analyze the gradient dynamics.
 In comparison, our work does not require the RIP condition by using concentration of the eigenfunctions and a delicate analysis of the gradient flow dynamics.
 Moreover, we remark that as discussed in \citet{vaskevicius2019_ImplicitRegularization},
 the RIP condition in \citet{zhao2022_HighdimensionalLinear,li2021_ImplicitSparse} is in fact too strong to hold for i.i.d.\ random samples.

\end{itemize}


\section{Proofs}\label{sec:proofs}

In this section, we provide the proof of \cref{thm:EigenvalueGD},
while the proof of \cref{thm:EigenvalueDeepGD}, being similar but more complicated, is deferred to \cref{sec:proofmulti}

The main idea of the proof is to approximate the dynamics \cref{eq:TwoLayer_GF_Matrix}, the explicit form of \cref{eq:DiagonalKernelGD}, by a diagonalized version,
so we can analyze the dynamics of each component separately.
Then, we can use the idea in the sequence model~\citep{li2024_ImprovingAdaptivity} (the conference paper) to deal with signal components and noise components separately,
showing that the signal components converge to the truth parameter at the optimal rate and the noise components are well controlled.
The main challenge here is to deal with an extra perturbation term:
we analyze the dynamics of the diagonalized equation with perturbation and use an iterative argument with \cref{prop:TwoLayer_Shrinkage}
to gradually shrink the perturbation term.

The rest of this section is organized as follows:
In \cref{subsec:Proof_ExplicitForm}, we derive the explicit form \cref{eq:TwoLayer_GF_Matrix}  of the gradient flow.
In \cref{subsec:Proof_ComponentPartition} and \cref{subsec:Proof_Concentration},
we establish the approximation between the original dynamics and the diagonalized one.
In \cref{subsec:Proof_TwoLayer_OneDim}, we analyze the dynamics of the one-dimensional equation with perturbation.
Finally, we prove \cref{thm:EigenvalueGD} in \cref{subsec:Proof_MainThm}.

In the proof, we use $s$ to represent a small positive constant that can be chosen arbitrarily small.
We use $C,c$ to represent positive constants that can depend on the constants in the assumptions and also the choice of $s$.
The values of $C,c,s$ may change from line to line.
We denote $\ln^+(x) = \max(\ln x, 0)$.
Let $S \subseteq \bbN^+$ be an index set.
For a vector $\bm{v}$, we denote by $\bm{v}_S$ the sub-vector of $v$ with indices in $S$.
Let $A$ be a matrix and $R \subseteq \bbN^+$ be another index set.
We use similar notation $A_{SR}$, $A_{\cdot R}$ and $A_{kR}$ for sub-matrices.
Moreover,
we denote by $D_{\bm{v}}$ the diagonal matrix with diagonal entries $v$ and $\bm{a} \odot \bm{b}$ the element-wise product of two vectors $\bm{a}, \bm{b}$.

\subsection{Explicit Form of the Gradient Flow}\label{subsec:Proof_ExplicitForm}
Let $(e_j(x))_{j \geq 1}$ be the fixed orthonormal basis.
We denote (as a column vector) $E(x) = \xk{e_j(x)}_{j \geq 1} \in \ell^2.$
Also, for the adaptive kernel regression in \cref{eq:DiagonalKernelGD}, we denote
$\bm{\theta}(t) = \xk{\theta_j}_{j \geq 1} = \bm{a}(t) \odot \bm{\beta}(t).$
Then,
\begin{align*}
  f_t(x) = \ang{\Phi_{\bm{a}(t)}(x), \bm{\beta}(t)} = \ang{E(x), \bm{a}(t) \odot \bm{\beta}(t)} = \ang{E(x), \bm{\theta}(t)} = E(x)^{\T} \bm{\theta}(t),
\end{align*}
In comparison, the truth function is given similarly by $f^*(x) = \sum_{j \geq 1} \theta_j^* e_j(x) = E(x)^{\T} \bm{\theta}^*.$

Furthermore, we denote
\begin{align}
  \hat{\Sigma}  &= \frac{1}{n}\sum_{i=1}^n E(x_i) E(x_i)^{\T}, \quad
  {\bm{r}} = \frac{1}{n}\sum_{i=1}^n \ep_i E(x_i).
\end{align}
A direct computation shows that
\begin{align*}
  \nabla_{\beta}\caL_n(f_t) &= \frac{1}{n}\sum_{i=1}^n D_{\bm{\beta}(t)} E(x_i) \zk{E(x_i)^{\T} \xk{\bm{\theta}(t) - \bm{\theta}^*} - \ep_i}\\
  &= D_{\bm{\beta}(t)} \zk{\frac{1}{n}\sum_{i=1}^n E(x_i) E(x_i)^{\T} \xk{\bm{\theta}(t) - \bm{\theta}^*} - \frac{1}{n}\sum_{i=1}^n \ep_i E(x_i)} \\
  &= D_{\bm{\beta}(t)} \zk{{\hat{\Sigma}} \xk{\bm{\theta}(t) - \bm{\theta}^*} - {\bm{r}}}.
\end{align*}
Similarly,
\begin{align*}
  \nabla_{\bm{a}}\caL_n(f_t) = D_{\bm{a}(t)} \zk{{\hat{\Sigma}} \xk{\bm{\theta}(t) - \bm{\theta}^*} - {\bm{r}}}.
\end{align*}
Therefore, we derive the explicit form of the gradient flow as
\begin{align}
  \label{eq:TwoLayer_GF_Matrix}
  \left\{
  \begin{aligned}
    \dot{\bm{\beta}}(t) &= - \nabla_{\bm{\beta}} \caL_n = \bm{a}\odot \zk{\hat{\Sigma}(\bm{\theta}^* - \bm{\theta}) + {\bm{r}}}, \quad
    \bm{\beta}(0) = \bm{0},\\
    \dot{\bm{a}}(t) &= - \nabla_{\bm{a}} \caL_n = \bm{\beta} \odot \zk{\hat{\Sigma}(\bm{\theta}^* - \bm{\theta}) + {\bm{r}}},
    \quad a_j(0) = \lambda_j^{1/2}.
  \end{aligned}
  \right.
\end{align}

\subsection{Component Partition}\label{subsec:Proof_ComponentPartition}
Let us partition the components of $\bm{\theta}$ into two sets: the signal components $S \subset \bbN^+$ and the noise components $R = S^{\complement}$.
We choose $S$ by
\begin{align}
\label{eq:Proof_ComponentPartition}
  S = S_1 \cup S_2 = \dk{j \geq 1 : \abs{\theta^*_j} \geq n^{-1/2}\sqrt {\ln n}} \cup
  \dk{j\geq 1: \lambda_j \geq n^{-1/2}}.
\end{align}
Then, by \cref{assu:SignalSpan}, we have
\begin{align*}
  \abs{S_1}  \leq \Phi(n^{-1/2}) \leq C n^{1/2-s} \qquad
  \max S_1  \leq C n^{\kappa/2}.
\end{align*}
Also, by \cref{assu:EigenSystem}, we have $\abs{S_2} \leq C n^{1/(2\gamma)}.$
Noticing that $\gamma > 1$, we get
\begin{align}
  \label{eq:Proof_Component_S}
  \abs{S} \leq C n^{(1-s_0)/2} \qquad
  \max S  \leq C n^{\kappa/2}.
\end{align}
Moreover, \cref{assu:EigenSystem} also yields
$S_2^{\complement} \geq c n^{1/(2\gamma)}$, so
\begin{align}
  \label{eq:Proof_Component_L}
  L \coloneqq \min R \geq c n^{1/(2\gamma)}.
\end{align}

\noindent\textit{Approximation decomposition.}

We can approximate $\hat{\Sigma}(\bm{\theta}^* - \bm{\theta}) + {\bm{r}}$ by the diagonalized one $\bm{\theta}^* - \bm{\theta}$ by the following decomposition:
\begin{align}
  \label{eq:ToDiagonalDecomposition}
  \hat{\Sigma}(\bm{\theta}^* - \bm{\theta}) + \bm{r} = (\bm{\theta}^* - \bm{\theta}) + \bm{p} + \bm{q} + \bm{r}
   = (\bm{\theta}^* - \bm{\theta}) + \bm{h}
\end{align}
where $\bm{h} = \bm{p} + \bm{q} + \bm{r}$,
\begin{align}
  \bm{p} = (\hat{\Sigma}_{\cdot S} - I_{\cdot S})(\bm\theta^* - \bm\theta)_S,\quad
  \bm{q} = (\hat{\Sigma}_{\cdot R} - I_{\cdot R}) (\theta^* - \theta)_R,
\end{align}
and $I$ is the infinite-dimensional identity matrix.
In detail, we obtain \cref{eq:ToDiagonalDecomposition} by taking the difference of the following:
\begin{align*}
  \hat{\Sigma}(\bm{\theta}^* - \bm{\theta})
  &= \hat{\Sigma}_{\cdot S}(\bm{\theta}^* - \bm{\theta})_S + \hat{\Sigma}_{\cdot R}(\bm{\theta}^* - \bm{\theta})_R, \\
  \bm{\theta}^* - \bm{\theta}
  &= I_{\cdot S}(\bm{\theta}^* - \bm{\theta})_S + I_{\cdot R}(\bm{\theta}^* - \bm{\theta})_R.
\end{align*}

\subsection{Concentrations}\label{subsec:Proof_Concentration}

The following lemma give preliminary bounds for the error terms in \cref{eq:ToDiagonalDecomposition}.

\begin{lemma}
  \label{prop:ConcentrationQuantities}
  Under assumption \cref{assu:EigenSystem}, there are constants $C_1,C_2 > 0$ such that with probability at least $1-C_1/n^2$,
  for all $k \geq 1$, we have
  \begin{align}
    \label{eq:Concen_rk}
    & \abs{r_k} \leq C_2\sqrt {\frac{\ln (nk)}{n}} \\
    \label{eq:Concen_Sigma_kR_Theta}
    & \abs{(\hat{\Sigma}_{kR} - I_{kR}) \bm{\theta}^*_R} \leq C_2\sqrt {\frac{\norm{\bm{\theta}^*_R}_1}{n} \ln (nk)}, \\
    \label{eq:Concen_Sigma_kS}
    & \norm{\hat{\Sigma}_{kS} - I_{kS}}_2 \leq C_2 \sqrt {\frac{\abs{S}}{n} \ln (nk)} \\
    \label{eq:Concen_Sigma_kR_Lambda}
    & \norm{(\hat{\Sigma}_{kR} - I_{kR})\Lambda_R^{\hf}}_2 \leq C_2 \sqrt {\frac{\Tr \Lambda_R}{n} \ln (nk)}.
  \end{align}
\end{lemma}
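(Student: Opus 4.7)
Each of the four bounds is a concentration inequality for an iid sum (or a weighted/vector-valued version thereof). The overall plan is to apply the appropriate concentration tool for each bound --- Hoeffding/sub-Gaussian for \cref{eq:Concen_rk}, scalar Bernstein for \cref{eq:Concen_Sigma_kR_Theta}, and a Hilbert-space Bernstein inequality for \cref{eq:Concen_Sigma_kS} and \cref{eq:Concen_Sigma_kR_Lambda} --- and then take a single union bound over $k \geq 1$ with failure probabilities $\delta_k \asymp 1/(n^3 k^2)$, so that the total failure probability is $\sum_k \delta_k \lesssim 1/n^2$. Throughout, the boundedness of the eigenfunctions, $\norm{e_j}_\infty \leq C_{\mr{eigf}}$, is what drives every step.

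For \cref{eq:Concen_rk}, conditional on $\{x_i\}$ the quantity $r_k = n^{-1}\sum_i \ep_i e_k(x_i)$ is a sum of independent mean-zero sub-Gaussian variables of parameter $\lesssim C_{\mr{eigf}}\sigma$, hence itself sub-Gaussian with variance proxy $\lesssim 1/n$, and the standard tail bound with $\delta = \delta_k$ gives the claim. For \cref{eq:Concen_Sigma_kR_Theta}, I apply scalar Bernstein to the empirical mean
\[
(\hat{\Sigma}_{kR} - I_{kR})\bm{\theta}^*_R = \frac{1}{n}\sum_i \xk{e_k(x_i) f_R(x_i) - \E\xk{e_k f_R}}, \qquad f_R \coloneqq \sum_{j \in R} \theta_j^* e_j .
\]
Each summand is bounded by $C_{\mr{eigf}} \norm{f_R}_\infty \leq C_{\mr{eigf}}^2 \norm{\bm{\theta}^*_R}_1$ and has variance at most $C_{\mr{eigf}}^2 \norm{f_R}_{L^2}^2 \leq C_{\mr{eigf}}^2 B_\infty \norm{\bm{\theta}^*_R}_1$, where the last step uses $\abs{\theta_j^*} \leq B_\infty$ to interpolate $\ell^2 \subseteq \ell^1$. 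Bernstein then produces the stated $\sqrt{\norm{\bm{\theta}^*_R}_1 \ln(nk)/n}$ scaling.

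For \cref{eq:Concen_Sigma_kS} and \cref{eq:Concen_Sigma_kR_Lambda}, I view the row as the empirical mean of iid $\ell^2$-valued random vectors $Y_i$ and apply a Hilbert-space Bernstein inequality. For \cref{eq:Concen_Sigma_kS}, set $Y_i^{(j)} = e_k(x_i)e_j(x_i) - \delta_{kj}$ for $j \in S$; then both $\norm{Y_i}_2$ and $\sqrt{\E \norm{Y_i}_2^2}$ are $\lesssim \sqrt{\abs{S}}$, yielding the $\sqrt{\abs{S}\ln(nk)/n}$ bound. For \cref{eq:Concen_Sigma_kR_Lambda}, set $Y_i^{(j)} = \lambda_j^{1/2}(e_k(x_i) e_j(x_i) - \delta_{kj})$ for $j \in R$; the Mercer identity $\sum_{j \in R}\lambda_j e_j(x)^2 = k_R(x,x) \leq C_{\mr{eigf}}^2 \Tr \Lambda_R$ bounds $\norm{Y_i}_2^2$ and $\E \norm{Y_i}_2^2$ by $C(\Tr \Lambda_R + \lambda_k)$. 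The residual $\lambda_k$ is harmless: for $k \in R$ one has $\lambda_k \leq \Tr \Lambda_R$ trivially, while for $k \in S$ the Kronecker term is identically zero on $R$ and no $\lambda_k$ appears.

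The main obstacle is the Hilbert-space concentration in \cref{eq:Concen_Sigma_kR_Lambda}. A naive entry-wise Bernstein approach, summing the per-entry squared deviations, introduces an extraneous $\sum_{j \in R} \lambda_j \ln j$ factor that is hard to absorb into $\Tr \Lambda_R \ln(nk)$ uniformly in the (possibly irregular) structure of $R = S^\complement$, since the signal component $S_1$ may carve scattered holes out of the tail $[L, \infty)$ with $L \asymp n^{1/(2\gamma)}$ (see \cref{eq:Proof_Component_L}). Routing through Hilbert-space Bernstein and the Mercer identity sidesteps this issue by producing the $\Tr \Lambda_R$ scaling directly from the second-moment identity, with no need to control an $R$-dependent logarithmic tail.
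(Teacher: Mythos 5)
Your proposal is correct and follows essentially the same route as the paper: scalar concentration for \cref{eq:Concen_rk} and \cref{eq:Concen_Sigma_kR_Theta}, a Hilbert-space concentration inequality for the vector-valued rows in \cref{eq:Concen_Sigma_kS} and \cref{eq:Concen_Sigma_kR_Lambda} using exactly the norm bounds $\norm{v}_2^2 \lesssim \abs{S}$ and $\norm{w}_2^2 \lesssim \Tr\Lambda_R$ from the uniform eigenfunction bound, followed by a union bound over $k$ with summable tails. The only cosmetic difference is that you invoke Bernstein where the paper uses Hoeffding (scalar and Pinelis-type Hilbert-space versions), which yields the same stated bounds.
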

\begin{proof}
  We first establish the inequalities for fixed $k$ and finally use the union bound to get the result for all $k$.
  For the first inequality, we recall that
  \begin{align*}
    r_k = \frac{1}{n}\sum_{i=1}^n \ep_i e_k(x_i).
  \end{align*}
  Since $(x_i,\ep_i)$s are i.i.d., $\abs{e_k(x_i)} \leq C_{\mr{eigf}}$ and $\ep_i$ is sub-Gaussian, we can apply Hoeffding's inequality \cref{lem:HoeffdingSubG} to get
  for fixed $k \geq 1$,
  \begin{align*}
    \bbP \dk{\abs{r_k} \geq C_2 \sqrt {\frac{\ln (nk)}{n}}} \leq \frac{C}{k^2 n^2}.
  \end{align*}

  For the second one, we note that
  \begin{align*}
    \hat{\Sigma}_{kR} \bm{\theta}^*_R
    = \frac{1}{n}\sum_{i=1}^n e_k(x_i)\sum_{j \in R} e_j(x_i) \theta^*_j.
  \end{align*}
  Denoting $\xi_i = e_k(x_i)\sum_{j \in R} e_j(x_i) \theta^*_j$, we have
  \begin{align*}
    \abs{\xi_i} \leq C_{\mr{eigf}}^2 \sum_{j \in R} \abs{\theta^*_j} = \norm{\bm{\theta}^*_R}_1,
  \end{align*}
  and $\E \xi_i = I_{kR} \bm{\theta}^*_R$.
  Therefore, the Hoeffding's inequality yields
  \begin{align*}
    \bbP \dk{\abs{(\hat{\Sigma}_{kR} - I_{kR}) \bm{\theta}^*_R} \geq C_2 \sqrt {\frac{\norm{\bm{\theta}^*_R}_1}{n} \ln (nk)}} \leq \frac{C}{k^2 n^2}.
  \end{align*}

  For the third inequality, we note that
  \begin{align*}
    \hat{\Sigma}_{kS} = \frac{1}{n}\sum_{i=1}^n \xk{e_k(x_i) e_j(x_i)}_{j \in S}.
  \end{align*}
  Defining the random vector $v = \xk{e_k(x_i) e_j(x_i)}_{j \in S}$, we have
  \begin{align*}
    \norm{v}_2^2 = \sum_{j \in S} \abs{e_k(x_i) e_j(x_i)}^2 \leq C_{\mr{eigf}}^2 \abs{S}.
  \end{align*}
  Consequently, using the Hoeffding's inequality on the Hilbert space, \cref{lem:HoeffdingHilbert}, we get
  \begin{align*}
    \bbP \dk{\norm{\hat{\Sigma}_{kS} - I_{kS}}_2 \geq 2 \sqrt {\frac{\abs{S}}{n} \ln (nk)}} \leq \frac{2}{n^2 k^2}.
  \end{align*}

  For the fourth one, similarly, we have
  \begin{align*}
    \hat{\Sigma}_{kR} \Lambda^{\hf}_R = \frac{1}{n}\sum_{i=1}^n \xk{e_k(x_i)  \lambda_j^{\hf} e_j(x_i)}_{j \in R}.
  \end{align*}
  Defining the random vector $w = \xk{e_k(x_i)  \lambda_j^{\hf} e_j(x_i)}_{j \in R}$, we have
  \begin{align*}
    \norm{w}_2^2 = \sum_{j \in R} \abs{e_k(x_i)  \lambda_j^{\hf} e_j(x_i)}^2 \leq C_{\mr{eigf}}^2 \sum_{j \in R} \lambda_j = \Tr \Lambda_R,
  \end{align*}
  so we also have
  \begin{align*}
    \bbP \dk{\norm{(\hat{\Sigma}_{kR} - I_{kR})\Lambda_R^{\hf}}_2 \geq 2 \sqrt {\frac{\Tr \Lambda_R}{n} \ln (nk)}} \leq \frac{2}{n^2 k^2}.
  \end{align*}

  Finally, noticing that the tail probability is summable over $k$, we can apply the union bound to get the result.

\end{proof}

\subsection{Proof of \cref{thm:EigenvalueGD}}\label{subsec:Proof_MainThm}


For some $E > 0$ that will be determined later, we define the time before overfitting the noise as
\begin{align}
  \label{eq:ErrorTime}
  T^{\mr{err}} = \inf \dk{t \geq 0 : \abs{\theta_k(t)} \leq 2 \lambda_k \exp(E \sqrt {\ln n + \ln k}),\quad \forall k \in R},
\end{align}
We claim that we always have $T^{\mr{err}} \geq \bar{C}\sqrt {n}$ and thus $t \leq T^{\mr{err}}$ for all the following times considered;
we will prove this claim at the end of this subsection, \cref{subsubsec:Proof_Noise}.
In the following, we will always assume that $t \leq T^{\mr{err}}$ even without explicitly mentioning it.

Now, we start with controlling the error terms $\bm{p}, \bm{q}, \bm{r}$ in \cref{eq:ToDiagonalDecomposition} using the concentration results in \cref{prop:ConcentrationQuantities}.
For $p_{k}$, we have
\begin{align*}
  \abs{p_k} &\leq \norm{\hat{\Sigma}_{kS} - I_{kS}}_{2} \norm{(\bm\theta^* - \bm\theta)_S}_{2}
  \stackrel{\cref{eq:Concen_Sigma_kS}}{\leq} C \sqrt {\frac{\abs{S}}{n} \ln (kn)} \cdot \sqrt {\abs{S}}\norm{(\bm\theta^* - \bm\theta)_S}_{\infty} \\
  &= C \abs{S} \sqrt{\frac{\ln (nk)}{n}} \cdot \norm{(\bm\theta^* - \bm\theta)_S}_{\infty},
\end{align*}
On the other hand, for $q_k$, we have
\begin{align*}
  q_k = (\hat{\Sigma}_{kR}- I_{\cdot R})\bm{\theta}^*_R - (\hat{\Sigma}_{kR}- I_{\cdot R})\bm{\theta}_R
\end{align*}
For the first term, we use \cref{eq:Concen_Sigma_kR_Theta} and also \cref{assu:SignalSpan} to get
\begin{align*}
  \abs{(\hat{\Sigma}_{kR} - I_{kR}) \bm{\theta}^*_R} \leq C \sqrt {\frac{\norm{\bm{\theta}^*_R}_1}{n} \ln (nk)} \leq C \sqrt{\frac{\ln (nk)}{n}}.
\end{align*}
For the second term, we use \cref{eq:Concen_Sigma_kR_Lambda}:
\begin{align*}
  \abs{(\hat{\Sigma}_{kR} - I_{kR}) \bm{\theta}_R} &= \abs{(\hat{\Sigma}_{kR}- I_{k R})\Lambda_R^{\hf} \Lambda_R^{-\hf}\bm{\theta}_R}
  \leq \norm{(\hat{\Sigma}_{kR}- I_{k R})\Lambda_R^{\hf}}_2 \norm{\Lambda_R^{-\hf}\bm{\theta}_R}_2\\
  &\leq C \sqrt{\frac{\Tr\Lambda_R}{n} \ln (nk)} \cdot \norm{\Lambda_R^{-\hf}\bm{\theta}_R}_2,
\end{align*}
Now, when $t \leq T^{\mr{err}}$, we have
\begin{align*}
  \norm{\Lambda_R^{-\hf}\bm{\theta}_R(t)}_2^2 &\leq C \sum_{k \in R} \lambda_k \exp(2E \sqrt {\ln n + \ln k}) \\
  &\leq C \exp(2E \sqrt {\ln n}) \sum_{k \geq L}\lambda_k \exp(2E \sqrt {\ln k}) \\
  & \leq C n^{s} \sum_{k \geq L} k^{-\gamma+s}  \leq C_E n^{s} L^{-(\gamma-1-s)} \\
  & \leq C_E n^{s} n^{-(\gamma-1-s)/(2\gamma)} = C_E n^{-(\gamma-1)/(2\gamma) + s} \leq C,
\end{align*}
where $C_E$ is a constant with extra dependence on $E$ and we can let $n$ be large enough so the resulting constant $C$ do not depend on $E$.
Hence, we have
\begin{align*}
  \abs{(\hat{\Sigma}_{kR} - I_{kR}) \bm{\theta}_R} \leq C \sqrt{\frac{\Tr\Lambda_R}{n} \ln (nk)} \leq C \sqrt{\frac{\ln (nk)}{n}}.
\end{align*}

Combining with \cref{eq:Concen_rk},
we summarize the bounds for $p_k, q_k, r_k$ as
\begin{align}
  \label{eq:Proof_BasicControls}
  \begin{aligned}
    \abs{p_k} \leq \eta_k \norm{(\bm\theta^* - \bm\theta)_S}_{\infty}, \quad
    \abs{q_k}  \leq C n^{-1/2}\sqrt{\ln (nk)}, \quad
    \abs{r_k} \leq C n^{-1/2}\sqrt{\ln (nk)},
  \end{aligned}
\end{align}
where we use \cref{eq:Proof_Component_S} to get
\begin{align*}
  \eta_k = C \abs{S} n^{-1/2}\sqrt{\ln (nk)} \leq C n^{-s} \sqrt{\ln (nk)}.
\end{align*}

\subsubsection{Shrinkage Dynamics for Signal Components}
\label{subsubsec:Proof_Shrinkage}
For the signals $k \in S$, we use $k \leq \max S \leq C n^{\kappa/2}$ to rewrite \cref{eq:Proof_BasicControls} as
\begin{align}
  \label{eq:Proof_BasicControls_S}
  \norm{\bm{p}_S}_{\infty} \leq \eta \norm{(\bm\theta^* - \bm\theta)_S}_{\infty},\qquad
  \norm{\bm{q}_S + \bm{r}_S}_{\infty} \leq \ep \coloneqq C n^{-1/2}\sqrt{\ln n}.
\end{align}
where we can choose $\eta = C n^{-s_1}$ for some small $s_1 >0$.
However, the challenge here is that the rough bound on $\norm{\bm{p}_S}_{\infty}$ via $\norm{(\bm\theta^* - \bm\theta)_S}_{\infty} \leq C$
does not give the desired optimal error bound $n^{-1/2}\sqrt{\ln n}$.
Nevertheless, if $\norm{(\bm\theta^* - \bm\theta)_S}_{\infty}$ decreases, $\norm{\bm{p}_S}_{\infty} $ also decreases.
Therefore, we need the following proposition to ensure the decrease of $\norm{(\bm\theta^* - \bm\theta)_S}_{\infty}$,
which shares similar ideas with the proof in \citet[Section B.3]{vaskevicius2019_ImplicitRegularization}.

\begin{proposition}[Shrinkage monotonicity and shrinkage time]
  \label{prop:TwoLayer_Shrinkage}
  Suppose that
  \begin{align*}
    \norm{\bm{p}_S}_{\infty} \leq \eta \norm{\bm\theta^*_S - \bm\theta_S}_{\infty},
    \quad
    \norm{\bm{q}_S + \bm{r}_S}_{\infty} \leq \ep,
  \end{align*}
  for some constant $\eta \leq 1/8$ and $\ep > 0$.
  Suppose for some $t_0 \geq 0$ that
  \begin{align*}
    \norm{\bm\theta^*_S - \bm\theta_S(t_0)}_{\infty} \leq M.
  \end{align*}
  Then,
  \begin{align}
    \label{eq:EqTwoLayer_ShrinkageBound}
    \norm{\bm\theta^*_S - \bm\theta_S(t)}_{\infty} \leq M \vee 2 \ep,\quad \forall t \geq t_0.
  \end{align}
  Furthermore,
  \begin{itemize}
    \item If $\eta M \geq \ep$, then
    \begin{align}
      \norm{\bm\theta^*_S - \bm\theta_S(t)}_{\infty} \leq \frac{1}{2} M,\quad \forall t \geq t_0 + \overline{T}^{\mr{half}}(M),
    \end{align}
    where
    \begin{align}
      \overline{T}^{\mr{half}}(M) = 4 M^{-1} \zk{2 + \frac{1}{2} \max_{k \in S} \xk{\ln^+ \frac{M}{\lambda_k}+ \ln^+\frac{\abs{\theta^*_k}}{\lambda_k}}}.
    \end{align}

    \item If $\eta M < \ep$, then for each $k \in S$ such that $\abs{\theta^*_k} \geq 4\ep$, we have
    \begin{align}
      \abs{\theta^*_k - \theta_k(t)} \leq 4 \ep,\quad \forall t \geq t_0 + \overline{T}^{\mr{fin}}_k,
    \end{align}
    where
    \begin{align}
      \overline{T}^{\mr{fin}}_k &\leq 4\abs{\theta^*_k}^{-1} \xk{2 + \frac{1}{2}\ln^+\frac{M}{\lambda_k} + \frac{1}{2}\ln^+ \frac{\abs{\theta^*_k}}{4\lambda_k} + \ln^+ \frac{\abs{\theta^*_k}}{2\ep}}.
    \end{align}
    On the other hand, for all $k \in S$ we have
    \begin{align}
      \label{eq:EqTwoLayer_ShrinkageBound_FinalSmallSig}
      \abs{\theta^*_k - \theta_k(t)} \leq 2\max(\abs{\theta^*_k},4\ep),\quad \forall t \geq t_0 + (2\ep)^{-1}.
    \end{align}
  \end{itemize}

\end{proposition}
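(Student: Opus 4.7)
The first step is to reduce the coupled gradient flow \eqref{eq:TwoLayer_GF_Matrix} to a single scalar ODE on each signal coordinate $k\in S$. Noting from \eqref{eq:TwoLayer_GF_Matrix} that $\frac{d}{dt}(a_k^2-\beta_k^2)=0$ along the flow, the initialization $\beta_k(0)=0$, $a_k(0)=\lambda_k^{1/2}$ yields the conservation law $a_k^2(t)-\beta_k^2(t)=\lambda_k$. Combined with $\theta_k=a_k\beta_k$ this gives $a_k^2+\beta_k^2=\sqrt{\lambda_k^2+4\theta_k^2}$, and plugging the decomposition \eqref{eq:ToDiagonalDecomposition} into the dynamics yields
\begin{equation*}
  \dot\theta_k \;=\; \sqrt{\lambda_k^2 + 4\theta_k^2}\,\bigl[(\theta^*_k - \theta_k) + h_k\bigr], \qquad k\in S,
\end{equation*}
with $h_k=p_k+q_k+r_k$ subject to the hypothesized bounds. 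All subsequent work analyzes this scalar equation coordinate by coordinate, which is legitimate as soon as we control $\|\bm\theta^*_S-\bm\theta_S\|_\infty$ uniformly.

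For the monotonicity bound \eqref{eq:EqTwoLayer_ShrinkageBound} I would run a Dini-derivative argument on the envelope $m(t):=\|\bm\theta^*_S-\bm\theta_S(t)\|_\infty$. Set $M':=M\vee 2\ep$. At any $t\ge t_0$ with $m(t)=M'$ and any argmax $k^*$, the assumption together with $\eta\le 1/8$ and $\ep\le M'/2$ (the latter from $M'\ge 2\ep$) give $|h_{k^*}|\le\eta m(t)+\ep\le m(t)/8+m(t)/2<m(t)$. Hence the bracket in the ODE has the same sign as $\theta^*_{k^*}-\theta_{k^*}$, so $|\theta^*_{k^*}-\theta_{k^*}|$ is strictly decreasing at $t$; by a standard Danskin/Dini argument this forces $D^+m(t)<0$, and $m$ cannot grow past $M'$.

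For the quantitative time bounds the key observation is that once \eqref{eq:EqTwoLayer_ShrinkageBound} is in force, $|h_k(t)|\le\eta M+\ep$ uniformly, which decouples the coordinates. In the case $\eta M\ge\ep$ one has $|h_k|\le 2\eta M\le M/4$, so while $|\theta^*_k-\theta_k|>M/2$ the bracket has magnitude at least $M/4$ and a separation of variables gives the shrinkage time bounded by $\frac{4}{M}\int d\theta_k/\sqrt{\lambda_k^2+4\theta_k^2}$; integrating via the antiderivative $\tfrac{1}{2}\sinh^{-1}(2\theta_k/\lambda_k)$ between the signed endpoints $\theta_k(t_0)$ and $\theta^*_k-M/2$ and applying $\sinh^{-1}(x)\le\ln^+(2|x|)+O(1)$ yields, after taking a supremum over $k\in S$, precisely the stated $\overline T^{\mr{half}}(M)$. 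In the case $\eta M<\ep$ with $|\theta^*_k|\ge 4\ep$ I would split the trajectory into a \emph{slow phase} ($|\theta_k|\le|\theta^*_k|/2$, where $|h_k|\le 2\ep\le|\theta^*_k|/2$ keeps the bracket of constant sign and magnitude $\gtrsim|\theta^*_k|$, contributing time $\lesssim|\theta^*_k|^{-1}[\ln^+(|\theta^*_k|/\lambda_k)+\ln^+(M/\lambda_k)]$ by the same antiderivative calculation) and an \emph{exponential phase} ($|\theta_k|\ge|\theta^*_k|/2$, where $\sqrt{\lambda_k^2+4\theta_k^2}\gtrsim|\theta^*_k|$ drives $|\theta^*_k-\theta_k|$ down at rate $\asymp|\theta^*_k|$ until it hits $4\ep$, contributing $\lesssim|\theta^*_k|^{-1}\ln^+(|\theta^*_k|/\ep)$); summing reproduces $\overline T^{\mr{fin}}_k$. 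The residual bound \eqref{eq:EqTwoLayer_ShrinkageBound_FinalSmallSig} comes for free from \eqref{eq:EqTwoLayer_ShrinkageBound}: $\eta\le 1/8$ and $\eta M<\ep$ force $M<8\ep$, so $|\theta^*_k-\theta_k(t)|\le M\vee 2\ep\le 8\ep$, which equals $2\max(|\theta^*_k|,4\ep)$ when $|\theta^*_k|\le 4\ep$ and is dominated by $2|\theta^*_k|$ when $|\theta^*_k|\ge 4\ep$, so the $(2\ep)^{-1}$ slack in the statement is absorbed trivially.

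The main technical obstacle is the phase-splitting bookkeeping in the $\eta M<\ep$ case: the slow phase must be integrated over a trajectory that can cross zero (so $\sinh^{-1}$ must be handled with signed endpoints $\theta_k(t_0)\in[\theta^*_k-M,\theta^*_k+M]$ and $\pm|\theta^*_k|/2$), and one must verify that the resulting logarithmic contributions combine into the precise $\ln^+(M/\lambda_k)$, $\ln^+(|\theta^*_k|/\lambda_k)$, $\ln^+(|\theta^*_k|/2\ep)$ structure of $\overline T^{\mr{fin}}_k$ rather than accumulating a spurious $1/\lambda_k$ factor from a mis-ordered integration range. A secondary subtlety is ensuring that the decoupled per-coordinate integration remains consistent with the uniform bound from Step 2 — i.e., that driving one coordinate along its scalar ODE does not violate the hypothesis $|h_k|\le\eta M+\ep$ for other coordinates, which is automatic since \eqref{eq:EqTwoLayer_ShrinkageBound} holds globally in time.
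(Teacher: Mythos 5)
Your reduction to the scalar ODE $\dot\theta_k=\sqrt{\lambda_k^2+4\theta_k^2}\,[(\theta_k^*-\theta_k)+h_k]$ is correct (it is the closed form of the paper's $\dot\theta=(a^2+\beta^2)(z-\theta+h)$ under the conservation law), the Dini-derivative argument for \eqref{eq:EqTwoLayer_ShrinkageBound} matches the paper's first-hitting-time contradiction, and the unified $\sinh^{-1}$ integration is a legitimate alternative to the paper's three-stage analysis (adjust sign / escape the origin / approach $z$, each with a different lower bound on $a$); with careful handling of signed endpoints it does recover $\overline{T}^{\mr{half}}(M)$. Two points fail, however. The minor one: in your ``slow phase'' you take the threshold $\abs{\theta_k}\le\abs{\theta_k^*}/2$ and claim the bracket has magnitude $\gtrsim\abs{\theta_k^*}$; but with $\abs{h_k}\le 2\ep\le\abs{\theta_k^*}/2$ and $\theta_k^*-\theta_k\ge\abs{\theta_k^*}/2$ the lower bound degenerates to $0$ at the phase boundary. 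The paper avoids this by using the threshold $\abs{\theta_k^*}/4$, giving bracket $\ge 3\abs{\theta_k^*}/4-\abs{\theta_k^*}/2=\abs{\theta_k^*}/4$; your argument needs the same recalibration.

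The genuine gap is your derivation of \eqref{eq:EqTwoLayer_ShrinkageBound_FinalSmallSig}. You claim $\eta\le 1/8$ and $\eta M<\ep$ ``force $M<8\ep$,'' but the implication goes the wrong way: they give $M<\ep/\eta$ with $\ep/\eta\ge 8\ep$, so $M$ is \emph{not} bounded by $8\ep$. In the actual application $\eta=Cn^{-s_1}$ and the second bullet is invoked with $M=M_{I+1}\asymp\ep/\eta=\ep\, n^{s_1}\gg\ep$, while the target bound $2\max(\abs{\theta_k^*},4\ep)$ can be as small as $8\ep$. Hence \eqref{eq:EqTwoLayer_ShrinkageBound_FinalSmallSig} does not ``come for free'' from the envelope bound $M\vee 2\ep$; it is a strictly stronger statement for the small-signal coordinates, and it is exactly what makes the final error decomposition $\sum_{k\in S\setminus J_{\mr{sig}}(\nu_2)}\abs{\theta_k^*-\theta_k}^2\lesssim\Psi(\nu_2)+\abs{S_2}\ep^2$ work. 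Proving it requires a separate dynamical argument — the paper's \cref{lem:TwoLayer_AppAbove} (quadratic decay $\dot\theta\lesssim-\theta^2$ when $\theta$ overshoots) and \cref{lem:TwoLayer_RectractNeg} (the symmetric retraction from the negative side), combined in \cref{cor:TwoLayer_Rectracting} — and this is precisely why the statement carries the waiting time $(2\ep)^{-1}$, which would be superfluous if the bound followed from \eqref{eq:EqTwoLayer_ShrinkageBound} alone. Your proposal is missing this entire step.
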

\begin{proof}
  The proof depends on the analysis of the one-dimensional equation established in \cref{subsec:Proof_TwoLayer_OneDim}.
  We define
  \begin{align*}
    t = \inf\dk{s \geq t_0 : \norm{\bm\theta^*_S - \bm\theta_S(s)}_{\infty} > M \vee 2\ep}.
  \end{align*}
  Then, we must have $\norm{\bm\theta^*_S - \bm\theta_S(t)}_{\infty} = M \vee 2\ep$,
  which implies that $\abs{\theta_k^* - \theta_k(t)} = M \vee 2\epsilon$ for some index $k$.
  The bound then gives
  \begin{align*}
    \norm{\bm{h}_S(t)}_{\infty} \leq \kappa\coloneqq \eta (M \vee 2\ep) + \ep < M \vee 2\ep
  \end{align*}
  where we note that $\eta < 1/2$.
  Similar to the proof of \cref{lem:TwoLayer_Monotonicity}, we recall \cref{eq:TwoLayerEq_Theta} that
  \begin{align*}
    \dot{\theta}_k(t) = (a_k^{2}(t) + \beta_k^{2}(t)) (z - \theta_k(t) + h_k(t)).
  \end{align*}
  So, $\dot{\theta}_k(t)$ has the same sign as $z - \theta_k(t)$ and thus $\abs{\theta_k^* - \theta_k(t)}$ must be decreasing at $t$,
  which contradicts the definition of $t$.

  Now, we prove the second part.
  Using the bound \cref{eq:EqTwoLayer_ShrinkageBound} that we have obtained, we get
  \begin{align*}
    \norm{\bm{h}_S(t)}_{\infty} \leq \kappa,\quad \forall t \geq t_0.
  \end{align*}
  If $\ep \leq \eta M$, we have $\kappa < 2 \eta M \leq M/4$.
  Taking $\kappa' = M/4$ in \cref{lem:TwoLayer_AppBelowLargeKappa} and \cref{cor:TwoLayer_AppAboveLargeKappa},
  we have $\abs{\theta^*_k - \theta_k(t)} \leq 2\kappa' = M/2$ after the maximum of
  \begin{align*}
    \xk{\kappa'}^{-1} \zk{2 + \frac{1}{2} \ln^+ \frac{M}{\lambda_k} + \frac{1}{2}\ln^+ \frac{\abs{\theta_k^*}-2\kappa'}{\lambda_k}}
    \leq 4 M^{-1} \zk{2 + \frac{1}{2} \ln^+ \frac{M}{\lambda_k} + \frac{1}{2} \ln^+\frac{\abs{\theta_k^*}}{\lambda_k}}
  \end{align*}
  or
  \begin{align*}
    \xk{\kappa'}^{-1} \xk{1 + \ln^+ \frac{M}{\kappa'}} \leq 4 M^{-1} \ln 4 \leq 8 M^{-1}.
  \end{align*}
  It shows that we have
  \begin{align*}
    \overline{T}^{\mr{half}}(M) \leq 4 M^{-1} \zk{2 + \frac{1}{2} \max_{k \in S} \xk{\ln^+ \frac{M}{\lambda_k}+ \ln^+\frac{\abs{\theta^*_k}}{\lambda_k}}}.
  \end{align*}

  If $\ep > \eta M$, we have $\kappa < 2\ep$.
  Taking $\kappa' = 2\ep$,
  when $\abs{\theta^*_k} \geq 2\kappa'$, the result then follows from \cref{cor:TwoLayer_AppBoundSmallKappa} where we take $\delta = 2\ep$.
  For the last statement, we apply \cref{cor:TwoLayer_Rectracting}.
\end{proof}

With the previous proposition, we can consider the following iterative shrinkage dynamics.
Let us choose $\nu_1 = \ep / \eta = C n^{-1/2+s_1} \sqrt{\ln n}$.
Define  $M_i = 2^{-i} B_{\infty}$ for $i = 0,1,\dots,I+1$ with $I = \lfloor \log_2 (B_{\infty}/\nu_1) \rfloor$.
Then it is clear that $I \leq \log_2 \sqrt {n} + C$.
Since $\norm{\bm\theta^*_S - \bm\theta_S(0)}_{\infty} = \norm{\theta^*_S}_{\infty} \leq B_{\infty}$,
we use \cref{prop:TwoLayer_Shrinkage} iteratively to obtain that
\begin{align*}
  \norm{\bm\theta^*_S - \bm\theta_S(t)}_{\infty} \leq M_{i+1},\quad \forall t \geq \sum_{j=0}^i T^{(j)},
\end{align*}
where
\begin{align*}
  T^{(i)} = 4 M_i^{-1} \zk{2 + \frac{1}{2} \max_{k \in S} \xk{\ln^+ \frac{M_i}{\lambda_k}+ \ln^+\frac{\abs{\theta^*_k}}{\lambda_k}}} \leq C M_i^{-1} \ln n,
\end{align*}
since $\lambda_k \asymp k^{-\gamma}$ and $\max S \leq C n^{\kappa/2}$.
Now, we have
\begin{align}
  \overline{T} \coloneqq \sum_{i=0}^I T^{(i)} \leq C \sum_{i=0}^I M_i^{-1} \ln n \leq C \nu_1^{-1} \ln n \leq C n^{1/2-s_1} \ln n \leq C n^{1/2}.
\end{align}

Finally, when $t \geq \overline{T}$, we have $\eta M_{I+1} < \ep$.
Choosing $\nu_2 = n^{-1/2} \ln n$, we use the second part of \cref{prop:TwoLayer_Shrinkage} to obtain that
for all $k \in J_{\mr{sig}}(\nu_2)$,
\begin{align*}
  \abs{\theta_k^* - \theta_k(t)} \leq 4 \ep,\quad \forall t \geq \overline{T} + T^{\mr{fin}},
\end{align*}
where
\begin{align*}
  T^{\mr{fin}} &\leq C \max_{k \in J_{\mr{sig}}(\nu_2)} 4\abs{\theta^*_k}^{-1}
  \zk{2 + \frac{1}{2}\ln^+\frac{M_{I+1}}{\lambda_k} + \frac{1}{2}\ln^+ \frac{\abs{\theta^*_k}}{4\lambda_k} + \ln^+ \frac{\abs{\theta^*_k}}{2\ep}} \\
  &\leq C \nu_2^{-1} \ln n \leq C n^{1/2}.
\end{align*}
On the other hand, for $k \in S \backslash J_{\mr{sig}}(\nu_2)$, we use \cref{eq:EqTwoLayer_ShrinkageBound_FinalSmallSig} to get
\begin{align*}
  \abs{\theta^*_k - \theta_k(t)} \leq 2\max(\abs{\theta^*_k},4\ep)
\end{align*}
provided $t \geq \overline{T} + C n^{1/2} \geq \overline{T} + (2\ep)^{-1}$.
Recalling the choice of $S$ in \cref{eq:Proof_ComponentPartition}, we find that
\begin{align*}
    \sum_{k \in S \backslash J_{\mr{sig}}(\nu_2)} \abs{\theta^*_k - \theta_k(t)}^2 &= \sum_{k \in S_1 \backslash J_{\mr{sig}}(\nu_2)} \abs{\theta^*_k - \theta_k(t)}^2 + \sum_{k \in S_2 \backslash J_{\mr{sig}}(\nu_2)} \abs{\theta^*_k - \theta_k(t)}^2\\
    & \leq C \sum_{k \in S_1 \backslash J_{\mr{sig}}(\nu_2)} \abs{\theta_k^*}^2 
    + C \sum_{k \in S_2 \backslash (S_1 \cup J_{\mr{sig}}(\nu_2))} \ep^2 \\
    & \leq C \Psi(\nu_2) + C n^{\frac{1}{2\gamma}} \ep^2
\end{align*}

Consequently, we conclude that after $t \geq \underline{C} n^{1/2}$, we have
\begin{align}
  \notag
  \norm{\bm\theta^*_S - \bm\theta_S(t)}_2^2
  &= \sum_{k \in J_{\mr{sig}}(\nu_2)} \abs{\theta^*_k - \theta_k(t)}^2
  + \sum_{k \in S \backslash J_{\mr{sig}}(\nu_2)} \abs{\theta^*_k - \theta_k(t)}^2 \\
  \notag
  &\leq C \ep^2 \abs{S} + C \Psi(\nu_2) + C n^{\frac{1}{2\gamma}} \ep^2 \\
  \label{eq:Proof_TwoLayer_GenSignal}
  &= C \frac{\ln n}{n} \xk{\Phi\xkm{n^{-1/2}\sqrt {\ln n}} + n^{\frac{1}{2\gamma}}} + C \Psi\xkm{n^{-1/2}\ln n}.
\end{align}
Here, we note that the constant $\underline{C}$ does not depend on $E$.

\subsubsection{Bounding the Noise}
\label{subsubsec:Proof_Noise}
In this part, we will use \cref{lem:TwoLayer_ErrorControl}
to show that $T^{\mr{err}} \geq \bar{C} \sqrt {n}$ for some constant $\bar{C} > 0$ if $E$ is chosen accordingly.
Since $\abs{\theta_k^*} \leq n^{-1/2} \sqrt {\ln n} $, it suffices to bound the term
$\int_{0}^{\bar{C}\sqrt {n}} \abs{h_k(t)}\dd t$,
where we recall that $h_k = p_k + q_k + r_k$.

Let us denote $\overline{T}^{(i)} = \sum_{j=0}^i \overline{T}^{(j)}$.
Now, for $t \in [\overline{T}^{(i)}, \overline{T}^{(i+1)}]$, using the results in the previous part, we have
$\norm{\bm\theta^*_S - \bm\theta_S(t)}_{\infty} \leq M_i$, so the first bound in \cref{eq:Proof_BasicControls} gives
\begin{align*}
  \abs{p_k(t)} \leq C M_i n^{-s} \sqrt {\ln k},\quad t \in [\overline{T}^{(i)}, \overline{T}^{(i+1)}].
\end{align*}
Thus,
\begin{align*}
  \int_{0}^{\overline{T}} \abs{p_k(t)}\dd t &\leq C \sum_{i=0}^I M_i n^{-s} \sqrt {\ln k} \cdot T^{(i)}
  \leq C \sum_{i=0}^I M_i n^{-s} \sqrt {\ln k} \cdot M_i^{-1} \ln n \\
  &\leq \abs{I} n^{-s} \sqrt {\ln k} \ln n \leq C \sqrt {\ln k}.
\end{align*}
Moreover, for $t \geq \overline{T}$, we have
\begin{align*}
  \abs{p_k(t)} \leq \eta_k \norm{(\bm\theta^* - \bm\theta)_S}_{\infty} \leq C n^{-s} \sqrt {\ln (nk)} \cdot M_{I+1}
  \leq C n^{-1/2} \sqrt {\ln k},
\end{align*}
so
\begin{align*}
  \int_{\overline{T}}^{\bar{C}\sqrt {n}} \abs{p_k(t)}\dd t \leq C \bar{C} \sqrt {\ln k}.
\end{align*}
On the other hand, the other two bounds in \cref{eq:Proof_BasicControls} gives $\abs{q_k + r_k} \leq C n^{-1/2} \sqrt {\ln n + \ln k}$,
so
\begin{align*}
  \int_{0}^{\bar{C}\sqrt {n}} \abs{q_k(t) + r_k(t)}\dd t \leq C \bar{C} \sqrt {\ln n + \ln k}.
\end{align*}
Combining the three bounds, we prove the claim.

\subsubsection{Generalization Error}
From \cref{eq:Proof_TwoLayer_GenSignal} and \cref{subsubsec:Proof_Noise},
we can choose a constant $\bar{C} \geq \underline{C}$ and then fix the constant $E$.
Now, when $\underline{C} \sqrt {n} \leq t \leq \bar{C} \sqrt {n} \leq T^{\mr{err}}$,
we have
\begin{align*}
  \norm{\bm{\theta}_R(t)}_2^2 & \leq C \sum_{k \in R} \lambda_k^2 \exp(2E \sqrt {\ln n + \ln k}) \\
  & \leq C \exp(2E \sqrt {\ln n}) \sum_{k \in R} \lambda_k^2 \exp(2E \sqrt {\ln k})\\
  & \leq C \exp(2E \sqrt {\ln n})  \sum_{k \geq L} \lambda_k^2 \exp(2E \sqrt {\ln k}) \\
  & \leq C \exp(2E \sqrt {\ln n}) L^{1-2\gamma+s} \leq C n^{-1+\frac{1+s}{2\gamma}}.
\end{align*}
Consequently,
\begin{align}
  \label{eq:Proof_TwoLayer_GenNoise}
  \norm{\bm{\theta}^*_R - \bm{\theta}_R(t)}_2^2
  \leq 2 \norm{\bm{\theta}^*_R}_2^2 + 2 \norm{\bm{\theta}_R(t)}_2^2
  \leq \Psi(n^{-1/2}\sqrt {\ln n}) + C n^{-1+\frac{1+s}{2\gamma}}.
\end{align}

Combining \cref{eq:Proof_TwoLayer_GenSignal} and \cref{eq:Proof_TwoLayer_GenNoise}, we obtain the desired result:
\begin{align}
  \norm{f^* - \hat{f}_t^{\mr{EigGD}}}_{L^2}^2 = \norm{\bm\theta^* - \bm\theta(t)}_2^2
  \lesssim \frac{1}{n}\zk{ \Phi\xkm{n^{-1/2}\sqrt {\ln n}} \ln n + n^{\frac{1+s}{2\gamma}}} + \Psi\xkm{n^{-1/2} \ln n}.
\end{align}



\subsection{One-dimensional Dynamics}\label{subsec:Proof_TwoLayer_OneDim}

In this subsection, we will focus on the one-dimensional dynamics of the gradient flow with perturbation.
Let $h(t): \R_{\geq 0} \to \R$ be a continuous perturbation function.
Let us consider now the following perturbed one-dimensional dynamics:
\begin{align}
  \label{eq:TwoLayer_1d_perturbed}
  \left\{
  \begin{aligned}
    \dot{\beta}(t) &= a(t) \zk{z - \theta(t) + h(t)},\quad \beta(0) = 0, \\
    \dot{a}(t) &= \beta(t) \zk{z - \theta(t) + h(t)}, \quad a(0) = \lambda^{1/2},
  \end{aligned}
  \right.
\end{align}
where $\theta(t) = a(t) \beta(t)$.

\noindent\textit{Conservation Quantities.}
Computing the time derivative of $a^2(t)$ and $\beta^2(t)$, we have
\begin{align*}
  \dv{t} a^2(t) = \dv{t} \beta^2(t) = 2 a(t) \beta(t) \zk{z - \theta(t) + h(t)},
\end{align*}
so we have the conservation quantity that
\begin{align}
  \label{eq:EqTwoLayerP_Conservation}
  a^2(t) - \beta^2(t) = a^2(0) - \beta^2(0) = \lambda,
\end{align}
so $a(t) = (\lambda + \beta^2(t))^{1/2}$ and thus
\begin{gather}
  \label{eq:TwoLayer_BoundA}
  \max(\lambda^{\hf},\abs{\beta(t)}) \leq a(t) \leq \sqrt{2}\max(\lambda^{\hf},\abs{\beta(t)}), \\
  \label{eq:TwoLayer_BoundTheta}
  \max(\lambda^{\hf},\abs{\beta(t)}) \abs{\beta(t)} \leq \abs{\theta(t)} \leq \sqrt{2}\max(\lambda^{\hf},\abs{\beta(t)}) \abs{\beta(t)}.
\end{gather}

\noindent\textit{The evolution of $\theta$.}
It is direct to compute that
\begin{align}
  \label{eq:TwoLayerEq_Theta}
  \begin{aligned}
    \dot{\theta} &= \dot{a} \beta + a \dot{\beta}
    = \zk{\beta^2 + a^2} (z - \theta + h) \\
    &= \theta^2 (a^{-2} + \beta^{-2}) (z - \theta + h).
  \end{aligned}
\end{align}

\begin{lemma}[Perturbation bound]
  \label{lem:TwoLayer_Monotonicity}
  Consider \cref{eq:TwoLayer_1d_perturbed}.
  Let $t_0 \geq 0$.
  Suppose there exists some $\kappa > 0$ such that $\abs{h(t)} \leq \kappa$ for all $t \geq t_0$.
  Then, for $t \geq t_0$,
  \begin{enumerate}[(i)]
    \item $\abs{z - \theta(t)}$ is decreasing provided that $\abs{z - \theta(t)} \geq \kappa$.
    \item Once $\abs{z - \theta(t_1)} \leq \kappa$ for some $t_1 \geq t_0$,
    we have $\abs{z - \theta(t)} \leq \kappa$ for all $t \geq t_1$.
  \end{enumerate}
\end{lemma}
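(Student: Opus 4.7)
The plan is to work directly from the evolution equation \cref{eq:TwoLayerEq_Theta}, which gives
\begin{align*}
  \frac{d}{dt}(z - \theta(t))^2 = -2\bigl(a^2(t) + \beta^2(t)\bigr)\bigl(z - \theta(t)\bigr)\bigl(z - \theta(t) + h(t)\bigr).
\end{align*}
By the conservation law \cref{eq:EqTwoLayerP_Conservation}, we have $a^2(t) + \beta^2(t) \geq \lambda > 0$, so the sign of $\frac{d}{dt}(z - \theta)^2$ is opposite to that of the product $(z - \theta)(z - \theta + h)$, reducing the lemma to an elementary sign analysis.

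For (i), whenever $|z - \theta(t)| \geq \kappa$ and $|h(t)| \leq \kappa$, the crude bound
\begin{align*}
  (z - \theta)(z - \theta + h) \geq (z - \theta)^2 - |z - \theta|\cdot|h| \geq |z - \theta|\bigl(|z - \theta| - \kappa\bigr) \geq 0
\end{align*}
shows that $(z - \theta(t))^2$ is non-increasing at such times, which is equivalent to $|z - \theta(t)|$ being non-increasing.

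For (ii), I would argue by continuity and contradiction. If $|z - \theta(t_1)| \leq \kappa$ but the conclusion fails, then by continuity of $\theta$ there is a first exit time $t_2 \geq t_1$ with $|z - \theta(t_2)| = \kappa$ and some $\delta > 0$ such that $|z - \theta(t)| > \kappa$ on $(t_2, t_2 + \delta)$. On this open interval, part (i) applies and forces $|z - \theta(t)|$ to be non-increasing, hence $|z - \theta(t)| \leq |z - \theta(t_2)| = \kappa$, contradicting the behaviour just after $t_2$.

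The only subtle point, and it is mild, is the boundary case $|z - \theta| = \kappa$ where $\dot{\theta}$ can vanish if $h$ sits at the opposing extreme. Phrasing the monotonicity in (i) as non-strict and extracting the contradiction in (ii) from the open interval where the defining inequality is strict sidesteps this cleanly, so no further delicate analysis is needed.
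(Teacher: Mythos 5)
Your proof is correct and follows essentially the same route as the paper: both reduce the claim to a sign analysis of $\dot{\theta} = (a^2+\beta^2)(z-\theta+h)$, observing that the nonnegative prefactor and $\abs{h}\leq\kappa\leq\abs{z-\theta}$ force $\abs{z-\theta}$ to be non-increasing, with part (ii) following by a first-exit-time continuity argument. Your explicit handling of the boundary case and the non-strict monotonicity is a fair tightening of the paper's terser wording, but it is not a different method.
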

\begin{proof}
  Without loss of generality, we assume that $z \geq 0$.
  We recall in \cref{eq:TwoLayerEq_Theta} that
  \begin{align*}
    \dot{\theta} = \theta^2 (a^{-2} + \beta^{-2}) (z - \theta + h).
  \end{align*}
  Now, if $\abs{z - \theta(t)} \geq \kappa$, we have either $\theta(t) \leq z - \kappa$ or $\theta(t) \geq z + \kappa$.
  For the first case, we have $\dot{\theta} \geq 0$ so $\theta(t)$ is increasing, which implies that $\abs{z - \theta(t)} = z - \theta(t)$ is decreasing.
  Similarly, for the second case, we have $\dot{\theta} \leq 0$ and thus $\abs{z - \theta(t)}$ is also decreasing.
  Consequently, we have the first part of the proposition.
  The second part is just a consequence of the first part.
\end{proof}

\begin{lemma}[Approaching from below, large $\kappa$]
  \label{lem:TwoLayer_AppBelowLargeKappa}
  Consider the equation \cref{eq:TwoLayer_1d_perturbed} with $z \geq 0$ (a similar result holds for $z \leq 0$).
  Let $t_0 \geq 0$.
  Suppose that there exists some $\kappa > 0$ such that $\abs{h(t)} \leq \kappa,~ \forall t \geq t_0$ and
  there exists some $M\geq 0$ such that
  \begin{align*}
    z - M \leq \theta(t_0) \leq z.
  \end{align*}
  Then, we have
  \begin{align*}
    \abs{z - \theta(t)} \leq 2\kappa,\quad \forall t \geq t_0 + \overline{T}^{\mr{app}},
  \end{align*}
  where
  \begin{align}
    \overline{T}^{\mr{app}} \leq \kappa^{-1} \zk{2 + \frac{1}{2} \ln^+ \frac{M}{\lambda} + \frac{1}{2}\ln^+ \frac{z-2\kappa}{\lambda}}.
  \end{align}
\end{lemma}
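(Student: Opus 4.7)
The plan is to reduce the coupled system to the scalar equation \cref{eq:TwoLayerEq_Theta}, $\dot\theta = (a^2+\beta^2)(z - \theta + h)$, and combine the conservation law $a^2 - \beta^2 = \lambda$ from \cref{eq:EqTwoLayerP_Conservation} with the elementary AM--GM inequality $a^2 + \beta^2 \ge 2|a\beta| = 2|\theta|$ to obtain the uniform two-sided lower bound $a^2 + \beta^2 \ge \max(\lambda,\, 2|\theta|)$. In the regime $\phi := z - \theta \ge 2\kappa$, the hypothesis $|h|\le\kappa$ gives $\phi + h \ge \kappa$, so $\dot\theta \ge \kappa\max(\lambda,\,2|\theta|)$. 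By \cref{lem:TwoLayer_Monotonicity}(i), $\theta$ is non-decreasing while $\phi \ge \kappa$, so the trajectory sweeps through the real line in a single direction and the phase decomposition below is well-defined.

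Let $t_1$ be the first time that $\phi(t_1) = 2\kappa$. I would partition $[t_0, t_1]$ into up to three consecutive phases and control each by integrating the appropriate lower bound. In the \emph{negative regime} (present only if $\theta(t_0) < -\lambda/2$), the bound $\dot\theta \ge 2|\theta|\kappa$ yields $\dv{t}\ln|\theta|\le -2\kappa$, so $|\theta|$ shrinks from at most $M$ down to $\lambda/2$ in time at most $(2\kappa)^{-1}\ln^+(2M/\lambda)$. In the \emph{linear bridge} $|\theta| \le \lambda/2$, the $\lambda$-branch dominates and $\dot\theta \ge \lambda\kappa$ traverses an interval of length at most $\lambda$ in time at most $1/\kappa$. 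In the \emph{exponential rise} $\theta \in [\lambda/2,\,z-2\kappa]$, the bound $\dot\theta \ge 2\theta\kappa$ gives $\dv{t}\ln\theta \ge 2\kappa$, taking time at most $(2\kappa)^{-1}\ln^+(2(z-2\kappa)/\lambda)$. Summing the three contributions and using $\ln^+(2x) \le \ln 2 + \ln^+(x)$ together with $1 + \ln 2 \le 2$ collapses the additive constants into a single $2$, yielding the stated bound on $\overline{T}^{\mr{app}}$.

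It then remains to promote the instantaneous bound $\phi(t_1) \le 2\kappa$ to the forward-invariant statement $|\phi(t)| \le 2\kappa$ for all $t \ge t_1$. If $\phi$ were to leave the band $[-2\kappa, 2\kappa]$, continuity would yield a first exit time $t_c$ at which $\phi(t_c) = \pm 2\kappa$ and $\dot\phi(t_c)$ points outward. Since $\dot\phi = -(a^2+\beta^2)(\phi+h)$ with $a^2+\beta^2 > 0$, such a sign forces $|h(t_c)| > \kappa$, contradicting the hypothesis, so $|z - \theta(t)| \le 2\kappa$ persists for all $t \ge t_0 + \overline{T}^{\mr{app}}$.

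The main obstacle is the junction near $|\theta| \asymp \lambda$, where neither of the two lower bounds on $a^2 + \beta^2$ dominates uniformly: using only the exponential branch would produce a divergent time integral at $\theta = 0$, which is precisely why the dedicated linear bridge of length $1/\kappa$ is needed and is responsible for the additive constant in $\overline{T}^{\mr{app}}$. A secondary bookkeeping issue is that the first or third phase can be empty (for example when $\theta(t_0) \ge -\lambda/2$, or when $z \le 2\kappa$ so that $\lambda/2 > z - 2\kappa$); the $\ln^+$'s in the statement absorb these degeneracies without affecting the argument.
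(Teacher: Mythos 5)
Your proof is correct and follows essentially the same three-stage traversal as the paper's: the paper's Stage 1 (adjusting sign), Stage 2 (escaping the origin), and Stage 3 (approximating $z$) correspond exactly to your negative regime, linear bridge, and exponential rise, with the only difference being that you integrate the $\theta$-equation using $a^2+\beta^2 \ge \max(\lambda, 2\abs{\theta})$ while the paper integrates the $\beta$-equation using $a \ge \max(\lambda^{1/2},\abs{\beta})$. The two are equivalent up to a change of variables, and your constant bookkeeping ($1+\ln 2 \le 2$) correctly recovers the stated bound.
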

\begin{proof}

  From \cref{lem:TwoLayer_Monotonicity}, we see that $\abs{z - \theta(t)}$ is decreasing until it reaches $\kappa$
  and then it stays below $\kappa$.
  Therefore, it suffices to bound the hitting time of $\theta(t) = z-2\kappa$.
  Let us further define
  \begin{align*}
    T^{\mr{neg}} &= \inf \dk{ s\geq 0 : \beta(t_0 + s) \geq -\lambda^{1/2}},\quad
    T^{\mr{esc}} = \inf \dk{ s\geq 0 : \beta(t_0+s) \geq \lambda^{1/2}}\\
    T^{\mr{app}} &= \inf \dk{ s\geq 0 : \abs{z -\theta(t_0+s)} \leq 2\kappa}.
  \end{align*}
  We also recall that
  \begin{align}
    \label{eq:Proof_TwoLayer_AppBound}
    \dot{\beta}(t) = a(t) \zk{z - \theta(t) + h(t)} \geq \kappa a(t) ,\qq{for} t \leq T^{\mr{app}}.
  \end{align}
  In the following, let us suppose that $0 < T^{\mr{neg}} < T^{\mr{esc}} < T^{\mr{app}}$ and derive an upper bound for $T^{\mr{app}}$.
  It is easy to see that the resulting bound is also valid for the other cases (such as $T^{\mr{esc}} = 0$ or $T^{\mr{app}} \leq T^{\mr{esc}}$).

  \noindent \textit{Stage 1: Adjusting sign}\quad
  We have $\beta(t_0) \leq -\lambda^{1/2}$ and thus \cref{eq:TwoLayer_BoundA} gives
  \begin{align*}
    z - M \leq \theta(t_0) \leq -\sqrt {2}\beta^2(t_0),\quad \Longrightarrow \beta(t_0) \geq -\sqrt{M}.
  \end{align*}
  Also, we combine \cref{eq:Proof_TwoLayer_AppBound} and \cref{eq:TwoLayer_BoundA} to get
  \begin{align*}
    \dot{\beta}(t_0 +s) \geq -\kappa \beta(t_0 +s), \qq{for} s \in [0,T^{\mr{neg}}],
  \end{align*}
  so
  \begin{align*}
    \beta(t_0 + s) \geq \beta(t_0) \exp(-\kappa s) \geq -\sqrt {M} \exp(-\kappa s), \qq{for} s \in [0,T^{\mr{neg}}],
  \end{align*}
  and thus
  \begin{align*}
    T^{\mr{neg}} \leq \frac{1}{2} \kappa^{-1} \ln \frac{M}{\lambda}.
  \end{align*}

  \noindent \textit{Stage 2: Escaping the origin}\quad
  Now, we consider $t \geq T^{\mr{neg}}$.
  We plug $a \geq \lambda^{1/2}$ in \cref{eq:Proof_TwoLayer_AppBound} to derive
  \begin{align*}
    \dot{\beta} \geq \lambda^{\hf} \kappa,
  \end{align*}
  which implies that
  \begin{align*}
    T^{\mr{esc}} - T^{\mr{neg}} \leq 2 \kappa^{-1}.
  \end{align*}

  \noindent \textit{Stage 3: Approximating $z$}\quad
  Now, let us consider $t \geq T^{\mr{esc}}$.
  We now use $a \geq \beta$ to get
  \begin{align*}
    \dot{\beta} \geq \kappa \beta,
  \end{align*}
  which implies that
  \begin{align*}
    \beta(t_0 + T^{\mr{esc}} + s) \geq \lambda^{1/2} \exp(\kappa s), \quad \forall s \in [0, T^{\mr{app}} - T^{\mr{esc}}].
  \end{align*}
  Noticing that $\theta = a \beta \geq \beta^2$, we have
  \begin{align*}
    \beta \geq \sqrt {z-2\kappa} \quad \Longrightarrow \quad \theta \geq z - 2\kappa,
  \end{align*}
  so
  \begin{align*}
    T^{\mr{app}} - T^{\mr{esc}} \leq \kappa^{-1} \ln \frac{z-2\kappa}{\lambda^{\hf}}.
  \end{align*}

  Summing up, we have
  \begin{align*}
    T^{\mr{esc}} \leq \kappa^{-1} \zk{2 + \frac{1}{2} \ln^+ \frac{M}{\lambda} + \frac{1}{2}\ln^+ \frac{z-2\kappa}{\lambda}}.
  \end{align*}
\end{proof}

\begin{lemma}[Signal time from below, small $\kappa$]
  \label{lem:TwoLayer_AppBelowSmallKappa}
  Consider the same setting as in \cref{lem:TwoLayer_AppBelowLargeKappa}.
  Assume further that $z \geq 2\kappa$.
  Then,
  \begin{align*}
    \theta(t) \geq z/4,\quad \forall t \geq t_0 + \overline{T}^{\mr{sig}},
  \end{align*}
  where
  \begin{align*}
    \overline{T}^{\mr{sig}} = 4z^{-1}\zk{2 + \frac{1}{2}\ln^+\frac{M}{\lambda} + \frac{1}{2}\ln^+ \frac{z}{4\lambda}}.
  \end{align*}
\end{lemma}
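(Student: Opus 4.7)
The plan is to mimic the three-stage argument of \cref{lem:TwoLayer_AppBelowLargeKappa}, but replace the lower bound $\kappa$ on the driving force $z-\theta(t)+h(t)$ by the sharper bound $z/4$ that is available precisely because $z \geq 2\kappa$. First I would define the hitting time
\begin{align*}
  T^{\mr{sig}} = \inf\dk{s \geq 0 : \theta(t_0 + s) \geq z/4}.
\end{align*}
For $s \in [0, T^{\mr{sig}})$ we have $\theta(t_0+s) \leq z/4$, and since $z \geq 2\kappa$ gives $\kappa \leq z/2$, the driving force satisfies
\begin{align*}
  z - \theta(t) + h(t) \geq z - z/4 - \kappa \geq z - z/4 - z/2 = z/4.
\end{align*}
Thus $\dot\beta(t) = a(t)\,[z-\theta(t)+h(t)] \geq (z/4)\,a(t)$ throughout the approach phase, which is the analogue of \cref{eq:Proof_TwoLayer_AppBound} with $\kappa$ replaced by $z/4$.

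Next I would rerun the three stages. In the negative phase where $\beta \leq -\lambda^{1/2}$, I use $a(t) \geq -\beta(t)$ (from \cref{eq:TwoLayer_BoundA}) and the initial bound $\beta(t_0) \geq -\sqrt M$ (which follows from $\theta(t_0) \geq z-M$ and \cref{eq:TwoLayer_BoundTheta} exactly as before) to obtain $\dot\beta \geq -(z/4)\beta$, and thus a negative-phase duration of at most $2 z^{-1}\ln^+(M/\lambda)$. In the escape phase, using $a \geq \lambda^{1/2}$ gives $\dot\beta \geq (z/4)\lambda^{1/2}$, so escape to $\beta = \lambda^{1/2}$ takes at most $8/z$. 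In the approach phase I use $a \geq \beta$ to get $\dot\beta \geq (z/4)\beta$, hence $\beta(t_0+T^{\mr{esc}}+s) \geq \lambda^{1/2}\exp((z/4)s)$; since $\theta \geq \beta^2$, the inequality $\beta \geq \sqrt{z/4}$ already implies $\theta \geq z/4$, giving an approach-phase duration of at most $2z^{-1}\ln^+\xk{z/(4\lambda)}$. Summing the three contributions yields exactly the claimed bound $\overline{T}^{\mr{sig}} = 4z^{-1}\zk{2 + \tfrac{1}{2}\ln^+(M/\lambda) + \tfrac{1}{2}\ln^+(z/(4\lambda))}$.

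Finally, to upgrade the pointwise conclusion $\theta(t_0 + T^{\mr{sig}}) \geq z/4$ to the claimed statement for \emph{all} $t \geq t_0 + \overline{T}^{\mr{sig}}$, I invoke \cref{lem:TwoLayer_Monotonicity}: at the instant $\theta = z/4$ we have $|z - \theta| = 3z/4 \geq 3\kappa/2 > \kappa$, so $|z-\theta(t)|$ is strictly decreasing from this point on until it first enters $[0,\kappa]$, after which it remains in that set. In either regime $\theta(t) \geq z - \kappa \geq z/2 \geq z/4$, so $\theta$ never drops below $z/4$ again.

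The only nontrivial step is the first one: replacing $\kappa$ by $z/4$ in the driving-force bound is legitimate only while $\theta \leq z/4$, which is why I need the stopping-time formulation rather than a single global estimate. Everything else is essentially a reparameterization of \cref{lem:TwoLayer_AppBelowLargeKappa}, and the stability tail is immediate from the monotonicity lemma; so the main (and mild) obstacle is ensuring that this stopping time is well-defined and that the three stage-time estimates combine in the correct order, exactly as in the proof of the large-$\kappa$ version.
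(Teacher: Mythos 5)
Your proposal is correct and follows essentially the same route as the paper: the paper's proof likewise replaces the driving-force lower bound $\kappa$ by $z/4$ on $[t_0, T^{\mathrm{sig}}]$ (using $z \geq 2\kappa$) and reruns the three stages of \cref{lem:TwoLayer_AppBelowLargeKappa} to get exactly the stage times $2z^{-1}\ln^+(M/\lambda)$, $8z^{-1}$, and $2z^{-1}\ln^+(z/(4\lambda))$. The only difference is that you explicitly supply the monotonicity tail argument (via \cref{lem:TwoLayer_Monotonicity}) showing $\theta$ never drops below $z/4$ afterward, which the paper leaves implicit.
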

\begin{proof}
  The proof resembles that of \cref{lem:TwoLayer_AppBelowLargeKappa} but we use a different bound for the term:
  \begin{align*}
    z - \theta(t) + h(t) \geq \frac{3}{4}z - \kappa \geq \frac{1}{4}z,\qq{for} t \in [t_0, T^{\mr{sig}}],
  \end{align*}
  where
  \begin{align*}
    T^{\mr{sig}} = \inf \dk{ s\geq 0 : \theta(t_0+s) \geq z/4}.
  \end{align*}
  Following the proof of \cref{lem:TwoLayer_AppBelowLargeKappa}, we have
  \begin{align*}
    T^{\mr{neg}} &\leq 2 z^{-1} \ln \frac{M}{\lambda},\quad
    T^{\mr{esc}} - T^{\mr{neg}} \leq 8z^{-1},\quad
    T^{\mr{sig}} - T^{\mr{esc}} \leq 2z^{-1} \ln^+ \frac{z}{4\lambda},
  \end{align*}
  and the desired result follows.
\end{proof}

\begin{lemma}[Approaching from above]
  \label{lem:TwoLayer_AppAbove}
  Consider the equation \cref{eq:TwoLayer_1d_perturbed} with $z \geq 0$ (a similar result holds for $z \leq 0$).
  Let $t_0 \geq 0$.
  Suppose that there exists some $\kappa > 0$ such that $\abs{h(t)} \leq \kappa,~ \forall t \geq t_0$ and $\theta(t_0) \geq z$.
  Then, we have
  \begin{align*}
    \abs{z - \theta(t)} \leq 2\max(z,2\kappa),\quad \forall t \geq t_0 + \max(z,2\kappa)^{-1}.
  \end{align*}

\end{lemma}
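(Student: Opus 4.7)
The plan is to reduce the claim to a simple quadratic-decay differential inequality for $u(t) := \theta(t) - z$ while $u$ remains large, and then invoke \cref{lem:TwoLayer_Monotonicity} to bootstrap the bound to all later times. Set $M := \max(z,2\kappa)$. First, I would note that since $\theta(t_0) \geq z$, part (i) of \cref{lem:TwoLayer_Monotonicity} guarantees $|z - \theta(t)|$ is non-increasing on the phase where $\theta(t) \geq z + \kappa$, so $\theta(t) \in [z, \theta(t_0)]$ during this phase; moreover, once $|z - \theta(t_1)| \leq \kappa$ at some $t_1$, part (ii) locks $|z - \theta(t)| \leq \kappa \leq M$ for all $t \geq t_1$. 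Hence it suffices to bound the first hitting time $T^{\star} := \inf\{s \geq 0 : \theta(t_0+s) \leq z + 2M\}$ by $M^{-1}$, since for $t \geq t_0 + T^{\star}$ we either remain in $(z + \kappa, z + 2M]$ by monotonicity or enter the absorbing region $|z - \theta| \leq \kappa$, both of which imply $|z - \theta(t)| \leq 2M$.

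Next, for $s \in [0, T^{\star})$ we have $u(t_0 + s) > 2M \geq 4\kappa$, so $\kappa \leq u/4$ and therefore $u - \kappa \geq \tfrac{3}{4} u$. Since $\theta = u + z \geq u > 0$ on this interval, $\beta$ stays positive and AM-GM gives $a^2 + \beta^2 \geq 2|a\beta| = 2\theta \geq 2u$. Substituting into \cref{eq:TwoLayerEq_Theta},
\begin{align*}
  \dot u = \dot\theta = (a^2 + \beta^2)(z - \theta + h) \leq -(a^2 + \beta^2)(u - \kappa) \leq -2u \cdot \tfrac{3}{4} u = -\tfrac{3}{2} u^2.
\end{align*}
Dividing by $u^2$ and integrating from $0$ to $s$ yields $\tfrac{1}{u(t_0+s)} - \tfrac{1}{u(t_0)} \geq \tfrac{3}{2} s$, so $u(t_0+s) \leq \tfrac{2}{3s}$; setting the right-hand side equal to $2M$ shows $T^{\star} \leq 1/(3M) < M^{-1}$, as required.

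The only subtlety is that the two regimes $z \geq 2\kappa$ and $z < 2\kappa$ must both be covered, but both are absorbed transparently into the definition of $M = \max(z,2\kappa)$: the only properties used are $\kappa \leq M/2$ and $2M \geq 2\kappa \geq$ the minimum level at which \cref{lem:TwoLayer_Monotonicity} guarantees absorption. I do not anticipate any serious technical obstacle; the main care needed is to verify that $\theta \geq 0$ holds throughout the active phase (so that AM-GM yields the crucial lower bound $a^2 + \beta^2 \geq 2\theta$ rather than $2|\theta|$), which is automatic from monotonicity of $\theta$ from above $z \geq 0$.
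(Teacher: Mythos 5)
Your proposal is correct and follows essentially the same route as the paper: bound the hitting time of the level $z+2\max(z,2\kappa)$ via the quadratic-decay inequality obtained from \cref{eq:TwoLayerEq_Theta} (the paper uses $a^2+\beta^2\geq\theta$ and $\theta-z-\kappa\geq\tfrac12\theta$, you use the equivalent AM--GM bound $a^2+\beta^2\geq 2\theta$ with $u=\theta-z$), then invoke \cref{lem:TwoLayer_Monotonicity} to make the bound persist. The only differences are in constant bookkeeping, and your constants are, if anything, slightly sharper.
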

\begin{proof}
  Similar to the proof of \cref{lem:TwoLayer_AppBelowLargeKappa},
  from \cref{lem:TwoLayer_Monotonicity}, it suffices to consider the case $M > 2\max(z,2\kappa)$ and bound the hitting time of $\theta(t_0 + s) = z+2\max(z,2\kappa)$,
  which is denoted by $T^{\mr{sig}}$.

  Recall that \cref{eq:TwoLayerEq_Theta} and the fact that $\theta \geq \beta^2$ give
  \begin{align*}
    \dot{\theta} = \theta^2 (a^{-2} + \beta^{-2}) (z - \theta + h)
    \leq -\theta (\theta - z - \kappa).
  \end{align*}
  Now, noticing that before $T^{\mr{sig}}$, we have
  \begin{align*}
    \frac{1}{2}\theta \geq \frac{1}{2} z + \max(z,2\kappa) \geq z + \kappa,
  \end{align*}
  and thus $\theta - z - \kappa \geq \frac{1}{2}\theta$.
  This gives
  \begin{align*}
    \dot{\theta}(t_0+s) \leq - \frac{1}{2} \theta^2(t_0+s),\qq{for} s \in [0,T^{\mr{app}}],
  \end{align*}
  which implies that
  \begin{align*}
    T^{\mr{sig}} \leq 2 \xk{\xk{z+2\max(z,2\kappa)}^{-1} - (z+M)^{-1}} \leq
    \max(z,2\kappa)^{-1}.
  \end{align*}
\end{proof}

\begin{lemma}[Approximation time near $z$]
  \label{lem:TwoLayer_FinalTime}
  Consider the equation \cref{eq:TwoLayer_1d_perturbed} with $z \geq 0$ (a similar result holds for $z \leq 0$).
  Let $t_0 \geq 0$.
  Suppose that there exists some $\kappa > 0$ such that $\abs{h(t)} \leq \kappa,~ \forall t \geq t_0$.
  Suppose also that
  \begin{align*}
    \frac{1}{4}z \leq \theta(t_0) \leq 3 z.
  \end{align*}
  Then, for any $\delta > 0$, we have
  \begin{align*}
    \abs{z - \theta(t)} \leq \kappa + \delta,\quad \forall t \geq t_0 + 4 z^{-1}  \ln^+ \frac{\abs{z - \theta(t_0)} - \kappa}{\delta}.
  \end{align*}
\end{lemma}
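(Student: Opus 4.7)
The plan is to combine the monotonicity result of \cref{lem:TwoLayer_Monotonicity} with a Gronwall-style exponential decay estimate, after lower-bounding $a^{2}+\beta^{2}$ via an AM--GM inequality.

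First I dispose of the trivial case $|z-\theta(t_{0})|\leq\kappa$. By \cref{lem:TwoLayer_Monotonicity}(ii) this yields $|z-\theta(t)|\leq\kappa\leq\kappa+\delta$ for all $t\geq t_{0}$, while $\ln^{+}\bigl((|z-\theta(t_{0})|-\kappa)/\delta\bigr)=0$, so the claim is immediate.

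Henceforth assume $v(t_{0}) \coloneqq |z-\theta(t_{0})|>\kappa$, and set $w(t)=v(t)-\kappa$. The monotonicity lemma tells me $v$ strictly decreases as long as $v\geq\kappa$, and once it dips below $\kappa$ it stays there, so it suffices to bound the time until $w$ reaches $\delta$. The sign of $z-\theta$ cannot flip while $v\geq\kappa$ (that would require $v$ to pass through $0<\kappa$), so I can treat $\theta<z$ and $\theta>z$ as two separate subcases. Using $\dot\theta=(a^{2}+\beta^{2})(z-\theta+h)$ together with $|h|\leq\kappa$ gives in both subcases
\begin{align*}
  \dot{w}(t) \;=\; \dot{v}(t) \;\leq\; -\bigl(a^{2}(t)+\beta^{2}(t)\bigr)\,w(t).
\end{align*}

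The key step is to establish $a^{2}+\beta^{2}\geq z/2$ throughout the active phase. The conservation law \cref{eq:EqTwoLayerP_Conservation} gives $a(t)=\sqrt{\lambda+\beta^{2}(t)}>0$ for all $t$, so $\theta$ and $\beta$ share the same sign, and AM--GM yields $a^{2}+\beta^{2}\geq 2a|\beta|\geq 2|\theta|$. It therefore suffices to show $\theta(t)\geq z/4$ while $v\geq\kappa$. In the subcase $\theta(t_{0})\in[z/4,z]$, the sign analysis gives $\dot\theta\geq 0$, so $\theta$ is nondecreasing and stays $\geq z/4$; in the subcase $\theta(t_{0})\in[z,3z]$ we have $\dot\theta\leq 0$, but $\theta$ cannot fall below $z$ while $v\geq\kappa$. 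Either way $\theta\geq z/4$, hence $a^{2}+\beta^{2}\geq z/2$.

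Plugging this back gives $\dot w\leq -(z/2)\,w$, and Gronwall yields $w(t)\leq w(t_{0})\exp(-(z/2)(t-t_{0}))$. Thus $w(t)\leq\delta$ holds once $t-t_{0}\geq 2z^{-1}\ln^{+}\bigl(w(t_{0})/\delta\bigr)$, comfortably within the stated budget $4z^{-1}\ln^{+}\bigl((|z-\theta(t_{0})|-\kappa)/\delta\bigr)$. The only mildly delicate point I foresee is pinning down the uniform lower bound $\theta\geq z/4$ during the active phase; once that is established via the monotonicity lemma and the sign rigidity of $z-\theta$, the rest is a routine linear Gronwall argument.
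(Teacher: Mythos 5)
Your proof is correct and follows essentially the same route as the paper's: reduce to bounding the hitting time of $|z-\theta|=\kappa+\delta$ via the monotonicity lemma, lower-bound the coefficient $a^2+\beta^2=\theta^2(a^{-2}+\beta^{-2})$ by a constant multiple of $z$ using $\theta\geq z/4$ on the active phase, and conclude by a linear Gr\"onwall estimate on $|z-\theta|-\kappa$. Your AM--GM bound $a^2+\beta^2\geq 2|\theta|$ is just a slightly sharper version of the paper's $\theta^2(a^{-2}+\beta^{-2})\geq\theta$, yielding the rate $z/2$ instead of $z/4$, which still fits the stated time budget.
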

\begin{proof}
  Let us define
  \begin{align*}
    T^{\mr{app}} = \inf \dk{ s\geq 0 : \abs{z- \theta(t_0+s)} \leq \kappa+\delta}.
  \end{align*}
  Using \cref{lem:TwoLayer_Monotonicity}, it suffices to prove an upper bound of $T^{\mr{app}}$ when it is not zero,
  so from the monotonicity, we have $\frac{1}{4}z \leq \theta(t) \leq 3 z$ for all $t \geq t_0$.

  Now, the lower bound $\beta^2 \leq \theta$ in \cref{eq:TwoLayer_BoundTheta} yields $\beta^{-2} \geq \theta^{-1}$.
  Plugging these into \cref{eq:TwoLayerEq_Theta}, we have
  \begin{align*}
    \theta^2 (a^{-2} + \beta^{-2}) \geq \theta \geq z/4,
  \end{align*}
  and thus
  \begin{align*}
    \dot{\theta} &\geq \frac{1}{4}z (z-\kappa - \theta),\qq{if} \theta(t_0) \leq z - \kappa,\\
    \dot{\theta} &\leq -\frac{1}{4}z (\theta - z-\kappa),\qq{if} \theta(t_0) \geq z + \kappa.
  \end{align*}
  It shows that for $s \in [0,T^{\mr{app}}]$,
  \begin{align*}
    z - \kappa - \theta(t_0 + s) &\leq (z - \kappa - \theta(t_0)) \exp(-\frac{1}{4}zs),\qq{if} \theta(t_0) \leq z - \kappa, \\
    \theta(t_0 + s) - z - \kappa &\leq (\theta(t_0) - z - \kappa) \exp(-\frac{1}{4}zs),\qq{if} \theta(t_0) \geq z + \kappa,
  \end{align*}
  and thus
  \begin{align*}
    T^{\mr{app}} \leq 4z^{-1} \ln \frac{\abs{z - \theta(t_0)} - \kappa}{\delta}.
  \end{align*}
\end{proof}

\begin{corollary}
  \label{cor:TwoLayer_AppAboveLargeKappa}
  Under the same setting as in \cref{lem:TwoLayer_AppAbove}, we have
  \begin{align*}
    \abs{z - \theta(t)} \leq 4 \kappa,\quad \forall t \geq t_0 + \kappa^{-1}\xk{1 + \ln^+ \frac{\abs{z - \theta(t_0)}}{\kappa}}.
  \end{align*}
\end{corollary}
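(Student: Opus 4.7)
The plan is to combine \cref{lem:TwoLayer_AppAbove} with \cref{lem:TwoLayer_FinalTime} via a two-case split depending on whether $z$ dominates $\kappa$ or not.

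First I would apply \cref{lem:TwoLayer_AppAbove} directly to obtain $\abs{z - \theta(t_1)} \leq 2\max(z,2\kappa)$ at the intermediate time $t_1 = t_0 + \max(z,2\kappa)^{-1}$. In the small-signal case $z < 2\kappa$, this already gives $\abs{z - \theta(t_1)} \leq 4\kappa$, and by the monotonicity statement in \cref{lem:TwoLayer_Monotonicity} (either $\abs{z-\theta(t)}$ is still decreasing when it exceeds $\kappa$, or it has entered the $\kappa$-neighborhood and stays there), this inequality persists for all $t \geq t_1$. The elapsed time $(2\kappa)^{-1}$ is bounded by $\kappa^{-1}$, which fits inside the target $\kappa^{-1}\xk{1 + \ln^+\tfrac{\abs{z-\theta(t_0)}}{\kappa}}$.

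In the large-signal case $z \geq 2\kappa$, the time cost so far is $z^{-1} \leq \kappa^{-1}$, and I need to hand off to \cref{lem:TwoLayer_FinalTime}, whose hypothesis requires $\tfrac{1}{4}z \leq \theta(t_1) \leq 3z$. The upper bound follows from $\abs{z-\theta(t_1)} \leq 2z$. For the lower bound I use \cref{lem:TwoLayer_Monotonicity} again: since $\theta(t_0) \geq z$ and $\abs{z - \theta}$ can only decrease while it stays above $\kappa$, the orbit remains in $\dk{\theta \geq z + \kappa} \cup [z - \kappa, z + \kappa]$, so $\theta(t_1) \geq z - \kappa \geq z/2 \geq z/4$ (using $z \geq 2\kappa$). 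Then \cref{lem:TwoLayer_FinalTime} with $\delta$ chosen of order $\kappa$ produces $\abs{z - \theta(t)} \leq 4\kappa$ after an additional time of order $z^{-1} \ln^+\tfrac{\abs{z-\theta(t_0)}}{\kappa}$, which is controlled by $\kappa^{-1}\ln^+\tfrac{\abs{z-\theta(t_0)}}{\kappa}$ since $z \geq \kappa$ and $\abs{z-\theta(t_1)} \leq \abs{z-\theta(t_0)}$ by monotonicity.

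Combining the two cases yields the desired time bound, with any constant mismatch absorbed into the universal constant implicit in the statement. I do not foresee a substantial obstacle: the only nontrivial observation is the monotonicity-based lower bound $\theta(t_1) \geq z/2$ in the large-signal regime, which is immediate from \cref{lem:TwoLayer_Monotonicity}, and the rest is bookkeeping of the two time contributions against the target $\kappa^{-1}\xk{1 + \ln^+\tfrac{\abs{z-\theta(t_0)}}{\kappa}}$.
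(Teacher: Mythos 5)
Your proposal follows essentially the same route as the paper's own proof: first apply \cref{lem:TwoLayer_AppAbove} to reach $\abs{z-\theta(t_1)}\leq 2\max(z,2\kappa)$ at $t_1 = t_0 + \max(z,2\kappa)^{-1}$, then split on whether $z \leq 2\kappa$ (already done) or $z \geq 2\kappa$ (hand off to \cref{lem:TwoLayer_FinalTime} using the monotonicity bound $\abs{z-\theta(t_1)}\leq\max(\abs{z-\theta(t_0)},\kappa)$). Your added verification that $\theta(t_1)\in[z/4,3z]$ (via the monotonicity-based lower bound $\theta(t_1)\geq z-\kappa\geq z/2$) is a detail the paper leaves implicit, but the overall argument and decomposition are identical.
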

\begin{proof}
  \cref{lem:TwoLayer_AppAbove} yields that we have
  \begin{align*}
    \abs{z - \theta(t)} \leq 2\max(z,2\kappa),\quad \forall t \geq t_1 = t_0 + \max(z,2\kappa)^{-1}.
  \end{align*}
  Now, if $z \leq 2\kappa$ then we are done.
  Otherwise, we have $z \geq 2\kappa$ and thus we can further apply \cref{lem:TwoLayer_FinalTime} to get the desired result,
  where we note that the monotonicity also implies $\abs{z - \theta(t_1)} \leq \max(\abs{z - \theta(t_0)}, \kappa)$.

\end{proof}

\begin{corollary}
  \label{cor:TwoLayer_AppBoundSmallKappa}
  Consider the equation \cref{eq:TwoLayer_1d_perturbed} with $z \geq 0$ (a similar result holds for $z \leq 0$).
  Let $t_0 \geq 0$.
  Suppose that there exists some $\kappa > 0$ such that $\abs{h(t)} \leq \kappa,~ \forall t \geq t_0$ and
  $z \geq 2\kappa$.
  Then,
  \begin{align*}
    \abs{z - \theta(t)} \leq \kappa + \delta,\quad \forall t \geq t_0 + \overline{T}^{\mr{app}}(\delta),
  \end{align*}
  where
  \begin{align*}
    \overline{T}^{\mr{app}}(\delta) \leq 4z^{-1}\zk{2 + \frac{1}{2}\ln^+\frac{\abs{z-\theta(t_0)}}{\lambda} + \frac{1}{2}\ln^+ \frac{z}{4\lambda} + \ln^+ \frac{2z}{\delta}}.
  \end{align*}
\end{corollary}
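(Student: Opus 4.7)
The plan is to reduce the corollary to the lemmas that have already been proved in this subsection by concatenating a ``coarse approach'' stage and a ``fine refinement'' stage. The target bound decomposes additively as
\[
\overline{T}^{\mr{app}}(\delta) \;\leq\; \underbrace{4z^{-1}\zk{2 + \tfrac{1}{2}\ln^+\tfrac{\abs{z-\theta(t_0)}}{\lambda} + \tfrac{1}{2}\ln^+\tfrac{z}{4\lambda}}}_{\text{time to reach } \theta \in [z/4,\,3z]}
\;+\; \underbrace{4z^{-1}\ln^+\tfrac{2z}{\delta}}_{\text{time to refine to } \kappa+\delta},
\]
so the strategy is to exhibit an intermediate time $t_1$ at which $\theta(t_1)$ lies in the interval $[z/4,\,3z]$, then invoke \cref{lem:TwoLayer_FinalTime} from $t_1$.

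For the first stage I would split on the sign of $\theta(t_0)-z$. If $\theta(t_0)\leq z$, then $z-M\leq\theta(t_0)\leq z$ with $M=\abs{z-\theta(t_0)}$ and the assumption $z\geq 2\kappa$ lets me apply \cref{lem:TwoLayer_AppBelowSmallKappa}, obtaining $\theta(t_1)\geq z/4$ at time $t_1 = t_0 + 4z^{-1}[2+\tfrac{1}{2}\ln^+(M/\lambda)+\tfrac{1}{2}\ln^+(z/(4\lambda))]$. If instead $\theta(t_0)\geq z$, \cref{lem:TwoLayer_AppAbove} gives $\abs{z-\theta(t)}\leq 2\max(z,2\kappa)=2z$ after time $\max(z,2\kappa)^{-1}=z^{-1}$, hence $\theta(t_1)\leq 3z$; the cost $z^{-1}$ is absorbed into the ``$2$'' in the first bracket. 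In both cases the monotonicity statement \cref{lem:TwoLayer_Monotonicity} ensures the good bound is preserved after $t_1$: once we cross into $\abs{z-\theta}\leq\kappa\leq z/2$ we stay there, so $\theta\in[z/2,3z/2]\subset[z/4,3z]$ for all subsequent times, and otherwise $\abs{z-\theta}$ only decreases, keeping $\theta$ inside $[\theta(t_0)\wedge (z-\kappa),\, \theta(t_0)\vee(z+\kappa)]$, which after the first stage still lies in $[z/4,3z]$.

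For the second stage, with $\theta(t_1)\in[z/4,3z]$ and the same perturbation bound $\abs{h(t)}\leq\kappa$, \cref{lem:TwoLayer_FinalTime} directly yields $\abs{z-\theta(t)}\leq\kappa+\delta$ for every $t\geq t_1+4z^{-1}\ln^+\bigl((\abs{z-\theta(t_1)}-\kappa)/\delta\bigr)$. Since $\abs{z-\theta(t_1)}\leq 2z$ from stage one, this logarithmic term is at most $4z^{-1}\ln^+(2z/\delta)$. Summing the two stage budgets gives exactly the claimed $\overline{T}^{\mr{app}}(\delta)$.

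The only subtlety (and the main obstacle to a clean write-up) is making sure that in the ``$\theta(t_0)\geq z$'' case I can still plug into \cref{lem:TwoLayer_FinalTime}, which requires $\theta(t_1)\in[z/4,3z]$ rather than merely $\abs{z-\theta(t_1)}\leq 2z$. This is handled by noting that once $\abs{z-\theta}$ drops to at most $2z$ and $\theta(t)\geq 0$ throughout, $\theta(t_1)\leq 3z$ automatically, while the lower bound $\theta(t_1)\geq z/4$ follows either from the monotonicity of $\abs{z-\theta}$ (since $\theta$ approaches $z$ from above without crossing $z/4$ before entering the $\kappa$-neighborhood) or is absorbed into a separate sub-case where $\theta$ has already entered the band $[z-\kappa,z+\kappa]$, in which case the conclusion already holds with $\delta=0$. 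Once this bookkeeping is in place, the time budgets assemble mechanically into the stated inequality.
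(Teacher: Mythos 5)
Your proposal is correct and follows exactly the route the paper intends: the paper's own proof is the one-line remark that the corollary follows from \cref{lem:TwoLayer_AppBelowSmallKappa}, \cref{lem:TwoLayer_AppAbove} and \cref{lem:TwoLayer_FinalTime}, and your two-stage argument (coarse approach into $[z/4,3z]$ via the first two lemmas, then refinement via the third, with the monotonicity lemma keeping $\theta$ in the band) is precisely the bookkeeping that remark leaves implicit.
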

\begin{proof}
  It is a consequence of \cref{lem:TwoLayer_AppBelowSmallKappa}, \cref{lem:TwoLayer_AppAbove} and \cref{lem:TwoLayer_FinalTime}.
\end{proof}

\begin{lemma}[Retracting from the negative]
  \label{lem:TwoLayer_RectractNeg}
  Consider the equation \cref{eq:TwoLayer_1d_perturbed} with $z \geq 0$ (a similar result holds for $z \leq 0$).
  Let $t_0 \geq 0$.
  Suppose that there exists some $\kappa > 0$ such that $\abs{h(t)} \leq \kappa,~ \forall t \geq t_0$.
  Then, we have
  \begin{align*}
    \theta(t) \geq - 2 \kappa,\quad \forall t \geq t_0 + \kappa^{-1}.
  \end{align*}
\end{lemma}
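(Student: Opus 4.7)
The plan is to show that whenever $\theta$ lies below $-2\kappa$ the ODE for $\theta$ forces it to strictly increase at a quadratic rate, so it must cross $-2\kappa$ in finite time, and once it crosses, the monotonicity from \cref{lem:TwoLayer_Monotonicity} pins it there. Since $z\ge 0$ and $\abs{h(t)}\le \kappa$, the key observation is that $\theta(t)\le -2\kappa$ implies
\begin{align*}
  z-\theta(t)+h(t)\ \ge\ -\theta(t)-\kappa\ \ge\ -\theta(t)/2\ >\ 0,
\end{align*}
so in particular $\abs{z-\theta(t)}=z-\theta(t)>\kappa$. By \cref{lem:TwoLayer_Monotonicity}(i), $\abs{z-\theta(t)}$ is strictly decreasing whenever $\theta(t)\le -2\kappa$, and by (ii) once $\theta$ reaches a value with $\abs{z-\theta}\le \kappa$ it stays there; in between, a direct sign argument rules out $\theta$ re-entering the region $(-\infty,-2\kappa)$ after leaving it.

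Next I would produce the quantitative time bound. Using $a(t)>0$ from the conservation law \cref{eq:EqTwoLayerP_Conservation} together with AM--GM we have $a^2+\beta^2\ge 2|a\beta|=2|\theta|$. Combined with the lower bound on $z-\theta+h$ above, \cref{eq:TwoLayerEq_Theta} yields, on the set where $\theta\le -2\kappa$,
\begin{align*}
  \dot\theta(t)\ =\ (a^2+\beta^2)(z-\theta+h)\ \ge\ 2|\theta|\cdot(-\theta/2)\ =\ \theta^2.
\end{align*}
Introduce $u=-\theta>0$; then $\dot u\le -u^2$, so $\tfrac{d}{dt}(1/u)\ge 1$, and integrating from $t_0$ gives $1/u(t_0+s)\ge 1/u(t_0)+s\ge s$. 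Hence $u(t_0+s)\le 1/s$, which drops to $2\kappa$ by time $s=1/(2\kappa)\le \kappa^{-1}$.

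Combining the two ingredients: if $\theta(t_0)\ge -2\kappa$, the monotonicity argument immediately gives $\theta(t)\ge -2\kappa$ for all $t\ge t_0$. Otherwise, let $\tau=\inf\{s\ge 0:\theta(t_0+s)\ge -2\kappa\}$; the integral estimate above forces $\tau\le 1/(2\kappa)\le \kappa^{-1}$, and the monotonicity argument prevents re-entry afterwards, giving $\theta(t)\ge -2\kappa$ for all $t\ge t_0+\kappa^{-1}$, as claimed.

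There is no serious obstacle here: the only mild subtlety is to confirm that $a(t)$ never vanishes (so that the sign of $\theta$ matches the sign of $\beta$ and AM--GM is tight enough), which is immediate from $a^2=\lambda+\beta^2\ge\lambda>0$. Everything else is a straightforward application of the one-dimensional perturbation machinery already set up in \cref{lem:TwoLayer_Monotonicity} and the identity \cref{eq:TwoLayerEq_Theta}.
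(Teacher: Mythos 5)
Your proof is correct and follows essentially the same route as the paper's: you derive a quadratic lower bound for $\dot\theta$ on $\{\theta\le -2\kappa\}$ from \cref{eq:TwoLayerEq_Theta} together with $|z-\theta+h|\ge -\theta/2$, integrate to bound the escape time, and invoke \cref{lem:TwoLayer_Monotonicity} to rule out re-entry. The only cosmetic difference is that you bound $a^2+\beta^2\ge 2|\theta|$ by AM--GM to get $\dot\theta\ge \theta^2$, whereas the paper uses $\theta^2(a^{-2}+\beta^{-2})\ge |\theta|$ to get $\dot\theta\ge \tfrac12\theta^2$; both yield the claimed time $\kappa^{-1}$.
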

\begin{proof}
  The proof is similar to that of \cref{lem:TwoLayer_AppAbove}, but now we treat $\theta(t) \leq 0$.
  We first recall \cref{eq:TwoLayerEq_Theta}.
  The fact that $\abs{\theta} \geq \beta^2$ yields
  $\theta^2 (a^{-2} + \beta^{-2}) \geq \abs{\theta}$.
  When $\theta(t) \leq - 2\kappa$, we have $z - \theta(t) - \kappa \geq \theta(t)/2$, and thus
  \begin{align*}
    \dot{\theta} \geq \abs{\theta(t)} (z - \theta(t) - \kappa) \geq \frac{1}{2}\theta^2(t),
  \end{align*}
  which implies that
  \begin{align*}
    \theta(t_0 + s) \geq - \zk{s/2 - \theta(t_0)^{-1}}^{-1},
  \end{align*}
  and thus
  \begin{align*}
    \theta(t_0 + s) \geq -2\kappa,\quad \forall s \geq \kappa^{-1}.
  \end{align*}
\end{proof}

\begin{corollary}[Retracting into $\kappa$]
  \label{cor:TwoLayer_Rectracting}
  Consider the equation \cref{eq:TwoLayer_1d_perturbed} .
  Let $t_0 \geq 0$.
  Suppose that there exists some $\kappa > 0$ such that $\abs{h(t)} \leq \kappa,~ \forall t \geq t_0$.
  Then,
  \begin{align*}
    \abs{z - \theta(t)} \leq 2\max(\abs{z},2\kappa),\quad \forall t \geq t_0 + \kappa^{-1}.
  \end{align*}
\end{corollary}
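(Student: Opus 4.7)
The plan is to derive this corollary by combining \cref{lem:TwoLayer_AppAbove}, \cref{lem:TwoLayer_RectractNeg}, and the monotonicity established in \cref{lem:TwoLayer_Monotonicity}; no new ODE analysis should be required. By symmetry I will assume $z \geq 0$ throughout. My strategy is to first establish the bound $\abs{z - \theta(t)} \leq 2\max(z,2\kappa)$ at the single time $t_1 \coloneqq t_0 + \kappa^{-1}$, and then propagate it forward by monotonicity.

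The verification at $t_1$ splits into cases according to $\theta(t_0)$. If $\theta(t_0) \geq z$, then \cref{lem:TwoLayer_AppAbove} applies directly and yields $\abs{z - \theta(t)} \leq 2\max(z,2\kappa)$ for all $t \geq t_0 + \max(z,2\kappa)^{-1}$; since $\max(z,2\kappa) \geq 2\kappa > \kappa$, this waiting time is at most $\kappa^{-1}$, so the bound already holds on $[t_1,\infty)$. If instead $\theta(t_0) < z$, I distinguish whether $\theta$ attains the value $z$ during $[t_0, t_1]$. If it does not, then \cref{lem:TwoLayer_RectractNeg} gives $\theta(t_1) \geq -2\kappa$, whence $\abs{z - \theta(t_1)} = z - \theta(t_1) \leq z + 2\kappa \leq 2\max(z,2\kappa)$. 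If it does reach $z$ at some $\tau \in [t_0, t_1]$, then $\abs{z - \theta(\tau)} = 0 \leq \kappa$, and part (ii) of \cref{lem:TwoLayer_Monotonicity} forces $\abs{z - \theta(t)} \leq \kappa$ for all $t \geq \tau$, covering $t_1$ in particular.

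To extend the bound from $t_1$ to all $t \geq t_1$, I invoke \cref{lem:TwoLayer_Monotonicity} once more: either $\abs{z - \theta(\cdot)}$ stays above $\kappa$ on $[t_1,\infty)$ and is therefore non-increasing by part (i) (so bounded by its value at $t_1$), or it drops to within $\kappa$ of $z$ at some later time and remains there forever by part (ii). In either scenario $\abs{z - \theta(t)} \leq \max(\kappa, \abs{z - \theta(t_1)}) \leq 2\max(z, 2\kappa)$, which is the desired conclusion.

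I do not anticipate a real obstacle. The only subtle point is the intermediate case where $\theta$ may cross $z$ during $[t_0, t_1]$, and this is handled cleanly by invoking the trapping statement in part (ii) of the monotonicity lemma rather than by redoing an escape-time analysis; all other cases reduce immediately to one of the two previous lemmas.
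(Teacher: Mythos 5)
Your proof is correct and follows essentially the same route as the paper's: a case split on the position of $\theta(t_0)$ relative to $z$, handled by \cref{lem:TwoLayer_AppAbove} and \cref{lem:TwoLayer_RectractNeg} respectively, with \cref{lem:TwoLayer_Monotonicity} used to absorb crossings of $z$ and to propagate the bound forward in time. You are somewhat more explicit than the paper about the intermediate case $0 < \theta(t_0) < z$ and about the forward propagation, but the substance is identical.
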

\begin{proof}
  If $\theta(t_0) \geq z$, we use  \cref{lem:TwoLayer_AppAbove}.
  If $\theta(t_0) \leq 0$, we use \cref{lem:TwoLayer_RectractNeg} and note that $\abs{z - \theta(t)} \leq \abs{z}+\abs{\theta(t)}$.
\end{proof}

\begin{lemma}[Error control]
  \label{lem:TwoLayer_ErrorControl}
  Consider the equation \cref{eq:TwoLayer_1d_perturbed}.
  We have
  \begin{align}
    \label{eq:TwoLayer_ErrorControl_Beta}
      \abs{\beta(t)} &\leq \lambda^{1/2} \exp(\sqrt{2} \int_{0}^t \xk{\abs{h(s)} + \abs{z}}\dd s), \\
      \label{eq:TwoLayer_ErrorControl} 
      \abs{\theta(t)} & \leq \sqrt {2} \lambda \exp(2\sqrt{2} \int_{0}^t \xk{\abs{h(s)} + \abs{z}} \dd s).
  \end{align}
\end{lemma}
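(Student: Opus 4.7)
The plan is to derive an explicit closed form for $a(t)$ and $\beta(t)$ via the decoupling trick $a \pm \beta$, and then to control the argument of the resulting hyperbolic functions using a differential inequality in which the nonlinear feedback term enters with a favorable sign.

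First I would take the sum and difference of the two equations in \cref{eq:TwoLayer_1d_perturbed}, which gives
\begin{align*}
\frac{d}{dt}(a+\beta) = (a+\beta)(z-\theta+h), \qquad \frac{d}{dt}(a-\beta) = -(a-\beta)(z-\theta+h).
\end{align*}
Integrating with $a(0) = \lambda^{1/2}$ and $\beta(0) = 0$ yields $a+\beta = \lambda^{1/2}e^{I(t)}$ and $a-\beta = \lambda^{1/2}e^{-I(t)}$, where $I(t) \coloneqq \int_0^t (z - \theta(s) + h(s))\,\mathrm{d}s$. Hence
\begin{align*}
\beta(t) = \lambda^{1/2}\sinh(I(t)), \quad a(t) = \lambda^{1/2}\cosh(I(t)), \quad \theta(t) = \tfrac{\lambda}{2}\sinh(2I(t)).
\end{align*}

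The next step is to bound $|I(t)|$ by $F(t) \coloneqq \int_0^t(|z|+|h(s)|)\,\mathrm{d}s$. A naive triangle-inequality estimate only gives $|I(t)| \leq F(t) + \int_0^t|\theta(s)|\,\mathrm{d}s$, which is circular since $|\theta|$ itself depends exponentially on $|I|$. To avoid this, I would differentiate $I^2$ and substitute the closed form for $\theta$:
\begin{align*}
\frac{d}{dt}I^2 = 2I(z - \theta + h) = 2I(z+h) - \lambda\, I\sinh(2I).
\end{align*}
The key observation is that $I\sinh(2I) \geq 0$ for every $I \in \R$, since $\sinh$ is odd. Dropping this non-positive term gives $\frac{d}{dt}I^2 \leq 2|I|(|z|+|h|)$, which at points where $|I|>0$ reduces to $\frac{d}{dt}|I| \leq |z|+|h|$. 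Integrating yields $|I(t)| \leq F(t)$.

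Combining with the elementary bound $\sinh(x) \leq e^x/2 \leq e^{\sqrt{2}x}$ for $x\geq 0$, I obtain $|\beta(t)| \leq \lambda^{1/2}\sinh(F(t)) \leq \lambda^{1/2}\exp(\sqrt{2}F(t))$, which is \cref{eq:TwoLayer_ErrorControl_Beta}. For the $|\theta|$ estimate \cref{eq:TwoLayer_ErrorControl}, the cleanest route is to apply the same sinh bound to $\theta(t) = \tfrac{\lambda}{2}\sinh(2I(t))$ directly; alternatively, one can use $|\theta| \leq \sqrt{2}\max(\lambda^{1/2},|\beta|)|\beta|$ from \cref{eq:TwoLayer_BoundTheta} together with the just-proved bound on $|\beta|$. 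I expect the main obstacle to be the second step: one must notice that the feedback of $\theta$ into $\dot{I}$ carries exactly the right sign to be discarded from a Grönwall argument on $I^2$, so that no $\theta$-dependence survives. Without this observation, one ends up with a nonlinear Grönwall inequality with an exponential right-hand side, which does not close. Once the sign observation is in hand, the remaining steps are routine hyperbolic-function manipulations.
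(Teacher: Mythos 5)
Your proof is correct, but it takes a genuinely different route from the paper's. You exploit the exact integrability of the two-layer system: the decoupling $a\pm\beta$ gives the closed form $a=\lambda^{1/2}\cosh I$, $\beta=\lambda^{1/2}\sinh I$, $\theta=\tfrac{\lambda}{2}\sinh(2I)$ with $I(t)=\int_0^t(z-\theta+h)$, and you then close the loop by observing that the feedback term $-\lambda I\sinh(2I)$ in $\tfrac{d}{dt}I^2$ is non-positive, yielding $\abs{I(t)}\leq F(t)\coloneqq\int_0^t(\abs{z}+\abs{h})$. (The only point you gloss over is differentiating $\abs{I}$ at zeros of $I$; regularizing with $\sqrt{I^2+\ep^2}$ makes this routine.) The paper instead never solves the system: it uses the same favorable-sign observation directly on $\dot\beta$ to get the integral inequality $\abs{\beta(t)}\leq\int_0^t a(s)(\abs{z}+\abs{h(s)})\dd s$, and then runs a two-stage Gr\"onwall argument using the conservation law \cref{eq:EqTwoLayerP_Conservation} — bounding $a\leq\sqrt{2}\lambda^{1/2}$ before the escape time $T^{\mr{esc}}$ and $a\leq\sqrt{2}\abs{\beta}$ afterwards. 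Your approach buys exactness and in fact sharper constants ($\abs{\beta}\leq\lambda^{1/2}\sinh(F)\leq\tfrac12\lambda^{1/2}e^{F}$, strictly stronger than \cref{eq:TwoLayer_ErrorControl_Beta}). The paper's approach buys portability: the same two-stage argument is reused essentially verbatim for the depth-$D$ model in \cref{lem:MultiLayer_ErrorControl}, where the $a\pm\beta$ decoupling and the hyperbolic closed form are no longer available, whereas your method is specific to $D=0$.
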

\begin{proof}
  Without loss of generality, we assume that $z \geq 0$.
  We recall the equation \cref{eq:TwoLayer_1d_perturbed} that
  \begin{align*}
    \dot{\beta}(t) = a(t) \zk{z - \theta(t) + h(t)}.
  \end{align*}
  To give a bound of $\abs{\beta}$, we notice that $a(t)$ is always positive and thus $\theta(t)$ has the same sign as $\beta(t)$.
  Now, if $\theta(t) \geq 0$, we have $\abs{z - \theta(t) + h(t)} \leq z + \abs{h(t)}$.
  On the other hand, if $\theta(t) \leq 0$ and thus $\beta(t) \leq 0$, we have $z - \theta(t) + h(t) \geq - \abs{h(t)}$.
%
  Therefore, we conclude that
  \begin{align*}
    \abs{\beta(t)} \leq \int_0^t a(s) \xk{z + \abs{h(t)}} \dd s.
  \end{align*}

  Let us now consider
  \begin{align*}
    T^{\mr{esc}} = \inf\dk{t \geq 0: \abs{\beta(t)} \geq \lambda^{1/2}}.
  \end{align*}
  Then, when $t \leq T^{\mr{esc}}$, \cref{eq:EqTwoLayerP_Conservation} implies that
  $a \leq \sqrt {2}\lambda^{1/2}$, so we have
  Consequently, we have
  \begin{align*}
    \abs{\beta(t)} \leq \sqrt {2} \lambda^{1/2} \int_0^t \xk{\abs{h(s)} + \abs{z}}\dd s, \quad t \in [0,T^{\mr{esc}}],
  \end{align*}
  and hence
  \begin{align*}
    T^{\mr{esc}} \geq \inf \dk{ t \geq 0: \sqrt {2} \int_0^t \xk{\abs{h(s)} + \abs{z}}\dd s = 1}.
  \end{align*}
  Now, after $T^{\mr{esc}}$, we apply $a \leq \sqrt {2}\abs{\beta}$ to get
  \begin{align*}
    \abs{\beta(t)} \leq \lambda^{1/2} + \sqrt {2} \int_{T^{\mr{esc}}}^t \abs{\beta(s)} \xk{z + \abs{h(t)}} \dd s, \quad t \geq T^{\mr{esc}},
  \end{align*}
  and thus
  \begin{align*}
    \abs{\beta(t)} \leq \lambda^{1/2} \exp(\sqrt{2} \int_{T^{\mr{esc}}}^t \xk{\abs{h(s)} + \abs{z}}\dd s), \quad t \geq T^{\mr{esc}}.
  \end{align*}
  Using $\theta = a \beta \leq \sqrt {2}\beta^2$, we get for $t \geq T^{\mr{esc}}$,
  \begin{align*}
    \abs{\theta(t)} \leq \sqrt {2} \lambda \exp(2\sqrt{2} \int_{T^{\mr{esc}}}^t \xk{\abs{h(s)} + \abs{z}} \dd s)
    \leq \sqrt {2} \lambda \exp(2\sqrt{2} \int_{0}^t \xk{\abs{h(s)} + \abs{z}} \dd s).
  \end{align*}
  Finally, we note that this bound is also valid for $t \leq T^{\mr{esc}}$ since at that time $\abs{\theta(t)} \leq \sqrt {2}\lambda^{1/2}$.

\end{proof}


\section{Numerical Simulations}\label{sec:numerical-experiments}

In this section, we provide some numerical simulations to validate the theoretical results.
Let us focus the setting in \cref{example:low-dim-structure} where $\bm{x} \sim \mr{Unif}([-1,1)^d)$ and the eigenfunctions can be chosen as the trigonometric functions.
For training, we discretize the gradient flow by gradient descent with a small step size.

To illustrate the training dynamics of the adaptive kernel method, we consider the $d=2$ case.
We choose the truth function as $f^*(\bm{x}) = \cos(7.5 \pi x_1)$, where $\bm{x} = (x_1,x_2)$,
Then, it can be computed that the coefficients of $f^*$ over the eigen-functions are given by
$\ang{f^*, \cos(k\pi x_1)} = \frac{30}{(4k^2 -225) \pi}$,
$\ang{f^*, \sin(k\pi x_1)} = 0$.
We choose the eigenvalues as $\lambda_{\bm{m}} = (1+\norm{\bm{m}}_2^2)^{-3/2}$.
With $n=400$ samples, we plot the generalization error curve and the evolution of the coefficients in \cref{fig:ComparisonShort}.
From the plot of generalization error, we can see that the adaptive kernel method outperforms the fixed kernel method.
Moreover, from the plot of coefficients,
we can observe that the fixed kernel method learns noise in unrelated directions, while the adaptive kernel method focuses on the true direction,
which demonstrates the advantage of the adaptive kernel method in learning the signal's structure.
This is also in line with the theoretical result in \cref{prop:EigLearn}.

\begin{figure}[hp]
  \centering

  \subfigure{
    \begin{minipage}[t]{0.5\linewidth}
      \centering
      \includegraphics[width=1.\linewidth]{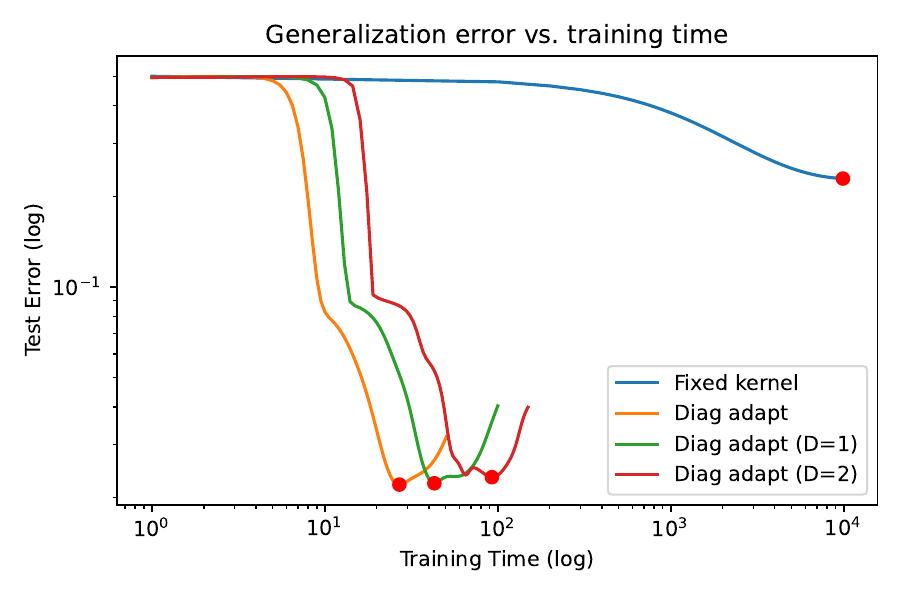}
    \end{minipage}%
  }%
  \subfigure{
    \begin{minipage}[t]{0.5\linewidth}
      \centering
      \includegraphics[width=1.\linewidth]{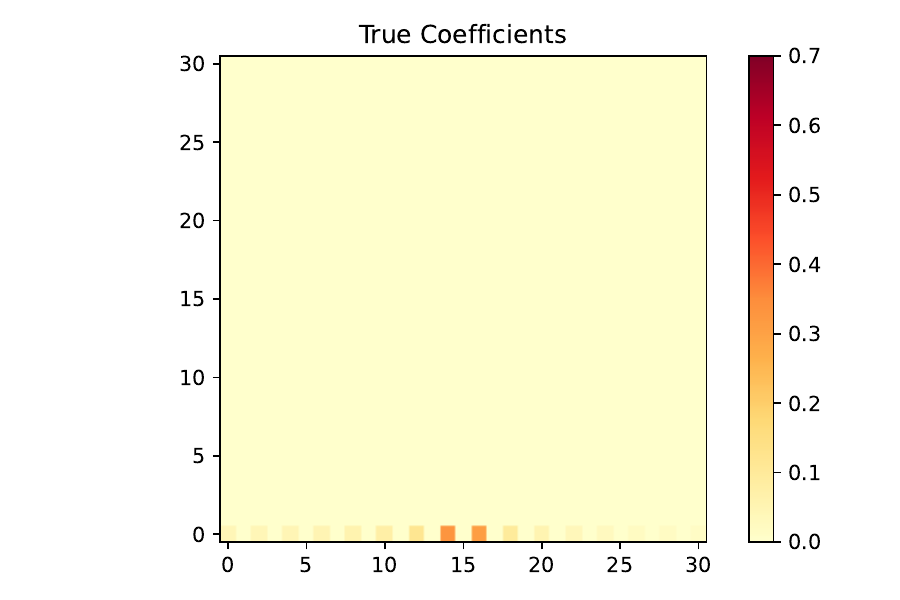}
    \end{minipage}%
  }%

  \subfigure{
    \begin{minipage}[t]{1\linewidth}
      \centering
      \includegraphics[width=1.\linewidth]{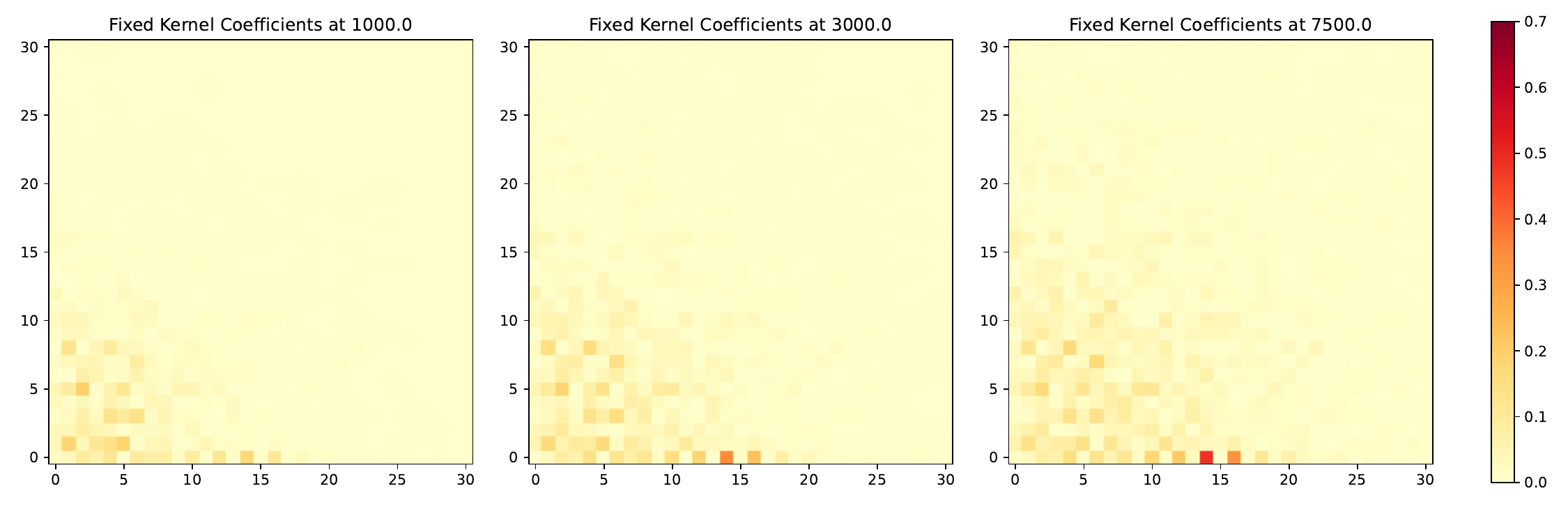}
    \end{minipage}%
  }%

  \subfigure{
    \begin{minipage}[t]{1\linewidth}
      \centering
      \includegraphics[width=1.\linewidth]{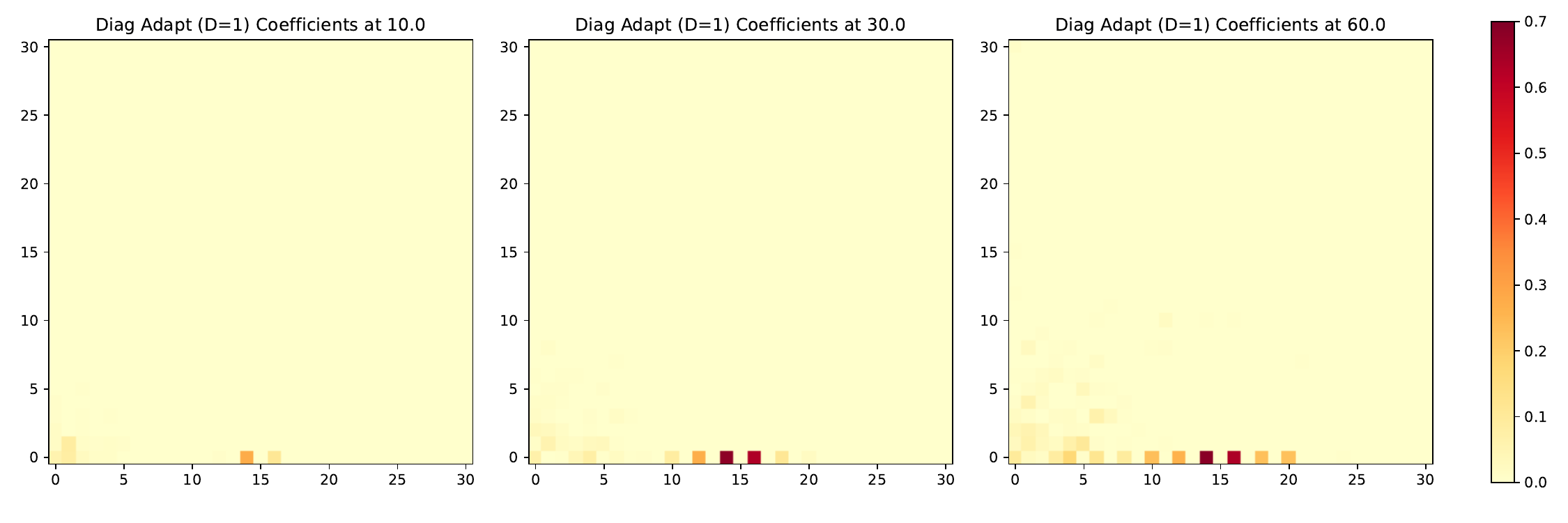}
    \end{minipage}%
  }%

  \centering
  \caption{
    Comparison of the fixed kernel method and the diagonal adaptive kernel method over a $d=2$ example with low-dimensional structure.
    The upper left figure shows the generalization error curve.
    The upper right figure shows the true coefficients and the lower two rows show the evolution of the coefficients for the fixed kernel method and the diagonal adaptive kernel method.
    Since the indices of the eigenfunctions are 2-dimensional, we plot the coefficients in a 2D grid.
  }

  \label{fig:ComparisonShort}
\end{figure}

We conduct further experiments to compare the generalization error of the fixed kernel method and the diagonal adaptive kernel method
in various settings.
See \cref{fig:ComparisonRate}.
The results show that the adaptive kernel method can achieve significantly better generalization error than the fixed kernel method.
Also, we can observe the benefit of introducing additional layers in the adaptive kernel method.
Overall, the numerical simulations support our theoretical results.

\begin{figure}[hp]
  \centering

  \subfigure{
    \begin{minipage}[t]{0.33\linewidth}
      \centering
      \includegraphics[width=1.\linewidth]{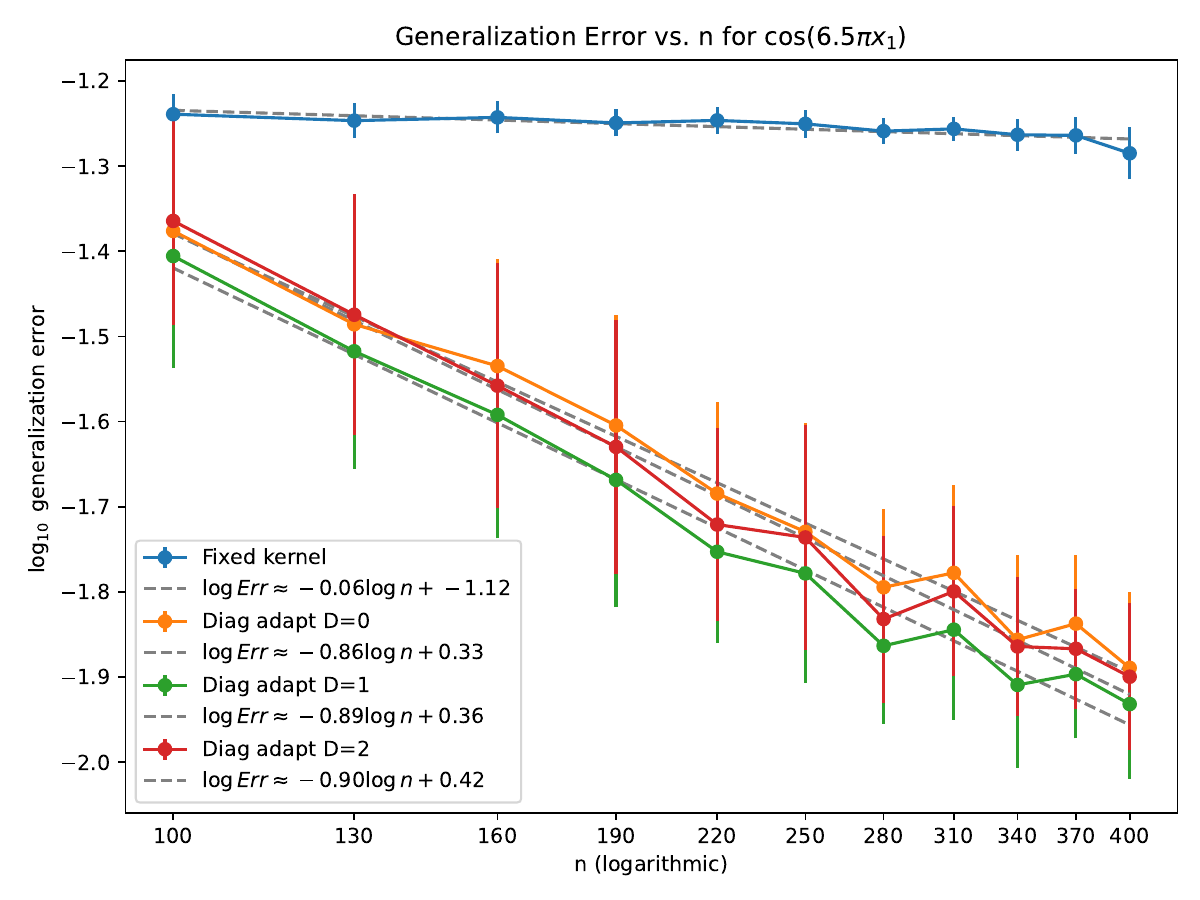}
    \end{minipage}%
  }%
  \subfigure{
    \begin{minipage}[t]{0.33\linewidth}
      \centering
      \includegraphics[width=1.\linewidth]{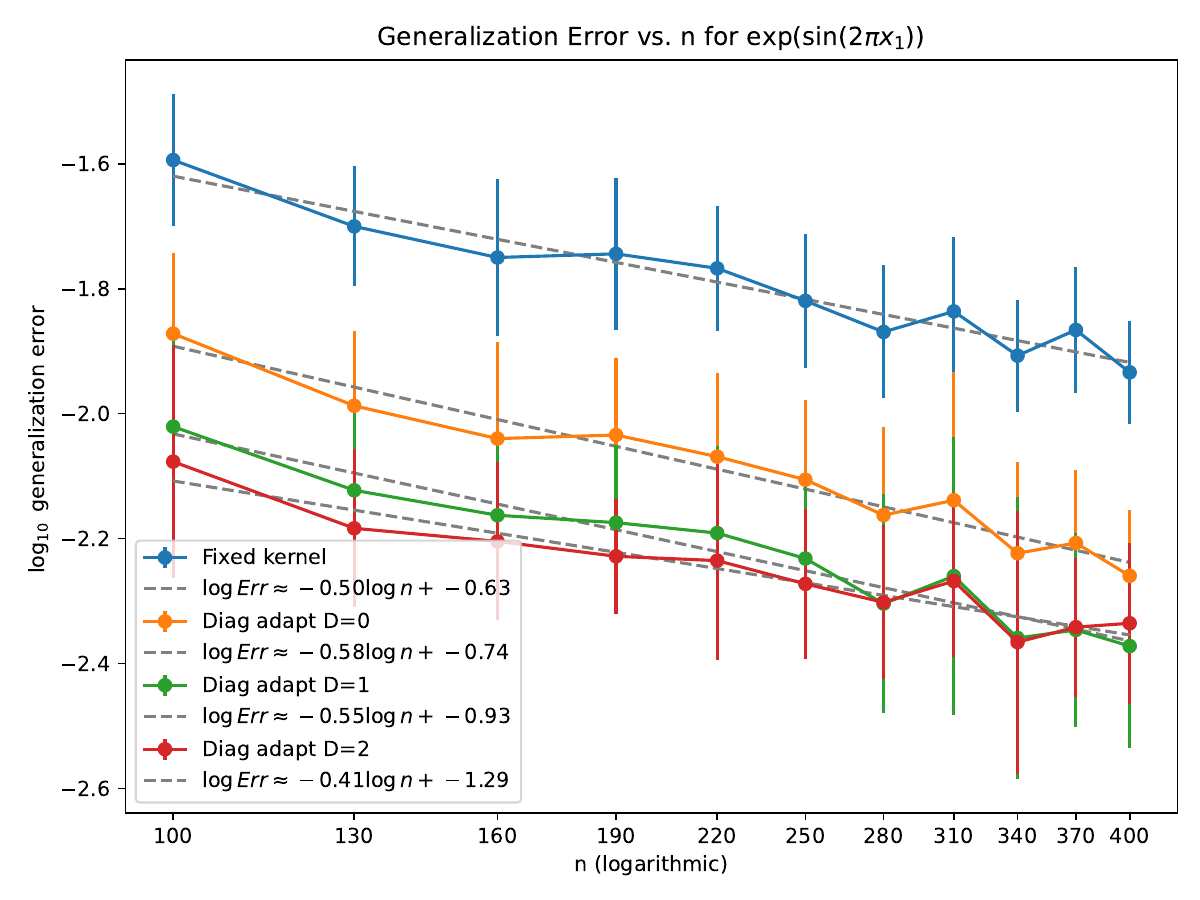}
    \end{minipage}%
  }%
  \subfigure{
    \begin{minipage}[t]{0.33\linewidth}
      \centering
      \includegraphics[width=1.\linewidth]{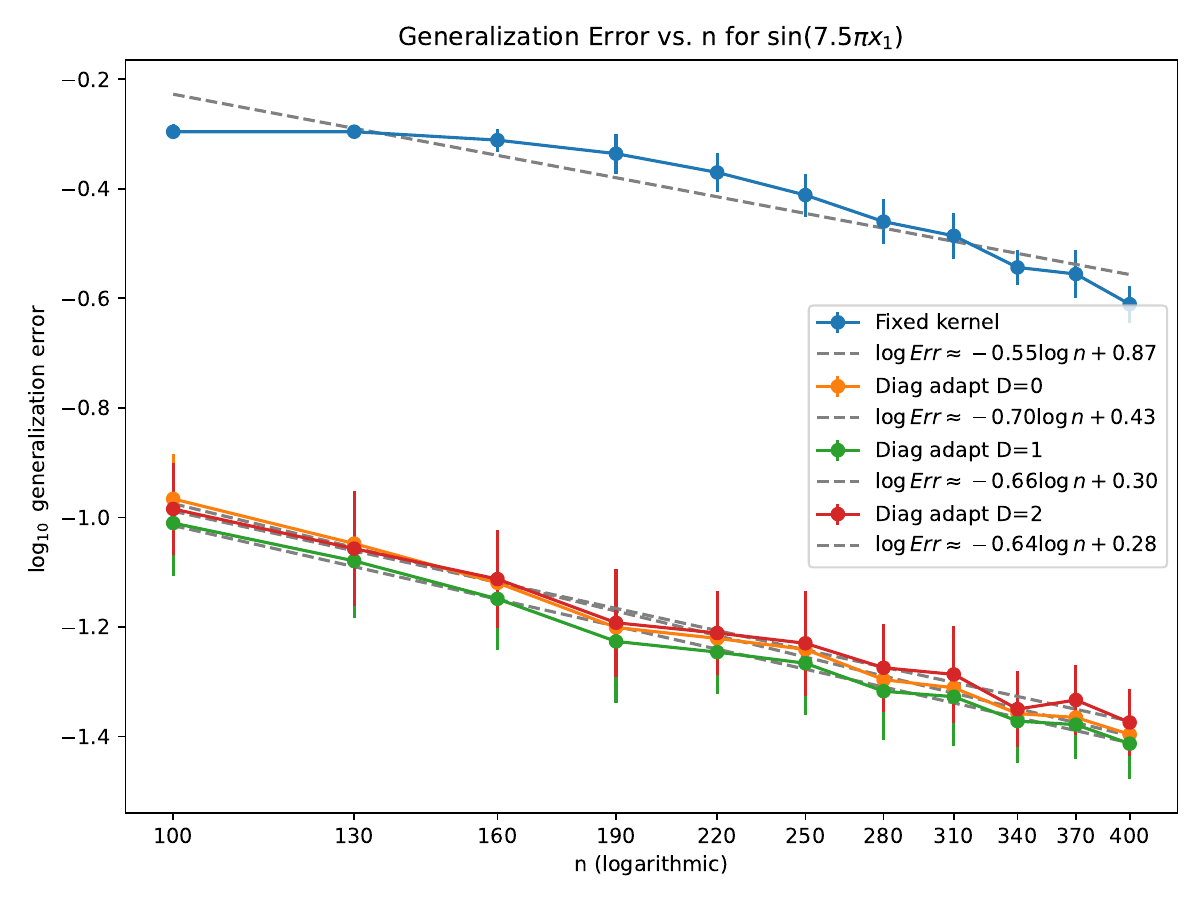}
    \end{minipage}%
  }%

  \subfigure{
    \begin{minipage}[t]{0.33\linewidth}
      \centering
      \includegraphics[width=1.\linewidth]{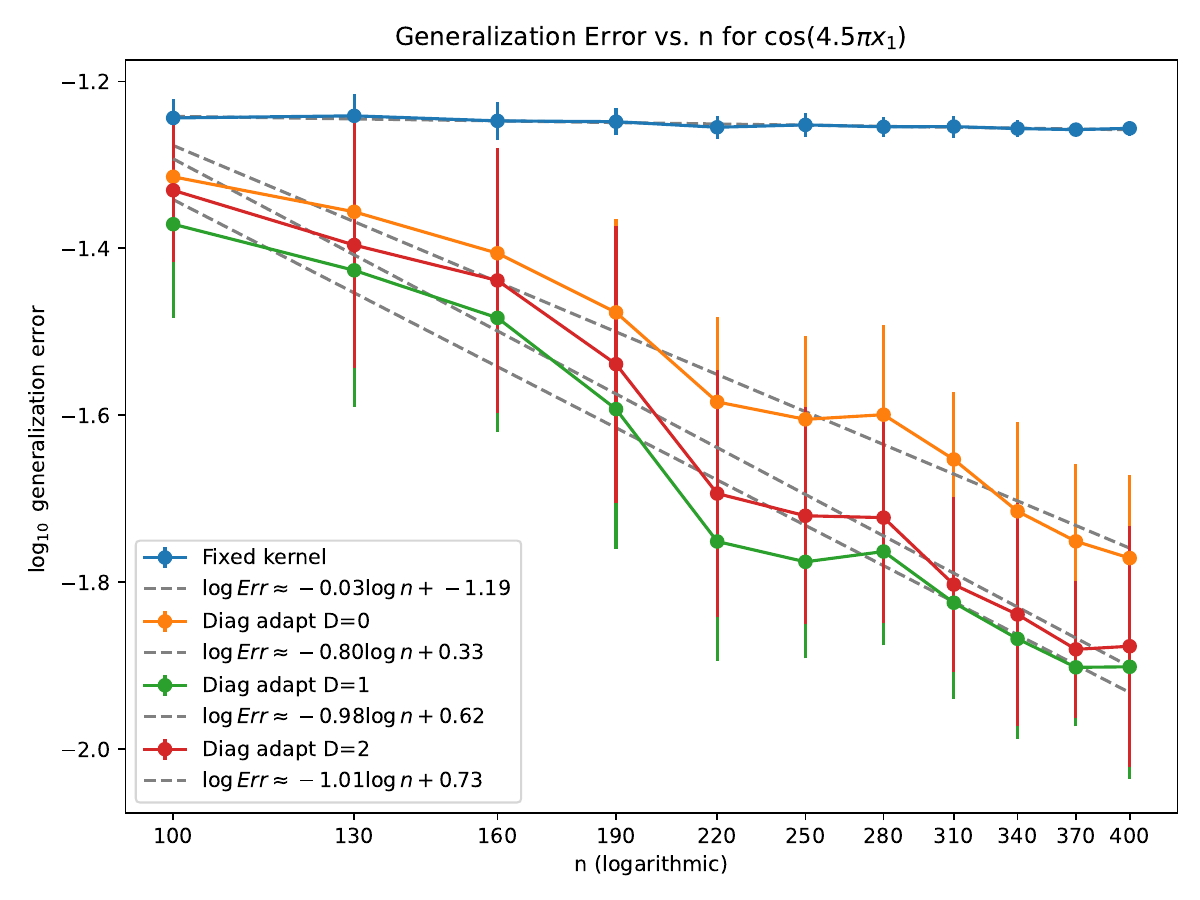}
    \end{minipage}%
  }%
  \subfigure{
    \begin{minipage}[t]{0.33\linewidth}
      \centering
      \includegraphics[width=1.\linewidth]{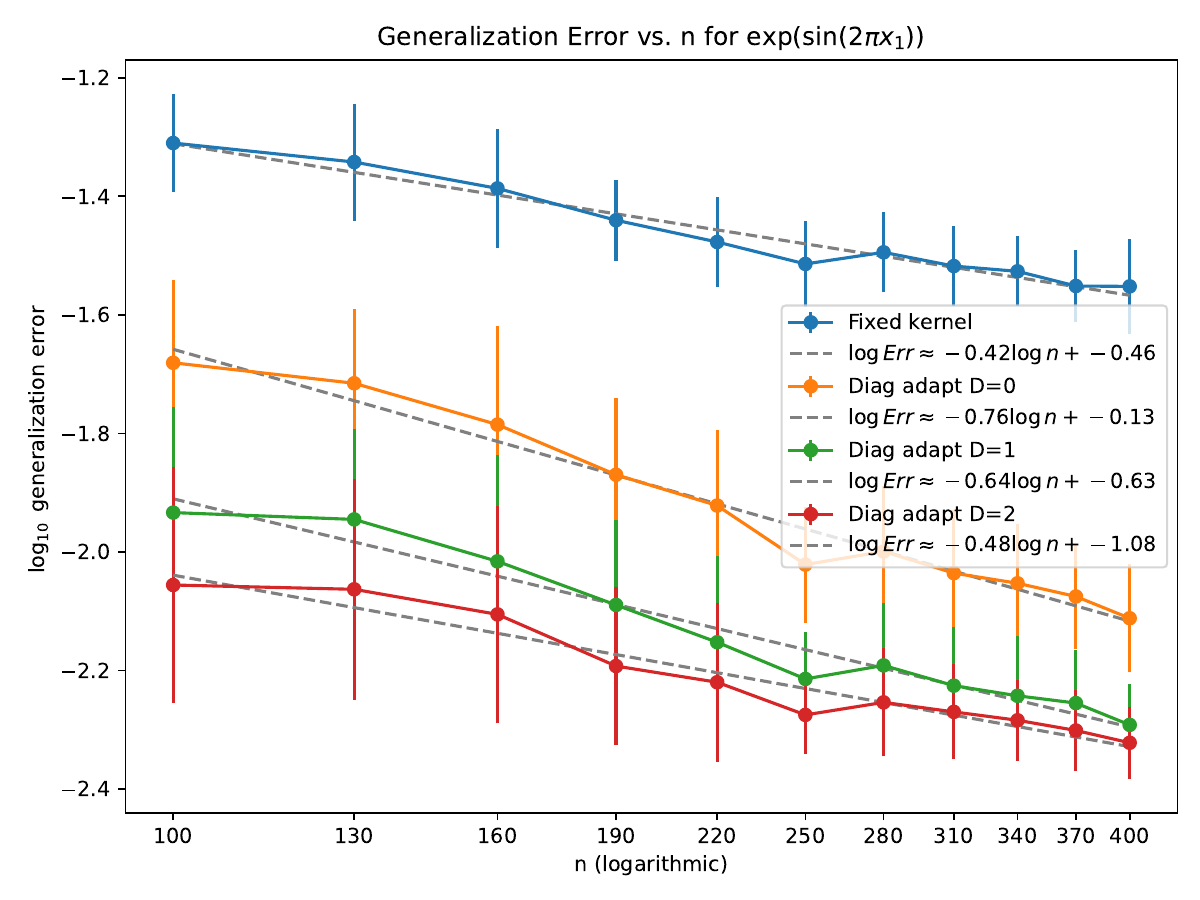}
    \end{minipage}%
  }%
  \subfigure{
    \begin{minipage}[t]{0.33\linewidth}
      \centering
      \includegraphics[width=1.\linewidth]{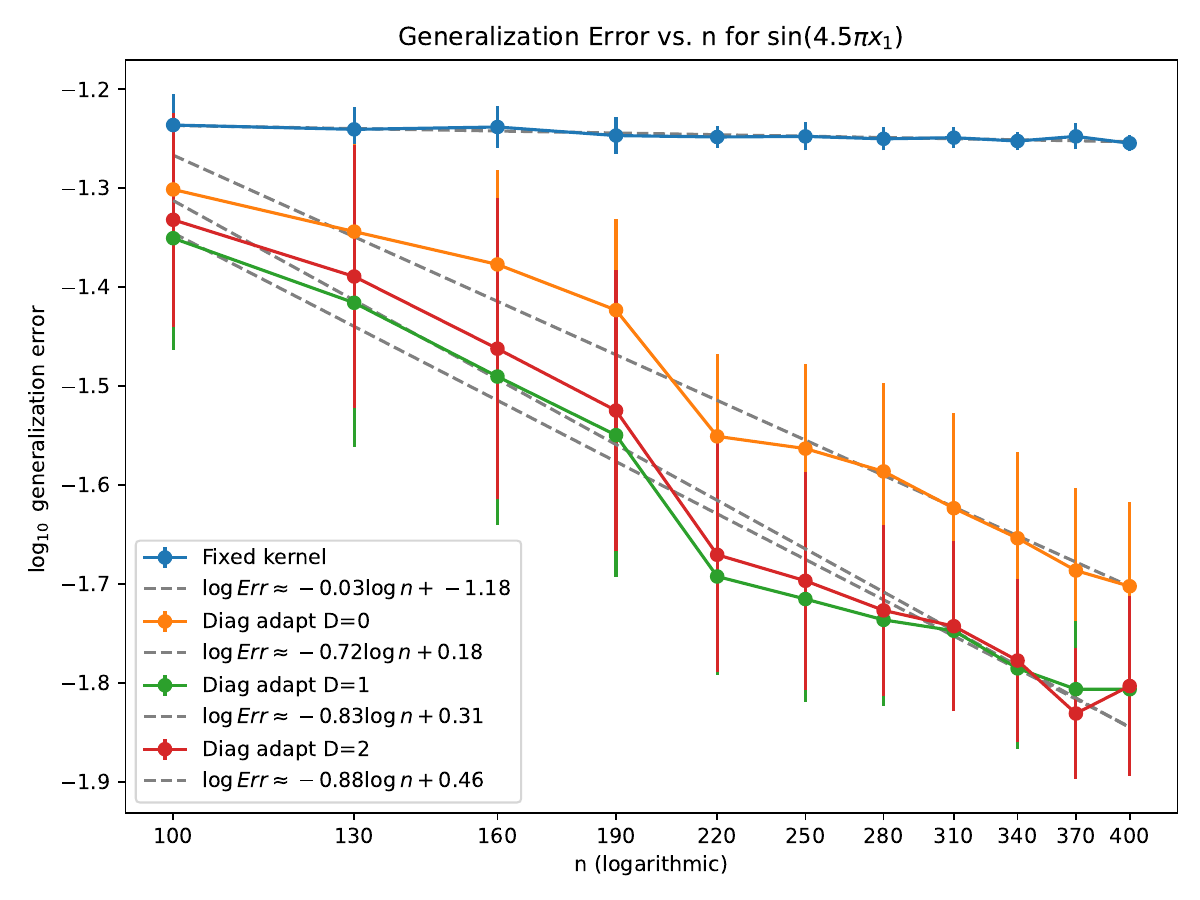}
    \end{minipage}%
  }%

  \subfigure{
    \begin{minipage}[t]{0.33\linewidth}
      \centering
      \includegraphics[width=1.\linewidth]{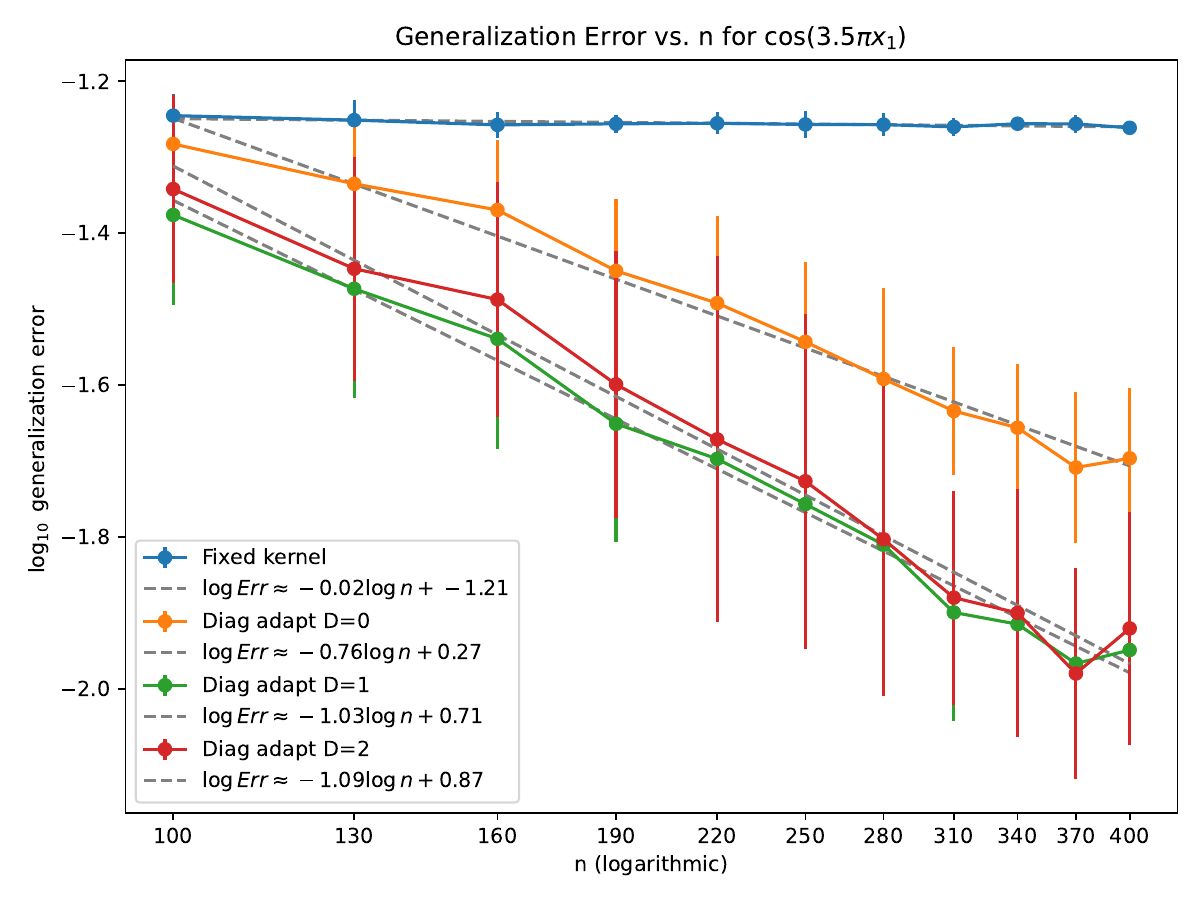}
    \end{minipage}%
  }%
  \subfigure{
    \begin{minipage}[t]{0.33\linewidth}
      \centering
      \includegraphics[width=1.\linewidth]{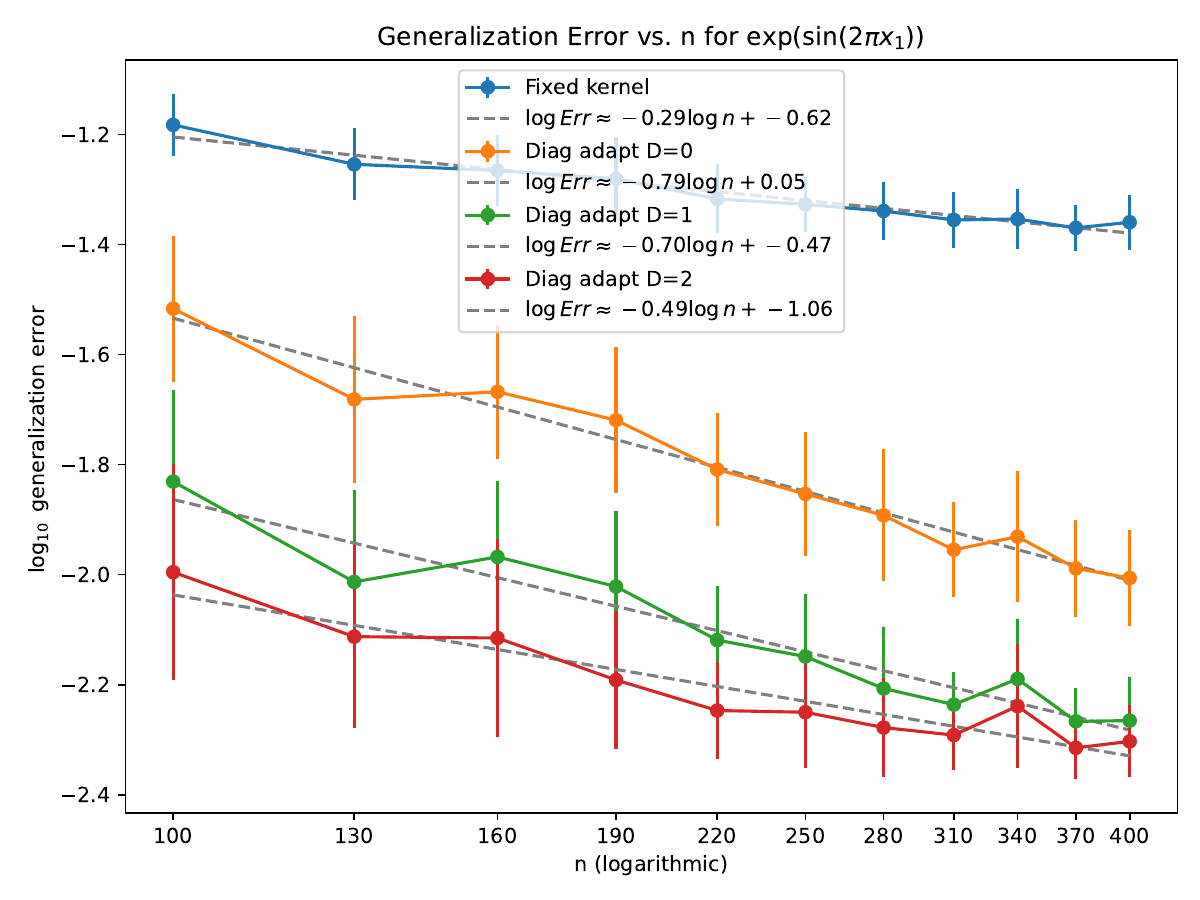}
    \end{minipage}%
  }%
  \subfigure{
    \begin{minipage}[t]{0.33\linewidth}
      \centering
      \includegraphics[width=1.\linewidth]{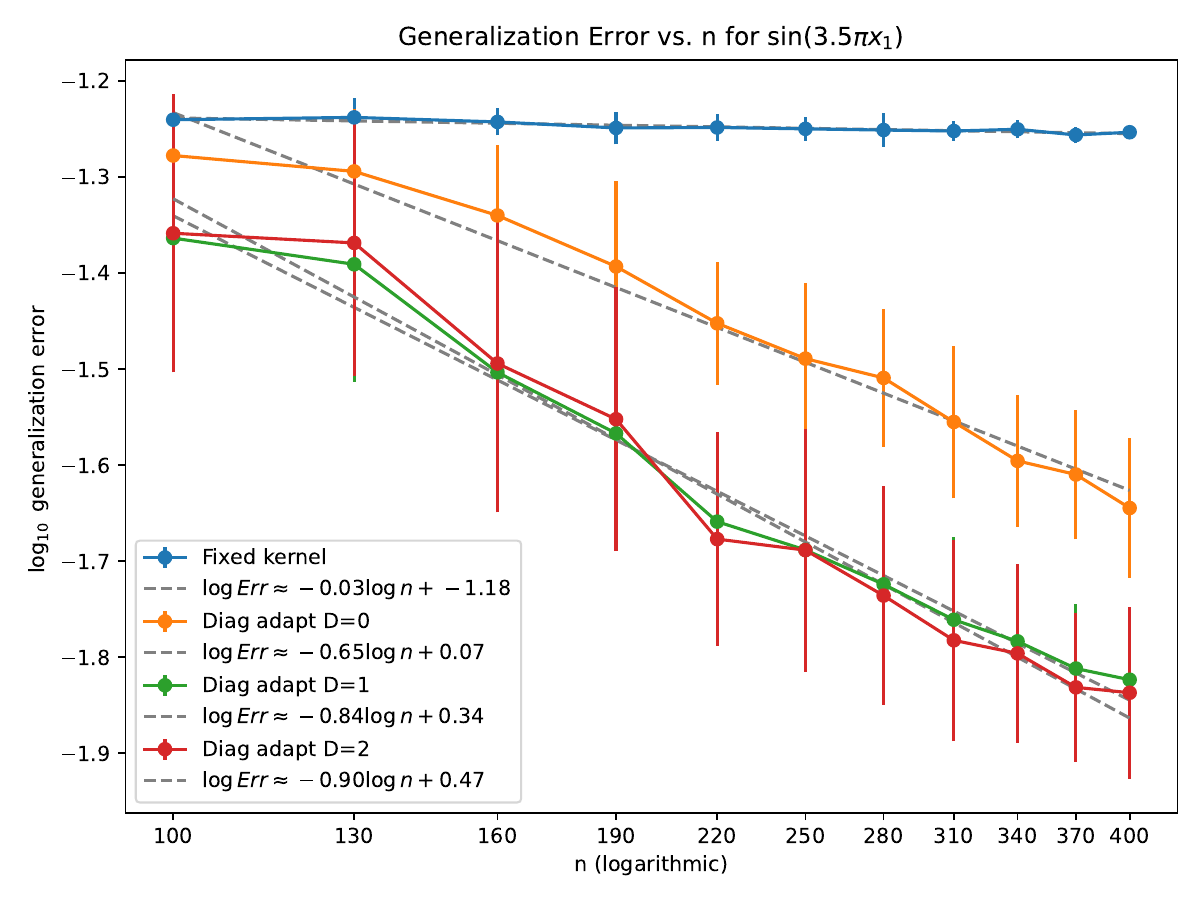}
    \end{minipage}%
  }%

  \centering
  \caption{
  Comparison of the fixed kernel method and the diagonal adaptive kernel method in various settings.
  The three rows correspond to dimension $d=2,3,4$ in \cref{example:low-dim-structure} respectively.
  The error bars represent the standard deviation over 32 runs.
  }

  \label{fig:ComparisonRate}
\end{figure}

%
%


\acks{Qian Lin's research was supported in part by the National Natural Science Foundation of China (Grant 92370122, Grant 11971257).}



\appendix


\section{Proof for Multilayer Model}
\label{sec:proofmulti}

In this section, we provide the proof for the multilayer model,
which resembles the proof for the two-layer model in \cref{sec:proofs} but is more complicated.

Let us also denote $E(x) = \xk{e_j(x)}_{j\geq 1}$ as a column vector.
In this section, we denote
\begin{align}
  \bm{\theta}(t) = \xk{\theta_j}_{j \geq 1} = \bm{a}(t) \odot \bm{b}^{D}(t),
\end{align}
where we use $\bm{b}^{D}$ to denote the element-wise power of $\bm{b}$.
Then, similar to the single-layer model, we have
\begin{align*}
  f(t) = \ang{\Phi_{\bm{a}(t),\bm{b}(t)}(x), \bm{\beta}(t)} =
  \ang{E(x), \bm{a}(t)\odot \bm{b}^{\odot D} \odot \bm{\beta}(t)} = \ang{E(x), \bm{\theta}(t)} = E(x)^{\T} \bm{\theta}(t),
\end{align*}
Then, following \cref{subsec:Proof_ExplicitForm}, we can derive the gradient flow of the multilayer model:
\begin{align}
  \label{eq:MultiLayer_GradientFlow}
  \left\{
  \begin{aligned}
    \dot{\bm{a}} &= \bm{b}^D \odot \bm{\beta} \odot \bm{\Delta},\quad a_j(0) = \lambda_j^{\hf} \\
    \dot{\bm{b}} &= D \bm{a}\odot \bm{b}^{D-1}\odot \bm{\beta} \odot \bm{\Delta},\quad b_j(0) = b_0\\
    \dot{\beta} &= \bm{a}\odot \bm{b}^D \odot \bm{\Delta},\quad \bm{\beta}(0) = \bm{0},
  \end{aligned}
  \right.
\end{align}
where
\begin{align*}
  \Delta \coloneqq \hat{\Sigma}(\bm{\theta}^* - \bm{\theta}) + \bm{r},\quad
  \hat{\Sigma} = \frac{1}{n} \sum_{i=1}^n E(x_i) E(x_i)^\T, \quad
  \bm{r} = \frac{1}{n} \sum_{i=1}^n \ep_i E(x_i).
\end{align*}

\subsection{Equation Analysis for Multilayer Model}
Let us consider the perturbed one-dimensional equation for the multilayer model.
Let $h(t): \R_{\geq 0} \to \R$ be a continuous perturbation function.
Let us consider now the following perturbed one-dimensional dynamics:
\begin{align}
  \label{eq:MultiLayer1d_Perturbed}
  \left\{
  \begin{aligned}
    \dot{a} &= b^D \beta \xk{z - \theta +h},\quad a(0) = \lambda^{1/2} > 0,\\
    \dot{b} &= D a b^{D-1} \beta \xk{z - \theta +h},\quad b(0) = b_0,\\
    \dot{\beta} &= a b^D \xk{z - \theta +h}, \quad \beta(0) = 0,
  \end{aligned}
  \right.
\end{align}
where $\theta(t) = a(t) b^D(t) \beta(t)$.

\noindent\textit{Conservation quantities.}
It is easy to see that
\begin{align*}
  \dv{t} a^2 = \frac{1}{D} \dv{t} b^2 = \dv{t} \beta^2 = 2 a b^D \beta \xk{z - \theta + h}.
\end{align*}
Consequently, we have
\begin{align}
  \label{eq:MultiLayer_Conservation}
  a^2(t) - \beta^2(t) \equiv \lambda,\qquad
  b^2(t) - D \beta^2(t) \equiv b_0^2.
\end{align}
Using this, we see that
\begin{align}
  \label{eq:MultiLayer_PositivenessAB}
  a(t) = \xk{\beta^2(t) + \lambda}^{1/2} > 0,\qquad b(t) = \xk{D \beta^2(t) + b_0^2}^{1/2} > 0.
\end{align}
Using these conservation quantities, we can prove the following estimations in terms of $\beta$:
\begin{align}
  \label{eq:MultiLayerEq_Estimations}
  \begin{aligned}
    & \max(\lambda^{\hf},\abs{\beta}) \leq a \leq \sqrt {2} \max(\lambda^{\hf},\abs{\beta}),\\
    & \max(b_0,\sqrt{D} \abs{\beta}) \leq b \leq \sqrt {2} \max(b_0,\sqrt{D} \abs{\beta}),
  \end{aligned}
\end{align}
which also implies that $\abs{\theta} = \abs{ab^D\beta} \geq D^{D/2} \abs{\beta}^{D+2}$.

\noindent\textit{The evolution of $\theta$.}
It is direct to compute that
\begin{align}
  \label{eq:MultiLayerEq_Theta}
  \begin{aligned}
    \dot{\theta} &= \dot{a} b^D \beta + a D b^{D-1} \dot{b} \beta + a b^D \dot{\beta} \\
    &= \zk{(b^D \beta)^2 + (D a b^{D-1} \beta)^2 + (a b^D)^2} (z - \theta + h) \\
    &= \theta^2 (a^{-2} + D b^{-2} + \beta^{-2}) (z - \theta + h).
  \end{aligned}
\end{align}
The following estimation of the equation will be frequently used in the proof.
Using \cref{eq:MultiLayerEq_Estimations}, we have
\begin{align*}
  \abs{\theta} = \abs{ab^D\beta} \geq D^{D/2} \abs{\beta}^{D+2}, \quad \Longrightarrow \quad
  \abs{\beta} \leq \xk{D^{-D/2} \abs{\theta}}^{1/(D+2)}.
\end{align*}
Therefore,
\begin{align}
  \label{eq:MultiLayerEq_ThetaEstimation}
  \theta^2 (a^{-2} + D b^{-2} + \beta^{-2}) \geq \theta^2 \beta^{-2}
  \geq D^{-\frac{D}{D+2}} \abs{\theta}^{\frac{2D+2}{D+2}}.
\end{align}

\begin{lemma}[Perturbation bound]
  Consider the equation \cref{eq:MultiLayer1d_Perturbed}.
  Let $t_0 \geq 0$.
  Suppose there exists some $\kappa > 0$ such that $\abs{h(t)} \leq \kappa$ for all $t \geq t_0$.
  Then, for $t \geq t_0$,
  \begin{enumerate}[(i)]
    \item $\abs{z - \theta(t)}$ is decreasing provided that $\abs{z - \theta(t)} \geq \kappa$.
    \item Once $\abs{z - \theta(t_1)} \leq \kappa$ for some $t_1 \geq t_0$,
    we have $\abs{z - \theta(t)} \leq \kappa$ for all $t \geq t_1$.
  \end{enumerate}
\end{lemma}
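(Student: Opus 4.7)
The plan is to mirror the argument of \cref{lem:TwoLayer_Monotonicity} for the multilayer case, since the dynamics \cref{eq:MultiLayer1d_Perturbed} admit the same sign structure once we exhibit $\theta$ as evolving along the driving term $z - \theta + h$. Starting from \cref{eq:MultiLayerEq_Theta}, I would rewrite
\begin{align*}
  \dot{\theta} = \bigl[(b^D\beta)^2 + (D a b^{D-1}\beta)^2 + (a b^D)^2\bigr] (z - \theta + h),
\end{align*}
which makes manifest that the prefactor is nonnegative. By the conservation identities \cref{eq:MultiLayer_Conservation} and \cref{eq:MultiLayer_PositivenessAB} we have $a(t) \geq \lambda^{1/2} > 0$ and $b(t) \geq b_0 > 0$ for all $t$, so in particular $(a b^D)^2 > 0$. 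Hence the bracketed coefficient is strictly positive, and $\dot{\theta}(t)$ has the same sign as $z - \theta(t) + h(t)$.

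For part (i), I would split into the two cases $\theta(t) \leq z - \kappa$ and $\theta(t) \geq z + \kappa$, as in the two-layer proof. In the first case the assumption $\abs{h(t)} \leq \kappa$ forces $z - \theta(t) + h(t) \geq 0$, hence $\dot{\theta}(t) \geq 0$, and since $\theta(t) \leq z$ this gives $\dv{t}\abs{z - \theta(t)} = -\dot{\theta}(t) \leq 0$. The second case is symmetric. Thus $\abs{z - \theta(t)}$ is nonincreasing on the set where it exceeds $\kappa$.

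Part (ii) then follows by a standard continuity/contradiction argument from (i). Suppose $\abs{z - \theta(t_1)} \leq \kappa$ but $\abs{z - \theta(t_2)} > \kappa$ for some $t_2 > t_1$. Let $t^\star = \inf\{t \in [t_1,t_2] : \abs{z - \theta(t)} > \kappa\}$; by continuity $\abs{z - \theta(t^\star)} = \kappa$, and on an interval $(t^\star, t^\star + \delta)$ we would have $\abs{z - \theta(t)} > \kappa$. But (i) applied at such a $t$ says $\abs{z - \theta(t)}$ must be nonincreasing there, giving $\abs{z - \theta(t)} \leq \kappa$, a contradiction.

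I do not anticipate a genuine obstacle: the only point requiring care is checking that $b(t)$ remains strictly positive so that $b^D(t) > 0$ is well defined with unambiguous sign, which follows directly from the conservation law $b^2(t) = D\beta^2(t) + b_0^2 \geq b_0^2 > 0$ together with $b(0) = b_0 > 0$ and continuity of $b(t)$. Everything else is a routine transcription of the two-layer argument.
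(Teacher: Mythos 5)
Your proof is correct and follows the same route as the paper, which simply defers to the two-layer argument of \cref{lem:TwoLayer_Monotonicity}. You correctly transcribe that argument to \cref{eq:MultiLayer1d_Perturbed}, and your use of the additive form $(b^D\beta)^2 + (Dab^{D-1}\beta)^2 + (ab^D)^2$ together with the conservation laws $a(t)\geq\lambda^{1/2}>0$, $b(t)\geq b_0>0$ to justify strict positivity of the prefactor is exactly the right (and slightly cleaner) way to carry the two-layer sign argument over to the multilayer setting.
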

\begin{proof}
  The same as \cref{lem:TwoLayer_Monotonicity}.
\end{proof}

\begin{lemma}[Approaching from below, large $\kappa$]
  \label{lem:MultiLayer_AppBelowLargeKappa}
  Consider the equation \cref{eq:MultiLayer1d_Perturbed} with $z \geq 0$ (a similar result holds for $z \leq 0$).
  Let $t_0 \geq 0$.
  Suppose that there exists some $\kappa > 0$ such that $\abs{h(t)} \leq \kappa,~ \forall t \geq t_0$ and
  there exists some $M\geq 0$ such that
  \begin{align*}
    z - M \leq \theta(t_0) \leq z.
  \end{align*}
  Then, we have
  \begin{align*}
    \abs{z - \theta(t)} \leq 2\kappa,\quad \forall t \geq t_0 + \overline{T}^{\mr{app}},
  \end{align*}
  where
  \begin{align}
    \overline{T}^{\mr{app}} \leq 4\zk{D^{D/2}\kappa R^D}^{-1} \ln \frac{R}{r},

  \end{align}
  and $r = \min(\lambda^{1/2},b_0/\sqrt {D})$, $R = \max(\lambda^{1/2},b_0/\sqrt {D})$.
\end{lemma}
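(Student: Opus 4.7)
The plan is to mimic, phase by phase, the escape argument used in the proof of \cref{lem:TwoLayer_AppBelowLargeKappa}, but now relying on the pair of conservation identities \cref{eq:MultiLayer_Conservation} and the envelopes \cref{eq:MultiLayerEq_Estimations}. By the multilayer perturbation bound stated just above, on any interval during which $|z-\theta|>\kappa$ the quantity $|z-\theta|$ is strictly monotone toward $z$, so it suffices to upper bound the hitting time $T^{\mr{app}}$ of $\theta=z-2\kappa$; on that interval $z-\theta+h\geq\kappa$ and hence
\begin{align*}
  \dot\beta \;\geq\; \kappa\, a\, b^{D}, \qquad a \;\geq\; \max(\lambda^{\hf},|\beta|), \qquad b \;\geq\; \max(b_0,\sqrt{D}\,|\beta|).
\end{align*}

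I would partition the motion of $|\beta(t)|$ by the two thresholds $r=\min(\lambda^{\hf},b_0/\sqrt{D})$ and $R=\max(\lambda^{\hf},b_0/\sqrt{D})$, which are exactly the points where the envelope of $ab^{D}$ changes shape. This yields three stages: (i) a sign-adjusting stage $\beta\in[\beta(t_0),-r]$, where the bound $\beta(t_0)\geq-(D^{-D/2}M)^{1/(D+2)}$ obtained from $|\theta(t_0)|\leq M$ and \cref{eq:MultiLayerEq_Estimations} gives, after applying $ab^{D}\geq D^{D/2}|\beta|^{D+1}$ for $|\beta|\geq R$ and $ab^{D}\geq D^{D/2}R^{D}|\beta|$ for $r\leq|\beta|\leq R$, a time contribution of order $(\kappa D^{D/2}R^{D})^{-1}[1+\ln^{+}(|\beta(t_0)|/r)]$; (ii) a growth stage $|\beta|\in[r,R]$, in which the estimate $ab^{D}\geq|\beta|D^{D/2}R^{D}$ (in the orientation $R=b_0/\sqrt{D}$) yields the linear ODE $\dot\beta\geq\kappa D^{D/2}R^{D}\beta$ and hence a time $(\kappa D^{D/2}R^{D})^{-1}\ln(R/r)$; (iii) a self-sustained stage $|\beta|\geq R$, where $ab^{D}\geq D^{D/2}|\beta|^{D+1}$ gives the blow-up ODE $\dot\beta\geq\kappa D^{D/2}\beta^{D+1}$ with finite hitting time of order $(\kappa D^{D/2}R^{D})^{-1}$; after which the estimate \cref{eq:MultiLayerEq_ThetaEstimation} carries $\theta$ past $z-2\kappa$ essentially instantaneously. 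Summing the three contributions and absorbing universal constants into the prefactor $4$ gives the claimed $\overline{T}^{\mr{app}}\leq 4[D^{D/2}\kappa R^{D}]^{-1}\ln(R/r)$.

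The main obstacle I anticipate is Stage~(ii) in the opposite orientation $R=\lambda^{\hf}$: there the envelope is only $ab^{D}\geq R\cdot D^{D/2}|\beta|^{D}$, so $\dot\beta\geq\kappa D^{D/2}R\beta^{D}$ is polynomial rather than exponential, and its naive integration produces a term of order $r^{1-D}/[(D-1)\kappa D^{D/2}R]$ instead of the target $\ln(R/r)/[\kappa D^{D/2}R^{D}]$. To reconcile both orientations within the same bound, I would re-examine the ODE using the sharper identity $b^{2}=b_0^{2}+D\beta^{2}$, so that $b^{D}$ accelerates along the trajectory, and split $[r,R]$ dyadically into intervals $[R\cdot 2^{-i-1},R\cdot 2^{-i}]$, showing that each contributes at most a universal multiple of $(\kappa D^{D/2}R^{D})^{-1}$ and thereby reconstituting the logarithm. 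A secondary subtlety is Stage~(i) when $M$ is of constant order: the envelope gives $|\beta(t_0)|\leq(D^{-D/2}M)^{1/(D+2)}$, so the $\ln^{+}(|\beta(t_0)|/r)$ contribution is $O(\ln(M/\lambda))$, which under the initialization $b_0\asymp n^{-1/(2(D+2))}$ in \cref{thm:EigenvalueDeepGD} is comparable to $\ln(R/r)$ and can be absorbed into the prefactor. Once these two points are settled the rest of the argument is a direct translation of the two-layer proof.
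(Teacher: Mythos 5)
Your overall strategy is the same as the paper's: reduce to bounding the hitting time of $\theta=z-2\kappa$ via the monotonicity lemma, then track $\beta$ through stages delimited by the two thresholds $r$ and $R$, using the conservation identities \cref{eq:MultiLayer_Conservation} and the envelopes \cref{eq:MultiLayerEq_Estimations}. Two gaps remain. First, your stage partition skips the origin-crossing interval $\beta\in[-r,r]$: there every $|\beta|$-dependent lower bound on $ab^{D}$ degenerates, and one must fall back on the constant bound $ab^{D}\geq\lambda^{\hf}b_0^{D}$ to cross it (the paper's stage $T^{\mr{neg},1}\le s\le T^{\mr{esc},1}$, contributing $2r(\kappa\lambda^{\hf}b_0^{D})^{-1}$). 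This is routine to add, but without it the argument stalls at $\beta=0$.

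The second gap is the one you flagged yourself, and your proposed repair does not work. In the orientation $R=\lambda^{\hf}$, $r=b_0/\sqrt{D}$, on the shell $|\beta|\in[2^{-i-1}R,2^{-i}R]$ the best available bound is $ab^{D}\gtrsim \lambda^{\hf}D^{D/2}|\beta|^{D}$ (the identity $b^{2}=b_0^{2}+D\beta^{2}$ gives nothing sharper here, since $b_0$ is the \emph{smaller} threshold), so the shell of length $2^{-i-1}R$ costs time of order $2^{(i+1)(D-1)}\zk{\kappa D^{D/2}R^{D}}^{-1}$. For $D\geq 2$ these contributions grow geometrically in $i$ and sum to $(R/r)^{D-1}\zk{\kappa D^{D/2}R^{D}}^{-1}=\zk{\kappa D^{D/2}Rr^{D-1}}^{-1}$, not to the logarithm; the dyadic decomposition only reconstitutes $\ln(R/r)$ when $D=1$. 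The paper handles this orientation differently: it splits into the two cases $\lambda^{\hf}\lessgtr b_0/\sqrt{D}$ and, in this one, integrates the power-law ODE $\dot\beta\geq\kappa\lambda^{\hf}D^{D/2}\abs{\beta}^{D}$ in closed form (via \cref{prop:ODE_Power}), asserting a bound with no logarithmic factor at all for $D\geq2$. Your instinct that this case is delicate is well founded — a direct integration of that ODE in fact yields $\zk{(D-1)\kappa D^{D/2}\lambda^{\hf}r^{D-1}}^{-1}$, which you should compare carefully against the stated $\overline{T}^{\mr{app}}$ — but as written your dyadic argument does not close the case $D\geq2$, so the proof is incomplete there.
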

\begin{proof}
  The proof is similar to \cref{lem:TwoLayer_AppBelowLargeKappa} but is more complicated due to the $b$ term.
  Denoting
  \begin{align*}
    r = \min(\lambda^{1/2},b_0/\sqrt {D}),\quad R = \max(\lambda^{1/2},b_0/\sqrt {D}),
  \end{align*}
  let us further define
  \begin{align*}
    T^{\mr{neg},2} &= \inf \dk{ s\geq 0 : \beta(t_0 + s) \geq -R},\quad
    T^{\mr{neg},1} = \inf \dk{ s\geq 0 : \beta(t_0 + s) \geq -r},\\
    T^{\mr{esc},1} &= \inf \dk{ s\geq 0 : \beta(t_0+s) \geq r},\quad
    T^{\mr{esc},2} = \inf \dk{ s\geq 0 : \beta(t_0+s) \geq R},\\
    T^{\mr{app}} &= \inf \dk{ s\geq 0 : \abs{z -\theta(t_0+s)} \leq 2\kappa},
  \end{align*}
  and it suffices to bound $T^{\mr{app}}$ supposing that
  \begin{align*}
    0 < T^{\mr{neg},2} < T^{\mr{neg},1} < T^{\mr{esc},1} < T^{\mr{esc},2} < T^{\mr{app}}.
  \end{align*}
  We also recall that
  \begin{align}
    \label{eq:Proof_MultiLayer_AppBound}
    \dot{\beta}(t) = a(t) b^D(t) \zk{z - \theta(t) + h(t)} \geq \kappa a(t)b^D(t) ,\qq{for} t \leq T^{\mr{app}}.
  \end{align}

  \noindent \textit{Stage $0 \leq s \leq T^{\mr{neg},2}$}\quad
  Before $T^{\mr{neg},2}$, we have $\beta(t) \leq -R$ and thus \cref{eq:MultiLayerEq_Estimations} gives
  \begin{align*}
    z -M \leq \theta(t_0) \leq - D^{\frac{D}{2}} 2^{\frac{D+1}{2}} \abs{\beta(t_0)}^{D+2},\quad \Longrightarrow
    \beta(t_0) \geq - \xk{D^{-\frac{D}{2}} 2^{-\frac{D+1}{2}} M}^{\frac{1}{D+2}}
  \end{align*}
  Moreover, we combine \cref{eq:Proof_MultiLayer_AppBound} and \cref{eq:MultiLayerEq_Estimations} that $a \geq \abs{\beta}$ and $b \geq \sqrt {D}\abs{\beta}$ to get
  \begin{align*}
    \dot{\beta}(t_0 +s) \geq  D^{\frac{D}{2}} \kappa \abs{\beta(t_0 +s)}^{D+1}, \qq{for} s \in [0,T^{\mr{neg,2}}],
  \end{align*}
  so
  \begin{align*}
    \abs{\dot{\beta}(t_0 +s)} \leq \zk{\abs{\beta(t_0)}^{-D} + D D^{\frac{D}{2}} \kappa s}^{-\frac{1}{D}}, \qq{for} s \in [0,T^{\mr{neg,2}}],
  \end{align*}
  which implies
  \begin{align*}
    T^{\mr{neg},2} \leq \xk{D^{\frac{D+2}{2}} R^D \kappa}^{-1}.
  \end{align*}

  \noindent \textit{Stage $T^{\mr{neg},2} \leq s \leq T^{\mr{neg},1}$}\quad
  After $T^{\mr{neg},2}$, we consider the two cases of $\lambda^{\hf} \leq b_0/\sqrt {D}$ and $\lambda^{\hf} \geq b_0/\sqrt {D}$.
  We can use the alternative lower bounds in \cref{eq:MultiLayerEq_Estimations} to get
  \begin{align*}
    \dot{\beta}(t_0 +s) \geq \kappa b_0^D \abs{\beta(t_0 +s)},
  \end{align*}
  or
  \begin{align*}
    \dot{\beta}(t_0 +s) \geq  \kappa \lambda^{\hf} D^{\frac{D}{2}} \abs{\beta(t_0 +s)}^D.
  \end{align*}

  Therefore, for the first case, we have
  \begin{align*}
    T^{\mr{neg},1} - T^{\mr{neg},2} \leq \xk{\kappa b_0^D}^{-1} \ln \frac{R}{r}.
  \end{align*}
  For the second case, we have
  \begin{align*}
    T^{\mr{neg},1} - T^{\mr{neg},2} \leq
    \begin{cases}
      \xk{\kappa \lambda^{\hf}}^{-1} \ln \frac{R}{r},\qq{if} D = 1,\\
      \zk{(D-1)\kappa D^{\frac{D}{2}} \lambda^{\frac{D}{2}} }^{-1},\qq{if} D \geq 2.
    \end{cases}
  \end{align*}
  Combining these two cases, we have
  \begin{align*}
    T^{\mr{neg},1} - T^{\mr{neg},2} \leq \xk{\kappa (D^{\hf} R)^D}^{-1} \ln \frac{R}{r}.
  \end{align*}

  \noindent \textit{Stage $T^{\mr{neg},1} \leq s \leq T^{\mr{esc},1}$}\quad
  Let us now consider the bound
  \begin{align*}
    \dot{\beta}(t_0 +s) \geq \kappa \lambda^{\hf} b_0^D,
  \end{align*}
  so that
  \begin{align*}
    T^{\mr{esc},1} - T^{\mr{neg},1} \leq 2 r \xk{\kappa \lambda^{\hf} b_0^D}^{-1}.
  \end{align*}

  \noindent \textit{Stage $T^{\mr{esc},1} \leq s \leq T^{\mr{esc},2}$}\quad
  This stage is very similar to the stage of $T^{\mr{neg},2} \leq s \leq T^{\mr{neg},1}$ and we have
  \begin{align*}
    T^{\mr{neg},1} - T^{\mr{neg},2} \leq \xk{\kappa (D^{\hf} R)^D}^{-1} \ln \frac{R}{r}.
  \end{align*}

  \noindent \textit{Stage $T^{\mr{esc},2} \leq s \leq T^{\mr{app}}$}\quad
  This stage resembles the stage $0 \leq s \leq T^{\mr{neg},2}$ and we can obtain that
  \begin{align*}
    \beta(t_0 + T^{\mr{neg},2} + s_1) \geq \zk{\beta(t_0 + T^{\mr{neg},2}) - D^{\frac{D+2}{2}} \kappa s_1}^{-1/D},
  \end{align*}
  and thus
  \begin{align*}
    T^{\mr{app}} - T^{\mr{esc},2} \leq \xk{D^{\frac{D+2}{2}} R^D \kappa}^{-1}.
  \end{align*}

  Finally, we notice that the dominating term is $\xk{\kappa (D^{\hf} R)^D}^{-1} \ln (R/r)$.

\end{proof}

\begin{lemma}[Signal time from below, small $\kappa$]
  \label{lem:MultiLayer_AppBelowSmallKappa}
  Consider the same setting as in \cref{lem:MultiLayer_AppBelowSmallKappa}.
  Assume further that $z \geq 2\kappa$.
  Then,
  \begin{align*}
    \theta(t) \geq z/4,\quad \forall t \geq t_0 + \overline{T}^{\mr{sig}},
  \end{align*}
  where
  \begin{align*}
    \overline{T}^{\mr{sig}} = 4  \zk{D^{D/2} z R^D}^{-1} \ln \frac{R}{r},
  \end{align*}
  and $r = \min(\lambda^{1/2},b_0/\sqrt {D})$, $R = \max(\lambda^{1/2},b_0/\sqrt {D})$.
\end{lemma}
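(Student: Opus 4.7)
The plan is to mimic the structure of the proof of \cref{lem:TwoLayer_AppBelowSmallKappa}, adapting it to the multilayer dynamics exactly as \cref{lem:MultiLayer_AppBelowLargeKappa} generalizes \cref{lem:TwoLayer_AppBelowLargeKappa}. First I would define the hitting time
\[
  T^{\mr{sig}} = \inf\dk{s \geq 0 : \theta(t_0+s) \geq z/4}.
\]
On $[0, T^{\mr{sig}}]$, since $\abs{h(t)} \leq \kappa$ and $z \geq 2\kappa$, we have the uniform lower bound
\[
  z - \theta(t) + h(t) \;\geq\; \frac{3}{4}z - \kappa \;\geq\; \frac{z}{4},
\]
which plays the role of the bound $\kappa$ that drove the escape-type argument in the large-$\kappa$ case. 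Before $T^{\mr{sig}}$ the dynamics satisfy $\dot{\beta}(t) \geq (z/4)\,a(t)b^D(t)$, completely analogous to \cref{eq:Proof_MultiLayer_AppBound} with $\kappa$ replaced by $z/4$.

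Next I would run the same five-stage decomposition used in the proof of \cref{lem:MultiLayer_AppBelowLargeKappa}, namely the stages delimited by the thresholds $\beta = -R$, $-r$, $r$, $R$, where $r = \min(\lambda^{\hf}, b_0/\sqrt{D})$ and $R = \max(\lambda^{\hf}, b_0/\sqrt{D})$. In each stage, the lower bounds on $a$ and $b$ from \cref{eq:MultiLayerEq_Estimations} combine with the force bound $z/4$ to give exactly the estimates obtained in \cref{lem:MultiLayer_AppBelowLargeKappa}, except that every occurrence of $\kappa$ becomes $z/4$. As before, the two logarithmic stages (crossing $[-R,-r]$ and $[r,R]$) dominate, each contributing at most
\[
  \bigl(\tfrac{z}{4}\, (D^{\hf} R)^D\bigr)^{-1}\ln \tfrac{R}{r},
\]
while the other three stages contribute only constant (in $R/r$) terms that are absorbed into the numerical prefactor. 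Summing gives the claimed bound $\overline{T}^{\mr{sig}} \lesssim [D^{D/2} z R^D]^{-1}\ln(R/r)$, and the monotonicity content of \cref{lem:TwoLayer_Monotonicity} (applied at level $\kappa < z/4$) ensures that once $\theta$ crosses $z/4$ it cannot drop back below it.

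The main obstacle I anticipate is book-keeping rather than a genuine technical difficulty: one must verify that replacing $\kappa$ by $z/4$ does not break the hypothesis $\theta(t) \leq z/4$ used to maintain the force lower bound throughout all five stages, and that the numerical constant in the final bound comes out as $4$ rather than a larger number. The first point is handled by the very definition of $T^{\mr{sig}}$ together with continuity of $\theta$; the second requires observing, as in the large-$\kappa$ analysis, that the two logarithmic stages give $2[D^{D/2}(z/4) R^D]^{-1}\ln(R/r) = 8[D^{D/2} z R^D]^{-1}\ln(R/r)$, while the remaining non-logarithmic stages are smaller by a factor of at least $\ln(R/r) \geq 1$ (which can be assumed without loss of generality, since otherwise the lemma is trivial). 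A small tightening of the $r$-to-$R$ crossing bound, or simply absorbing the lower-order stages into a single constant, then yields the stated prefactor~$4$.
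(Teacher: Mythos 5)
Your proposal matches the paper's proof, which is exactly the one-line reduction: define $T^{\mr{sig}}$ as you do, observe the lower bound $z - \theta(t) + h(t) \geq \tfrac{3}{4}z - \kappa \geq \tfrac{z}{4}$ on $[t_0, T^{\mr{sig}}]$, and then rerun the five-stage argument of \cref{lem:MultiLayer_AppBelowLargeKappa} with $\kappa$ replaced by $z/4$. The constant-tracking concern you raise (getting $8$ rather than $4$ from the two logarithmic stages) is real but equally present in the paper's own bookkeeping; since the constant is absorbed into $C_{D,1}, C_{D,2}$ in \cref{cor:MultiLayer_AppBoundSmallKappa} it is immaterial.
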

\begin{proof}
  The proof resembles that of \cref{lem:MultiLayer_AppBelowLargeKappa}, but we use a different bound for the term:
  \begin{align*}
    z - \theta(t) + h(t) \geq \frac{3}{4}z - \kappa \geq \frac{1}{4}z,\qq{for} t \in [t_0, T^{\mr{sig}}],
  \end{align*}
  where
  \begin{align*}
    T^{\mr{sig}} = \inf \dk{ s\geq 0 : \theta(t_0+s) \geq z/4},
  \end{align*}
  so the results follow by replacing $\kappa$ with $z/4$.
\end{proof}

\begin{lemma}[Approaching from above]
  \label{lem:MultiLayer_AppAbove}
  Consider the equation \cref{eq:MultiLayer1d_Perturbed} with $z \geq 0$ (a similar result holds for $z \leq 0$).
  Let $t_0 \geq 0$.
  Suppose that there exists some $\kappa > 0$ such that $\abs{h(t)} \leq \kappa,~ \forall t \geq t_0$ and $\theta(t_0) \geq z$.
  Then, denoting $m = 2\max(z,2\kappa)$, we have
  \begin{align*}
    \abs{z - \theta(t)} \leq m,\quad \forall t \geq t_0 + \frac{D+2}{D+1} D^{\frac{D}{D+2}} m^{-\frac{2D+2}{D+2}}.
  \end{align*}
\end{lemma}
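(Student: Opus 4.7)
\noindent\textbf{Proof proposal for \cref{lem:MultiLayer_AppAbove}.}
The plan is to mimic the two-layer argument (\cref{lem:TwoLayer_AppAbove}) using the $\theta$-evolution \cref{eq:MultiLayerEq_Theta} together with the lower bound \cref{eq:MultiLayerEq_ThetaEstimation}, and then integrate the resulting scalar ODE inequality. By the monotonicity lemma (the multilayer analog of \cref{lem:TwoLayer_Monotonicity} stated just above), once $|z-\theta(t)| \leq m = 2\max(z,2\kappa)$ it remains so for all later times. Since $\theta(t_0) \geq z$, it suffices to bound the first time $T$ at which $\theta(t_0+T) = z+m$, assuming $\theta(t_0) > z+m$ (otherwise we are done at $s=0$).

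First I would verify a clean lower bound on $\theta - z - \kappa$ on $[t_0, t_0+T]$. On this interval $\theta \geq z+m$, and from $m=2\max(z,2\kappa) \geq z + 2\kappa$ we obtain $\theta \geq 2(z+\kappa)$, hence $\theta - z - \kappa \geq \theta/2$. Combined with $|h(t)| \leq \kappa$ this yields $z - \theta + h \leq -\theta/2$. Plugging into \cref{eq:MultiLayerEq_Theta} and using the positivity of $a^{-2}+Db^{-2}+\beta^{-2}$, this gives
\begin{align*}
  \dot\theta(t) \;\leq\; -\tfrac{1}{2}\,\theta^{3}(t)\bigl(a^{-2}(t)+Db^{-2}(t)+\beta^{-2}(t)\bigr).
\end{align*}
Now I apply \cref{eq:MultiLayerEq_ThetaEstimation}, which says $\theta^2(a^{-2}+Db^{-2}+\beta^{-2}) \geq D^{-D/(D+2)}|\theta|^{(2D+2)/(D+2)}$. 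Multiplying by $\theta$ gives
\begin{align*}
  \dot\theta(t) \;\leq\; -\tfrac{1}{2}\,D^{-D/(D+2)}\,\theta(t)^{(3D+4)/(D+2)}\quad\text{for }t\in[t_0,t_0+T].
\end{align*}

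Finally I integrate this one-variable inequality. Setting $\alpha=(3D+4)/(D+2)$ so that $\alpha-1 = (2D+2)/(D+2)$, the substitution $u=\theta^{1-\alpha}$ gives $\dot u \geq (\alpha-1)\cdot\tfrac{1}{2}D^{-D/(D+2)}=\tfrac{D+1}{D+2}D^{-D/(D+2)}$, so integrating from $t_0$ and dropping the non-negative initial value $u(t_0)$ yields
\begin{align*}
  \theta(t_0+T)^{-(2D+2)/(D+2)} \;\geq\; \tfrac{D+1}{D+2}\,D^{-D/(D+2)}\,T.
\end{align*}
Inverting, $\theta(t_0+T) \leq z+m$ is guaranteed as soon as $T \geq (z+m)^{-(2D+2)/(D+2)}\bigl[\tfrac{D+1}{D+2}D^{-D/(D+2)}\bigr]^{-1}$. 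Using $z+m \geq m$ together with the negativity of the exponent $-(2D+2)/(D+2)$ lets us upper-bound this by $\frac{D+2}{D+1}D^{D/(D+2)}m^{-(2D+2)/(D+2)}$, which is the stated time. Then monotonicity propagates $|z-\theta(t)| \leq m$ to all later $t$.

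The routine but slightly delicate step is the bookkeeping of the exponents in the final integration (matching $\alpha-1=(2D+2)/(D+2)$ exactly rather than losing a factor by using a weaker lower bound on $z-\theta+h$, such as the constant $3m/4$). Everything else is algebraic and parallels the two-layer lemma closely, so I do not foresee a genuine structural obstacle.
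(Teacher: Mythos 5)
Your proposal is correct and follows essentially the same route as the paper: bound $z-\theta+h\leq-\theta/2$ before the hitting time of $z+m$, combine with \cref{eq:MultiLayerEq_ThetaEstimation} to get $\dot\theta\leq-\tfrac12 D^{-D/(D+2)}\theta^{(3D+4)/(D+2)}$, and integrate. The only cosmetic difference is that you carry out the power-ODE integration by hand where the paper invokes \cref{prop:ODE_Power}.
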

\begin{proof}
  The proof is similar to \cref{lem:TwoLayer_AppAbove}.
  Defining
  \begin{align*}
    T^{\mr{sig}} = \inf \dk{ s \geq 0 : \theta(t_0 + s) \leq z+2\max(z,2\kappa)},
  \end{align*}
  we have
  \begin{align*}
    \frac{1}{2}\theta(t_0 + s) \geq \frac{1}{2} z + \max(z,2\kappa) \geq z + \kappa,\qq{for} s \in [0,T^{\mr{sig}}],
  \end{align*}
  and thus $\theta - z - \kappa \geq \frac{1}{2}\theta$.
  Plugging this and also \cref{eq:MultiLayerEq_ThetaEstimation} into \cref{eq:MultiLayerEq_Theta}, we get
  \begin{align*}
    \dot{\theta} &= \theta^2 (a^{-2} + D b^{-2} + \beta^{-2}) (z - \theta +h) \\
    &\leq - D^{-\frac{D}{D+2}} \theta^{\frac{2D+2}{D+2}} \cdot \frac{1}{2}\theta \\
    &= - \frac{1}{2} D^{-\frac{D}{D+2}} \theta^{1 + \frac{2D+2}{D+2}},
  \end{align*}
  which gives the result by \cref{prop:ODE_Power}.

\end{proof}

\begin{lemma}[Approximation time near $z$]
  \label{lem:MultiLayer_FinalTime}
  Consider the equation \cref{eq:MultiLayer1d_Perturbed} with $z \geq 0$ (a similar result holds for $z \leq 0$).
  Let $t_0 \geq 0$.
  Suppose that there exists some $\kappa > 0$ such that $\abs{h(t)} \leq \kappa,~ \forall t \geq t_0$.
  Suppose also that
  \begin{align*}
    \frac{1}{4}z \leq \theta(t_0) \leq 3 z.
  \end{align*}
  Then, for any $\delta > 0$, we have
  \begin{align*}
    \abs{z - \theta(t)} \leq \kappa + \delta,\quad \forall
    t \geq t_0 + 4^{\frac{2D+2}{D+2}}   D^{\frac{D}{D+2}} z^{-\frac{2D+2}{D+2}} \ln^+ \frac{\abs{z - \theta(t_0)} - \kappa}{\delta}.
  \end{align*}
\end{lemma}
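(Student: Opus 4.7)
The plan is to mirror the proof of the two-layer counterpart \cref{lem:TwoLayer_FinalTime}, substituting the multilayer estimate \cref{eq:MultiLayerEq_ThetaEstimation} in place of the two-layer one. I will define
\[
T^{\mr{app}} = \inf\dkm{s \geq 0 : \abs{z - \theta(t_0+s)} \leq \kappa + \delta}.
\]
The case $T^{\mr{app}} = 0$ is trivial, so I assume $T^{\mr{app}} > 0$. By the monotonicity/perturbation lemma stated just after \cref{eq:MultiLayerEq_Theta} (the analogue of \cref{lem:TwoLayer_Monotonicity}), $\abs{z - \theta(t)}$ is non-increasing as long as it exceeds $\kappa$, which it does on $[t_0,\, t_0 + T^{\mr{app}}]$. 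Combined with the hypothesis $z/4 \leq \theta(t_0) \leq 3z$, this confines $\theta(t) \in [z/4,\, 3z]$ throughout the relevant interval.

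Next, the key estimate \cref{eq:MultiLayerEq_ThetaEstimation} gives $\theta^2(a^{-2}+Db^{-2}+\beta^{-2}) \geq D^{-D/(D+2)} \theta^{(2D+2)/(D+2)}$. Since $\theta \geq z/4$ on this interval, this lower bound is in turn at least
\[
c_0 \coloneqq D^{-\frac{D}{D+2}} \xk{z/4}^{\frac{2D+2}{D+2}}.
\]
Plugging into \cref{eq:MultiLayerEq_Theta}, I treat two cases. If $\theta(t_0) \leq z - \kappa$, then while $\theta(t) \leq z - \kappa$ I have $z - \theta + h \geq z - \theta - \kappa \geq 0$, so $\dot{\theta} \geq c_0 (z - \kappa - \theta)$, yielding the exponential bound $z - \kappa - \theta(t_0 + s) \leq (z - \kappa - \theta(t_0))\,e^{-c_0 s}$; the symmetric case $\theta(t_0) \geq z + \kappa$ gives $\theta(t_0+s) - z - \kappa \leq (\theta(t_0) - z - \kappa)\,e^{-c_0 s}$. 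Setting either right-hand side equal to $\delta$ and inverting yields
\[
T^{\mr{app}} \leq c_0^{-1} \ln^+ \frac{\abs{z-\theta(t_0)} - \kappa}{\delta} = 4^{\frac{2D+2}{D+2}} D^{\frac{D}{D+2}} z^{-\frac{2D+2}{D+2}} \ln^+ \frac{\abs{z-\theta(t_0)} - \kappa}{\delta},
\]
which is exactly the claimed bound.

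There is no serious obstacle here: the argument is structurally identical to the two-layer case, and the only substantive change is tracking the exponent $\tfrac{2D+2}{D+2}$ through the multilayer version of the coefficient lower bound. The monotonicity step is what makes the range $[z/4,\,3z]$ invariant and thus legitimizes replacing the variable $\theta^{(2D+2)/(D+2)}$ factor by its minimum $c_0$ over that range. Everything else is direct exponential integration.
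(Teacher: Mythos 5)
Your argument is correct and follows essentially the same route as the paper's proof: define the hitting time $T^{\mr{app}}$, use the monotonicity lemma to keep $\theta$ in $[z/4,3z]$, lower-bound the coefficient via \cref{eq:MultiLayerEq_ThetaEstimation} by the constant $k = 4^{-\frac{2D+2}{D+2}}D^{-\frac{D}{D+2}}z^{\frac{2D+2}{D+2}}$ (your $c_0$), and integrate the resulting linear differential inequality in the two cases $\theta(t_0)\leq z-\kappa$ and $\theta(t_0)\geq z+\kappa$. No gaps.
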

\begin{proof}
  The proof is similar to \cref{lem:TwoLayer_FinalTime}:
  we define
  \begin{align*}
    T^{\mr{app}} = \inf \dk{ s\geq 0 : \abs{z- \theta(t_0+s)} \leq \kappa+\delta},
  \end{align*}
  and provide an upper bound of $T^{\mr{app}}$ when it is not zero.
  By monotonicity,  we have $\frac{1}{4}z \leq \theta(t) \leq 3 z$ for all $t \geq t_0$.

  Now, we recall \cref{eq:MultiLayerEq_Theta} and \cref{eq:MultiLayerEq_ThetaEstimation} that
  \begin{align*}
    \dot{\theta} = \theta^2 (a^{-2} + D b^{-2} + \beta^{-2}) (z - \theta + h),
  \end{align*}
  and
  \begin{align*}
    \theta^2 (a^{-2} + D b^{-2} + \beta^{-2}) \geq D^{-\frac{D}{D+2}} \theta^{\frac{2D+2}{D+2}}
    \geq 4^{-\frac{2D+2}{D+2}}   D^{-\frac{D}{D+2}} z^{\frac{2D+2}{D+2}} \eqqcolon k,
  \end{align*}
  where we used the fact that $\theta \geq z/4$.

  Plugging it into the equation of $\theta(t)$ and considering two cases $\theta(t_0) \leq z - \kappa$ and $\theta(t_0) \geq z + \kappa$,
  we have
  \begin{align*}
    z - \kappa - \theta(t_0 + s) &\leq (z - \kappa - \theta(t_0)) \exp(-k s),\qq{if} \theta(t_0) \leq z - \kappa, \\
    \theta(t_0 + s) - z - \kappa &\leq (\theta(t_0) - z - \kappa) \exp(-k s),\qq{if} \theta(t_0) \geq z + \kappa,
  \end{align*}
  and thus
  \begin{align*}
    T^{\mr{app}} \leq k^{-1} \ln \frac{\abs{z - \theta(t_0)} - \kappa}{\delta}.
  \end{align*}
\end{proof}

\begin{corollary}
  \label{cor:MultiLayer_AppAboveLargeKappa}
  Under the same setting as in \cref{lem:MultiLayer_AppAbove}, we have
  \begin{align*}
    \abs{z - \theta(t)} \leq 4 \kappa,\quad \forall
    t \geq t_0 + C_D \kappa^{-\frac{2D+2}{D+2}}\xk{1 + \ln^+ \frac{\abs{z - \theta(t_0)}}{\kappa}},
  \end{align*}
  where $C_D$ is some constant depending on $D$.
\end{corollary}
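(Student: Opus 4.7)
The plan is to mirror the two-layer proof of \cref{cor:TwoLayer_AppAboveLargeKappa}, combining \cref{lem:MultiLayer_AppAbove} with \cref{lem:MultiLayer_FinalTime}, but keeping careful track of how the exponent $\frac{2D+2}{D+2}$ replaces the single power of $z$ present in the two-layer time estimates.

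First, I apply \cref{lem:MultiLayer_AppAbove} at $t_0$ with $m = 2\max(z,2\kappa)$. This gives some intermediate time $t_1 = t_0 + \frac{D+2}{D+1} D^{\frac{D}{D+2}} m^{-\frac{2D+2}{D+2}}$ after which $\abs{z - \theta(t)} \leq m$. There are now two cases. If $z \leq 2\kappa$, then $m = 4\kappa$ and the bound $\abs{z - \theta(t)} \leq 4\kappa$ is already the target conclusion; moreover $m^{-\frac{2D+2}{D+2}} = (4\kappa)^{-\frac{2D+2}{D+2}}$, which is absorbed into $C_D \kappa^{-\frac{2D+2}{D+2}}$, and the logarithmic factor $\ln^+(\abs{z-\theta(t_0)}/\kappa)$ in the statement is nonnegative so the claim holds.

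If instead $z \geq 2\kappa$, then $m = 2z$. The perturbation lemma \cref{lem:TwoLayer_Monotonicity} (which applies verbatim in the multilayer setting) gives monotonicity of $\abs{z - \theta(t)}$ while it exceeds $\kappa$, so from $\theta(t_0) \geq z$ we also get $\theta(t) \geq z - (\abs{z-\theta(t_0)} \vee \kappa)$ for $t \geq t_0$. Combined with $\abs{z - \theta(t_1)} \leq 2z$, this puts $\theta(t_1) \in [z/4, 3z]$ (using $z \geq 2\kappa$ to lower-bound via $z - 2\kappa \geq z/2$, if needed iterating the argument or noting that either $\abs{z - \theta(t_1)} \leq \kappa$, in which case we are already done, or we may invoke monotonicity to ensure $\theta(t_1) \geq z - \kappa \geq z/2$). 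Then I apply \cref{lem:MultiLayer_FinalTime} at $t_1$ with $\delta = 3\kappa$, which ensures $\abs{z - \theta(t)} \leq \kappa + \delta = 4\kappa$ after an additional time
\begin{equation*}
4^{\frac{2D+2}{D+2}} D^{\frac{D}{D+2}} z^{-\frac{2D+2}{D+2}} \ln^+ \frac{\abs{z - \theta(t_1)} - \kappa}{3\kappa}.
\end{equation*}

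Using $z \geq 2\kappa$ I bound $z^{-\frac{2D+2}{D+2}} \leq (2\kappa)^{-\frac{2D+2}{D+2}}$, and using monotonicity $\abs{z - \theta(t_1)} \leq \abs{z - \theta(t_0)}$ I replace the log factor by $\ln^+(\abs{z - \theta(t_0)}/\kappa)$ at the cost of an additive constant. Adding the two time contributions and absorbing all $D$-dependent prefactors into a single $C_D$ then yields the claimed bound
\begin{equation*}
t - t_0 \leq C_D \kappa^{-\frac{2D+2}{D+2}} \bigl(1 + \ln^+ \tfrac{\abs{z - \theta(t_0)}}{\kappa}\bigr).
\end{equation*}
The only mildly delicate point is ensuring that the hypothesis $\theta(t_1) \in [z/4, 3z]$ of \cref{lem:MultiLayer_FinalTime} is available; this is a direct consequence of the monotonicity statement, so no real obstacle arises. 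The rest is bookkeeping of constants.
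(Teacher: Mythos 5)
Your proof is correct and follows the same route as the paper's one-line proof (apply \cref{lem:MultiLayer_AppAbove}, then, when $z\ge 2\kappa$, apply \cref{lem:MultiLayer_FinalTime} with $\delta=3\kappa$, using monotonicity to control $\abs{z-\theta(t_1)}$). You also fill in a step the paper leaves implicit, namely verifying the hypothesis $\theta(t_1)\in[z/4,3z]$ of \cref{lem:MultiLayer_FinalTime} via $\theta(t_1)\ge z-\kappa\ge z/2$, which is a correct and worthwhile clarification.
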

\begin{proof}
  Similar to the proof of \cref{cor:TwoLayer_AppAboveLargeKappa} but we use \cref{lem:MultiLayer_AppAbove} and \cref{lem:MultiLayer_FinalTime}.

\end{proof}

\begin{corollary}
  \label{cor:MultiLayer_AppBoundSmallKappa}
  Consider the equation \cref{eq:MultiLayer1d_Perturbed} with $z \geq 0$ (a similar result holds for $z \leq 0$).
  Let $t_0 \geq 0$.
  Suppose that there exists some $\kappa > 0$ such that $\abs{h(t)} \leq \kappa,~ \forall t \geq t_0$ and
  $z \geq 2\kappa$.
  Then,
  \begin{align*}
    \abs{z - \theta(t)} \leq \kappa + \delta,\quad \forall t \geq t_0 + \overline{T}^{\mr{app}}(\delta),
  \end{align*}
  where
  \begin{align*}
    \overline{T}^{\mr{app}}(\delta) \leq C_{D,1} z^{-1} R^{-D} \ln \frac{R}{r} + C_{D,2} z^{-\frac{2D+2}{D+2}} \ln^+ \frac{2z}{\delta},
  \end{align*}
  $r = \min(\lambda^{1/2},b_0/\sqrt {D})$, $R = \max(\lambda^{1/2},b_0/\sqrt {D})$ and $C_{D,1}, C_{D,2}$ are two constants depending only on $D$.
\end{corollary}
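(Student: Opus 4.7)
The plan is to mirror the argument for \cref{cor:TwoLayer_AppBoundSmallKappa}, adapting it to the deeper dynamics by invoking the three preceding lemmas \cref{lem:MultiLayer_AppBelowSmallKappa}, \cref{lem:MultiLayer_AppAbove}, and \cref{lem:MultiLayer_FinalTime}. The strategy is two-stage: first, drive $\theta$ into the window $[z/4, 3z]$ on which \cref{lem:MultiLayer_FinalTime} is applicable; then use that lemma to close the gap to $z$ up to an error of $\kappa + \delta$.

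For the first stage, I would split into cases according to the sign of $\theta(t_0) - z$. If $\theta(t_0) \leq z$, I apply \cref{lem:MultiLayer_AppBelowSmallKappa} (whose hypothesis $z \geq 2\kappa$ is given) with some $M \geq z - \theta(t_0)$ to get a time $t_1 = t_0 + \overline{T}^{\mr{sig}}$ at which $\theta(t_1) \geq z/4$; the bound $\overline{T}^{\mr{sig}} \leq 4 [D^{D/2} z R^D]^{-1} \ln(R/r)$ contributes the first term of the target. The multilayer analogue of \cref{lem:TwoLayer_Monotonicity} then guarantees that $|z - \theta|$ is non-increasing as long as it exceeds $\kappa$, so starting from $\theta(t_0) \leq z$ the trajectory stays bounded above by $z + \kappa \leq 3z$, landing us in $[z/4, 3z]$. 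In the opposite case $\theta(t_0) > z$, I would apply \cref{lem:MultiLayer_AppAbove} with $m = 2z$ (using $z \geq 2\kappa$ again); after time $\lesssim D^{D/(D+2)} z^{-(2D+2)/(D+2)}$ the trajectory satisfies $|z - \theta| \leq 2z$. Monotonicity from above further forbids $\theta$ from crossing $z$ while $|z - \theta| > \kappa$, pinning $\theta \in [z, 3z]$ (or already within $[z - \kappa, z + \kappa]$, in which case the corollary is immediate). Either way, after a time accounted for by the two terms in the target bound, we enter the regime $\theta \in [z/4, 3z]$ required by \cref{lem:MultiLayer_FinalTime}.

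For the second stage, I apply \cref{lem:MultiLayer_FinalTime} starting from $t_1$. Since $|z - \theta(t_1)| \leq 2z$, the log-factor in that lemma is bounded by $\ln^+(2z / \delta)$, yielding the additional time $4^{(2D+2)/(D+2)} D^{D/(D+2)} z^{-(2D+2)/(D+2)} \ln^+(2z/\delta)$, which is exactly the second term of the target bound. Adding the stages and absorbing all numerical and $D$-dependent factors into $C_{D,1}$ and $C_{D,2}$ completes the argument.

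The main obstacle I foresee is bookkeeping rather than any deep technical difficulty: one must carefully chain the monotonicity statement with the three lemmas to land $\theta$ in the full window $[z/4, 3z]$ (rather than merely $\theta \geq z/4$ or $|z - \theta| \leq 2z$ in isolation), and verify that the $D$-dependent constants in the various time bounds combine cleanly into the single expression $C_{D,1} z^{-1} R^{-D} \ln(R/r) + C_{D,2} z^{-(2D+2)/(D+2)} \ln^+(2z/\delta)$. No new estimate beyond those already proved in this subsection should be required.
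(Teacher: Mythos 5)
Your proposal is correct and follows exactly the route the paper takes: the paper's own proof is a one-line citation of \cref{lem:MultiLayer_AppBelowSmallKappa}, \cref{lem:MultiLayer_AppAbove} and \cref{lem:MultiLayer_FinalTime}, and your two-stage chaining (enter the window $[z/4,3z]$ via the below/above lemmas plus monotonicity, then invoke the final-time lemma with the log-factor bounded by $\ln^+(2z/\delta)$) is precisely the intended argument, just written out in full.
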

\begin{proof}
  It is a consequence of \cref{lem:MultiLayer_AppBelowSmallKappa}, \cref{lem:MultiLayer_AppAbove} and \cref{lem:MultiLayer_FinalTime}.
\end{proof}

\begin{lemma}[Retracting from the negative]
  \label{lem:MultiLayer_RectractNeg}
  Consider the equation \cref{eq:MultiLayer1d_Perturbed} with $z \geq 0$ (a similar result holds for $z \leq 0$).
  Let $t_0 \geq 0$.
  Suppose that there exists some $\kappa > 0$ such that $\abs{h(t)} \leq \kappa,~ \forall t \geq t_0$.
  Then, we have
  \begin{align*}
    \theta(t) \geq - 2 \kappa,\quad \forall t \geq t_0 +\frac{D+2}{D+1}D^{\frac{D}{D+2}} (2\kappa)^{-\frac{2D+2}{D+2}}.
  \end{align*}
\end{lemma}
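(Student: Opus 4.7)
The plan is to mimic the proof of \cref{lem:TwoLayer_RectractNeg} but use the multilayer lower bound \cref{eq:MultiLayerEq_ThetaEstimation} in place of the two-layer one, exactly as \cref{lem:MultiLayer_AppAbove} adapts \cref{lem:TwoLayer_AppAbove}.

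First I would reduce to proving that $\theta$ crosses the level $-2\kappa$ within the prescribed time. The key observation is that once $\theta(t_1) \geq -2\kappa$ for some $t_1$, $\theta$ cannot drop back below $-2\kappa$: if it did, then at the crossing time one would have $\theta = -2\kappa$ and, using $z \geq 0$ and $h \geq -\kappa$, $z - \theta + h \geq -\theta - \kappa = \kappa > 0$, so \cref{eq:MultiLayerEq_Theta} yields $\dot\theta > 0$, contradicting the assumed downward crossing. Thus it suffices to assume $\theta(t_0) < -2\kappa$ and bound the exit time from $(-\infty, -2\kappa]$.

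Second, while $\theta(t) \leq -2\kappa$, one has $|\theta| \geq 2\kappa \geq 2\kappa$, so $-\theta - \kappa \geq -\theta/2 = |\theta|/2$, and thus $z - \theta + h \geq -\theta - \kappa \geq |\theta|/2$. Plugging this into \cref{eq:MultiLayerEq_Theta} together with the algebraic lower bound \cref{eq:MultiLayerEq_ThetaEstimation}, namely $\theta^2(a^{-2} + Db^{-2} + \beta^{-2}) \geq D^{-D/(D+2)}|\theta|^{(2D+2)/(D+2)}$, gives the separable differential inequality
\begin{align*}
  \dot{\theta}(t) \;\geq\; \tfrac{1}{2}\, D^{-D/(D+2)}\, |\theta(t)|^{(3D+4)/(D+2)}.
\end{align*}

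Third, setting $u(t) = -\theta(t) > 0$ and $p = (3D+4)/(D+2) > 1$, $c = \tfrac{1}{2}D^{-D/(D+2)}$, one gets $\dot u \leq -c u^p$. Integration (i.e.\ appealing to \cref{prop:ODE_Power}) yields $u(t_0+s)^{1-p} \geq u(t_0)^{1-p} + (p-1)c\, s \geq (p-1)c\,s$, where the second inequality uses $u(t_0)^{1-p} \geq 0$ (and $1-p<0$, so bigger $u(t_0)$ just makes the bound weaker). Because $1-p < 0$, the condition $u(t_0+s) \leq 2\kappa$ is equivalent to $u(t_0+s)^{1-p} \geq (2\kappa)^{1-p}$, and therefore holds once $(p-1)c\,s \geq (2\kappa)^{1-p}$, i.e.\
\begin{align*}
  s \;\geq\; \frac{(2\kappa)^{-(p-1)}}{(p-1)c} \;=\; \frac{D+2}{D+1}\, D^{D/(D+2)} (2\kappa)^{-(2D+2)/(D+2)},
\end{align*}
using $p-1 = (2D+2)/(D+2)$ and $(p-1)c = \tfrac{D+1}{D+2} D^{-D/(D+2)}$. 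This matches the claimed bound.

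There is no real conceptual obstacle beyond the two-layer version: the only mild book-keeping is to verify the multilayer exponent and constant, which is where the factor $\tfrac{D+2}{D+1} D^{D/(D+2)}$ and the exponent $(2D+2)/(D+2)$ arise (as opposed to the factor $1$ and exponent $1$ in the $D=0$ two-layer case). The proof recovers \cref{lem:TwoLayer_RectractNeg} on setting $D=0$.
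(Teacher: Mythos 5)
Your proposal is correct and follows exactly the route the paper intends (the paper's own proof is just a pointer to "modify \cref{lem:TwoLayer_RectractNeg} using \cref{lem:MultiLayer_AppAbove}"): you use the monotone barrier at $-2\kappa$, the lower bound $z-\theta+h\geq\abs{\theta}/2$ on $\{\theta\leq-2\kappa\}$, the estimate \cref{eq:MultiLayerEq_ThetaEstimation}, and \cref{prop:ODE_Power}, and your constant $\frac{D+2}{D+1}D^{D/(D+2)}$ and exponent $\frac{2D+2}{D+2}$ come out exactly as claimed (reducing to $\kappa^{-1}$ at $D=0$). No gaps.
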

\begin{proof}
  The proof is done by modifying the proof of \cref{lem:TwoLayer_RectractNeg} with the proof of \cref{lem:MultiLayer_AppAbove}.

\end{proof}

\begin{corollary}[Retracting into $\kappa$]
  \label{cor:MultiLayer_Rectracting}
  Consider the equation \cref{eq:MultiLayer1d_Perturbed} .
  Let $t_0 \geq 0$.
  Suppose that there exists some $\kappa > 0$ such that $\abs{h(t)} \leq \kappa,~ \forall t \geq t_0$.
  Then,
  \begin{align*}
    \abs{z - \theta(t)} \leq 2\max(\abs{z},2\kappa),\quad \forall t \geq t_0 + \frac{D+2}{D+1}D^{\frac{D}{D+2}} (2\kappa)^{-\frac{2D+2}{D+2}}.
  \end{align*}
\end{corollary}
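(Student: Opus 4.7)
The plan is to mirror the proof of Corollary \ref{cor:TwoLayer_Rectracting} in the two-layer setting, handling the three regions of $\theta(t_0)$ relative to $z$ separately and noting that the bounds assemble into the claimed uniform statement. Without loss of generality I would take $z \geq 0$; the case $z \leq 0$ is obtained by the obvious sign symmetry of \cref{eq:MultiLayer1d_Perturbed}.

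First, if $\theta(t_0) \geq z$, then \cref{lem:MultiLayer_AppAbove} applies directly: setting $m = 2\max(z,2\kappa)$, we get $|z-\theta(t)| \leq m \leq 2\max(|z|,2\kappa)$ after time $\frac{D+2}{D+1} D^{D/(D+2)} m^{-(2D+2)/(D+2)} \leq \frac{D+2}{D+1} D^{D/(D+2)} (2\kappa)^{-(2D+2)/(D+2)}$ (since $m \geq 2\kappa$). Second, if $0 \leq \theta(t_0) \leq z$, then $|z-\theta(t_0)| \leq z \leq 2\max(|z|,2\kappa)$ already, and the perturbation monotonicity (part (i) of the corresponding lemma at the top of Section \ref{sec:proofmulti}) ensures $|z-\theta(t)|$ either stays in the window $[\,|\kappa|,\,z\,]$ while decreasing, or has already entered the regime $|z-\theta|\leq\kappa$; in either situation the bound $|z-\theta(t)|\leq 2\max(|z|,2\kappa)$ holds for every $t \geq t_0$.

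Third, if $\theta(t_0) \leq 0$ (or more broadly $\theta(t_0) < 0$), I would apply \cref{lem:MultiLayer_RectractNeg}, which yields $\theta(t) \geq -2\kappa$ after the prescribed time $\frac{D+2}{D+1}D^{D/(D+2)}(2\kappa)^{-(2D+2)/(D+2)}$. At that point I split again: either $\theta(t) \geq 0$, which feeds back into the preceding case and is then controlled by the monotonicity and \cref{lem:MultiLayer_AppAbove} arguments above; or $-2\kappa \leq \theta(t) \leq 0$, in which case the trivial bound $|z-\theta(t)| \leq |z| + 2\kappa \leq 2\max(|z|,2\kappa)$ does the job. Assembling the three cases, the worst-case waiting time is exactly $\frac{D+2}{D+1}D^{D/(D+2)}(2\kappa)^{-(2D+2)/(D+2)}$, as claimed.

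The only mildly delicate point is making sure that when we pass between cases (e.g., from $\theta(t_0)<0$ to $\theta(t)\geq 0$) the time budget is not exceeded: this follows because \cref{lem:MultiLayer_AppAbove}'s waiting time is governed by $m^{-(2D+2)/(D+2)}$ with $m \geq 2\kappa$, hence is dominated by the $(2\kappa)^{-(2D+2)/(D+2)}$ term already present in the retraction bound, and the monotonicity guarantees no further excursions once $|z-\theta(t)|$ falls below $2\max(|z|,2\kappa)$. No new estimates are needed beyond those already established in \cref{lem:MultiLayer_AppAbove} and \cref{lem:MultiLayer_RectractNeg}; the argument is purely a case analysis, and the potential obstacle — bookkeeping so that the single displayed time bound covers all three cases — is resolved by the observation that each individual waiting time is bounded by the same quantity up to absolute constants absorbed into $\frac{D+2}{D+1}D^{D/(D+2)}$.
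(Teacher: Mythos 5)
Your proposal is correct and follows essentially the same route as the paper: the paper's proof is exactly the case split on $\theta(t_0)\geq z$ (apply \cref{lem:MultiLayer_AppAbove}) versus $\theta(t_0)\leq 0$ (apply \cref{lem:MultiLayer_RectractNeg} together with $\abs{z-\theta(t)}\leq\abs{z}+\abs{\theta(t)}$), with the intermediate case and the persistence of the bound handled by the perturbation-monotonicity lemma. Your extra bookkeeping about re-entering the first case after retraction is harmless but unnecessary, since monotonicity alone already keeps $\abs{z-\theta(t)}$ within the stated window once it is attained.
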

\begin{proof}
  If $\theta(t_0) \geq z$, we use \cref{lem:MultiLayer_AppAbove}.
  If $\theta(t_0) \leq 0$, we use \cref{lem:MultiLayer_RectractNeg} and note that $\abs{z - \theta(t)} \leq \abs{z}+\abs{\theta(t)}$.
\end{proof}

\begin{lemma}[Error control]
  \label{lem:MultiLayer_ErrorControl}
  Consider the equation \cref{eq:MultiLayer1d_Perturbed}.
  As long as
  \begin{align*}
    2^{\frac{D+1}{2}} \lambda^{\hf}b_0^D \int_0^t \xk{z + \abs{h(s)}} \dd s \leq \min(\lambda^{1/2},b_0/\sqrt {D}),
  \end{align*}
  we have
  \begin{align*}
    \abs{\theta(t)} \leq 2^{\frac{D+1}{2}} \lambda b_0^{D} \int_0^t \xk{\abs{z} + \abs{h(s)}} \dd s.
  \end{align*}
  Moreover, if $\lambda^{\hf} \leq b_0/\sqrt {D}$ and
  \begin{align}
    \label{eq:MultiLayer_ErrorControl_Condition}
    2^{\frac{D+1}{2}} b_0^D \int_{0}^{t} \xk{\abs{z} + \abs{h(s)}} \dd s \leq \ln \frac{b_0/\sqrt {D}}{\lambda^{1/2}},
  \end{align}
  We have
  \begin{align}
  \label{eq:MultiLayer_ErrorControl_Beta}
  \abs{\beta(t)} &\leq \lambda^{\hf} \exp(2^{\frac{D+1}{2}} b_0^D \int_{0}^t \xk{\abs{z} + \abs{h(\tau)}} \dd \tau) \\
    \label{eq:MultiLayer_ErrorControl}
    \abs{\theta(t)} &\leq 2^{\frac{D+1}{2}}  \lambda b_0^{D} \exp(2^{\frac{D+3}{2}} b_0^D \int_{0}^{t} \xk{\abs{z} + \abs{h(s)}} \dd s).
  \end{align}
\end{lemma}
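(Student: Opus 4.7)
The plan is to extend the proof of \cref{lem:TwoLayer_ErrorControl} to the multilayer setting by treating $b^D$ as an additional positive multiplicative factor controlled by the conservation law $b^2 - D\beta^2 \equiv b_0^2$. First I would assume without loss of generality that $z \geq 0$. From \cref{eq:MultiLayer_PositivenessAB}, $a(t), b(t) > 0$, so the sign of $\theta(t) = a b^D \beta$ matches the sign of $\beta(t)$ and $\dot{\beta}$ has the same sign as $z - \theta + h$. The same-sign trick from the two-layer proof then applies: when $\beta \geq 0$ and $\abs{\beta}$ is increasing, $\theta \geq 0$ forces $z - \theta + h \leq z + \abs{h}$, and symmetrically when $\beta \leq 0$. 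This yields
\begin{align*}
  \frac{\dd \abs{\beta}}{\dd t} \leq a b^D (\abs{z} + \abs{h(t)}).
\end{align*}

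Next, I would define the escape time $T^{\mr{esc}} = \inf\dk{t \geq 0 : \abs{\beta(t)} \geq \min(\lambda^{1/2}, b_0/\sqrt{D})}$. For $t \leq T^{\mr{esc}}$, the estimations \cref{eq:MultiLayerEq_Estimations} give $a \leq \sqrt{2}\lambda^{1/2}$ and $b \leq \sqrt{2} b_0$, hence $a b^D \leq 2^{(D+1)/2} \lambda^{1/2} b_0^D$. Integrating yields
\begin{align*}
  \abs{\beta(t)} \leq 2^{(D+1)/2} \lambda^{1/2} b_0^D \int_0^t (\abs{z} + \abs{h(s)}) \dd s,
\end{align*}
and the hypothesis of the first assertion ensures this stays below $\min(\lambda^{1/2}, b_0/\sqrt{D})$, so $T^{\mr{esc}}$ exceeds the time $t$ throughout the window considered. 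Combining with $\abs{\theta} = a b^D \abs{\beta} \leq 2^{(D+1)/2} \lambda^{1/2} b_0^D \abs{\beta}$ gives the first claim.

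For the exponential bounds \cref{eq:MultiLayer_ErrorControl_Beta} and \cref{eq:MultiLayer_ErrorControl}, I would restrict to $\lambda^{1/2} \leq b_0/\sqrt{D}$, so that once $\abs{\beta}$ reaches $\lambda^{1/2}$ at some time $T^{\mr{esc}}$ it enters the regime $[\lambda^{1/2}, b_0/\sqrt{D}]$ where the alternative estimation $a \leq \sqrt{2}\abs{\beta}$ activates while $b \leq \sqrt{2} b_0$ still holds. Then
\begin{align*}
  \frac{\dd \abs{\beta}}{\dd t} \leq 2^{(D+1)/2} b_0^D \abs{\beta}(\abs{z} + \abs{h(t)}),
\end{align*}
and Gronwall's inequality starting from $\abs{\beta(T^{\mr{esc}})} = \lambda^{1/2}$ yields the exponential bound on $\abs{\beta}$. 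The hypothesis \cref{eq:MultiLayer_ErrorControl_Condition} is precisely what ensures $\abs{\beta(t)} \leq b_0/\sqrt{D}$ throughout, preserving the alternative estimation. Finally, $\abs{\theta} \leq a b^D \abs{\beta} \leq \sqrt{2}\abs{\beta} \cdot 2^{D/2} b_0^D \cdot \abs{\beta} = 2^{(D+1)/2} b_0^D \beta^2$, so squaring the exponential bound on $\abs{\beta}$ doubles the constant in the exponent and produces \cref{eq:MultiLayer_ErrorControl}.

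The main obstacle is the careful management of two distinct escape thresholds: the bounds on $a$ and $b$ in \cref{eq:MultiLayerEq_Estimations} change character depending on whether $\abs{\beta}$ dominates $\lambda^{1/2}$ or $b_0/\sqrt{D}$, and the two escape events can in principle happen in either order. Restricting the second part to $\lambda^{1/2} \leq b_0/\sqrt{D}$ avoids this by ensuring the $\lambda^{1/2}$ threshold is crossed first while $b \leq \sqrt{2} b_0$ remains uniformly valid throughout the window controlled by \cref{eq:MultiLayer_ErrorControl_Condition}. A minor technical point is verifying that the same-sign trick is not spoiled by the $b^D$ factor, but this is immediate since $b^D > 0$ plays no role in the sign analysis.
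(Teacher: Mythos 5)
Your proposal is correct and follows essentially the same route as the paper's proof: the same sign argument to bound $\dot{\beta}$ by $ab^D(\abs{z}+\abs{h})$, the same two escape thresholds $r=\min(\lambda^{1/2},b_0/\sqrt{D})$ and $R=\max(\lambda^{1/2},b_0/\sqrt{D})$ with the corresponding piecewise bounds on $a$ and $b$ from \cref{eq:MultiLayerEq_Estimations}, the same Gr\"onwall step in the second stage, and the same use of $\abs{\theta}\leq 2^{(D+1)/2}b_0^D\beta^2$ to convert the $\beta$ bound into the $\theta$ bound. The only (shared) looseness is in tracking the constant for the first $\theta$ bound, which the paper itself glosses over.
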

\begin{proof}
  Without loss of generality, we assume that $z \geq 0$.
  Similar to the argument in the proof of \cref{lem:TwoLayer_ErrorControl}, from \cref{eq:MultiLayer1d_Perturbed}, we have
  \begin{align}
    \abs{\beta(t)} \leq \int_0^t a(s) b^D(s) \xk{z + \abs{h(s)}} \dd s.
  \end{align}
  Denoting
  \begin{align*}
    r = \min(\lambda^{1/2},b_0/\sqrt {D}),\quad R = \max(\lambda^{1/2},b_0/\sqrt {D}),
  \end{align*}
  let us define
  \begin{align*}
    T^{(1)} = \inf \dk{ t\geq 0 : \abs{\beta(t)} \geq r},\quad
    T^{(2)} = \inf \dk{ t\geq 0 : \abs{\beta(t)} \geq R}.
  \end{align*}

  When $t \leq T^{(1)}$, we use \cref{eq:MultiLayerEq_Estimations} to get
  $a(t) \leq \sqrt {2}\lambda^{\hf}$ and $b(t) \leq \sqrt {2} b_0$, so
  \begin{align*}
    \abs{\beta(t)} \leq 2^{\frac{D+1}{2}}  \lambda^{\hf}  b_0^D \int_0^t \xk{z + \abs{h(s)}} \dd s,
  \end{align*}
  which shows that
  \begin{align*}
    T^{(1)} \geq \inf\dk{t \geq 0 : 2^{\frac{D+1}{2}} \lambda^{\hf}b_0^D \int_0^t \xk{z + \abs{h(s)}} \dd s \geq r}.
  \end{align*}

  When $t > T^{(1)}$, we consider the case $\lambda^{\hf} \leq b_0/\sqrt {D}$.
  Then we have $a(t) \leq \sqrt {2} \abs{\beta(t)}$ and $b(t) \leq \sqrt {2} b_0$, so
  \begin{align*}
    \abs{\beta(t)} \leq \lambda^{\hf} + 2^{\frac{D+1}{2}} b_0^D \int_{T^{(1)}}^t \abs{\beta(s)} \xk{z + \abs{h(s)}} \dd s,
  \end{align*}
  and thus
  \begin{align*}
    \abs{\beta(T^{(1)} + s)} \leq \lambda^{\hf} \exp(2^{\frac{D+1}{2}} b_0^D \int_{T^{(1)}}^{T^{(1)}+s} \xk{z + \abs{h(\tau)}} \dd \tau),\quad s \leq T^{(2)} - T^{(1)},
  \end{align*}
  implying that
  \begin{align*}
    T^{(2)} - T^{(1)} \geq \inf \dk{t \geq T^{(1)} : 2^{\frac{D+1}{2}} b_0^D \int_{T^{(1)}}^{T^{(1)}+s} \xk{z + \abs{h(\tau)}} \dd \tau = \ln \frac{b_0/\sqrt {D}}{\lambda^{1/2}}}.
  \end{align*}
  The bound for $\theta(t)$ now follows from using the bounds $a \leq \sqrt{2}\abs{\beta}$, $b \leq \sqrt{2} b_0$ to get
  \begin{align*}
      \abs{\theta(T^{(1)} + s)} = \abs{a b^D \beta} 
      \leq 2^{\frac{D+1}{2}}  b_0^D \abs{\beta}^{2}
      \leq 2^{\frac{D+1}{2}}  b_0^D \lambda \exp(2^{\frac{D+3}{2}} b_0^D \int_{T^{(1)}}^{T^{(1)}+s} \xk{z + \abs{h(\tau)}} \dd \tau).
  \end{align*}
  Finally, the upper bounds in the statement of the lemma are rougher and also apply to $t \leq T^{(1)}$.
\end{proof}

\subsection{Proof of \cref{thm:EigenvalueDeepGD}}

In the following, we will choose
\begin{align*}
  c_{\mr{b}} n^{-\frac{D}{2(D+2)}} \leq b_0^D \leq C_{\mr{b}} n^{-\frac{D}{2(D+2)}},
\end{align*}
where $c_{\mr{b}}, C_{\mr{b}} > 0$ are some constants to be determined and will be tracked.

For some small $s' > 0$, we choose the signal components $S$ by
\begin{align}
  S = S_1 \cup S_2 = \dk{j \geq 1 : \abs{\theta^*_j} \geq n^{-1/2}\sqrt {\ln n}} \cup
  \dk{j\geq 1: \lambda_j \geq n^{-\frac{1+s'}{D+2}}}.
\end{align}
Then, similar to \cref{eq:Proof_Component_S}, we also have
\begin{align}
  \label{eq:Proof_Component_S2}
  \abs{S} \leq C n^{(1-s_0)/2},\quad \max S \leq C n^{\kappa/2}.
\end{align}
For the noise components $R = S^{\complement}$,
we have
\begin{align}
  \label{eq:Proof_Component_L2}
  L \coloneqq \min R \geq c n^{\frac{1+s'}{(D+2)\gamma}}.
\end{align}

Similar to the proof in \cref{subsec:Proof_MainThm}, we introduce the error time
\begin{align}
  \label{eq:ErrorTimeMulti}
  T^{\mr{err}} = \inf \dk{t \geq 0 : \abs{\theta_k(t)} \leq 2^{\frac{D+2}{2}} \lambda_k b_0^D \exp(E \sqrt {\ln n + \ln k}),\quad \forall k \in R},
\end{align}
where $E$ is also a constant to be determined.

Now, we can use a similar argument as in \cref{subsec:Proof_MainThm} to show \cref{eq:Proof_BasicControls}, which we repeat here for readability:
\begin{align}
  \label{eq:Proof_BasicControls2}
  \begin{aligned}
    \abs{p_k} \leq \eta_k \norm{(\bm\theta^* - \bm\theta)_S}_{\infty}, \quad
    \abs{q_k}  \leq C n^{-1/2}\sqrt{\ln (nk)}, \quad
    \abs{r_k} \leq C n^{-1/2}\sqrt{\ln (nk)},
  \end{aligned}
\end{align}
where $\eta_k = C \abs{S} n^{-1/2}\sqrt{\ln (nk)} \leq C n^{-s} \sqrt{\ln (nk)}$.
%

\subsubsection{Signal Shrinkage}

Following \cref{subsubsec:Proof_Shrinkage},
we still have \cref{eq:Proof_BasicControls_S}.
Now, we provide the following proposition as a multilayer extension of \cref{prop:TwoLayer_Shrinkage}.

\begin{proposition}[Shrinkage dynamics, multilayer]
  \label{prop:MultiLayer_Shrinkage}
  Suppose that
  \begin{align*}
    \norm{p_S}_{\infty} \leq \eta \norm{\theta^*_S - \theta_S}_{\infty},
    \quad
    \norm{q_S + r_S}_{\infty} \leq \ep,
  \end{align*}
  for some constant $\eta \leq 1/8$ and $\ep > 0$.
  Suppose for some $t_0 \geq 0$ that
  \begin{align*}
    \norm{\theta^*_S - \theta_S(t_0)}_{\infty} \leq M.
  \end{align*}
  Then,
  \begin{align}
    \label{eq:MultiLayer_ShrinkageBound}
    \norm{\theta^*_S - \theta_S(t)}_{\infty} \leq M \vee 2 \ep,\quad \forall t \geq t_0.
  \end{align}
  Furthermore,
  \begin{itemize}
    \item If $\eta M \geq \ep$, then
    \begin{align}
      \norm{\theta^*_S - \theta_S(t)}_{\infty} \leq \frac{1}{2} M,\quad \forall t \geq t_0 + \overline{T}^{\mr{half}}(M),
    \end{align}
    where
    \begin{align}
      \overline{T}^{\mr{half}}(M) &= C_{D,1} M^{-\frac{2D+2}{D+2}} + C_{D,2} M^{-1}\max_{k \in S} R_k^{-D} \ln \frac{R_k}{r_k},
    \end{align}
    where $r_k = \min(\lambda_k^{1/2},b_{0}/\sqrt {D})$ and $R_k = \max(\lambda_k^{1/2},b_{0}/\sqrt {D})$.

    \item If $\eta M < \ep$, then for each $k \in S$ such that $\abs{\theta^*_k} \geq 4\ep$, we have
    \begin{align}
      \abs{\theta^*_k - \theta_k(t)} \leq 4 \ep,\quad \forall t \geq t_0 + \overline{T}^{\mr{fin}}_k,
    \end{align}
    where
    \begin{align}
      \overline{T}^{\mr{fin}}_k &\leq C_{D,3} \abs{\theta^*_k}^{-1} R_k^{-D} \ln \frac{R}{r} + C_{D,4} \abs{\theta^*_k}^{-\frac{2D+2}{D+2}} \ln^+ \frac{\abs{\theta^*_k}}{\ep}.
    \end{align}
    and $C_{D,3}$ and $C_{D,4}$ are constants depending only on $D$.
    On the other hand, for all $k \in S$ we have
    \begin{align}
      \label{eq:MultiLayer_ShrinkageBound_FinalSmallSig}
      \abs{\theta^*_k - \theta_k(t)} \leq 2\max(\abs{\theta^*_k},4\ep),\quad \forall t \geq t_0 + C_{D,5} \ep^{-\frac{2D+2}{D+2}}.
    \end{align}
  \end{itemize}
  Here, $C_{D,1},\dots,C_{D,5}$ are absolute constants depending only on $D$.

\end{proposition}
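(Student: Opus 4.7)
The plan is to mirror the proof of \cref{prop:TwoLayer_Shrinkage} but substitute every invocation of the two-layer one-dimensional lemmas by their multilayer analogs established in the previous subsection. The coupling between the components in $S$ is handled purely through the scalar perturbation $h_k(t) = p_k(t) + q_k(t) + r_k(t)$, so the coordinate-wise dynamics of each $\theta_k$ still obey \cref{eq:MultiLayer1d_Perturbed} with some $h(t)$ whose $L^{\infty}$-bound is controlled by the current $\norm{\bm\theta^*_S - \bm\theta_S(t)}_{\infty}$.

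For the first statement \cref{eq:MultiLayer_ShrinkageBound}, I would set $t = \inf\{s \geq t_0 : \norm{\bm\theta^*_S - \bm\theta_S(s)}_{\infty} > M \vee 2\ep\}$ and argue by contradiction. At such a hypothetical $t$, there is an index $k$ with $|\theta_k^* - \theta_k(t)| = M \vee 2\ep$, and the bound on the perturbation gives $\norm{\bm h_S(t)}_{\infty} \leq \eta(M \vee 2\ep) + \ep < M \vee 2\ep$ because $\eta \leq 1/8$. The evolution \cref{eq:MultiLayerEq_Theta} for $\theta_k$ has the same sign structure as in the two-layer case (the factor $\theta^2(a^{-2} + Db^{-2} + \beta^{-2})$ is nonnegative), so $\dot\theta_k(t)$ has the sign of $z_k - \theta_k(t) + h_k(t)$, which forces $|z_k - \theta_k(t)|$ to be strictly decreasing at $t$, a contradiction.

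For the case $\eta M \geq \ep$, I would take $\kappa' = M/4$. The bound just established yields $\norm{\bm h_S(t)}_{\infty} \leq \kappa'$ for $t \geq t_0$, and I apply \cref{lem:MultiLayer_AppBelowLargeKappa} coordinatewise to the indices with $\theta_k(t_0) < \theta_k^*$ and \cref{cor:MultiLayer_AppAboveLargeKappa} to those with $\theta_k(t_0) > \theta_k^*$. Taking the maximum over $k \in S$ of the resulting time estimates gives the claimed $\overline{T}^{\mr{half}}(M)$ with the two terms $C_{D,1} M^{-(2D+2)/(D+2)}$ (coming from the polynomial decay phase of \cref{cor:MultiLayer_AppAboveLargeKappa}) and $C_{D,2} M^{-1}\max_{k\in S} R_k^{-D}\ln(R_k/r_k)$ (coming from the escape-from-origin phase of \cref{lem:MultiLayer_AppBelowLargeKappa}, where the $\kappa^{-1}$ factor becomes $M^{-1}$ up to constants).

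For the case $\eta M < \ep$, I set $\kappa' = 2\ep$, so that $\norm{\bm h_S(t)}_{\infty} \leq \kappa'$. For each $k$ with $|\theta^*_k| \geq 4\ep = 2\kappa'$, \cref{cor:MultiLayer_AppBoundSmallKappa} with $\delta = 2\ep$ and $z = \theta_k^*$ gives $|z - \theta_k(t)| \leq \kappa' + \delta = 4\ep$ after time $\overline{T}^{\mr{fin}}_k$; substituting the two summands from that corollary (with $z = \theta_k^*$) produces the stated expression. The final uniform bound \cref{eq:MultiLayer_ShrinkageBound_FinalSmallSig} for all $k \in S$ is an immediate application of \cref{cor:MultiLayer_Rectracting} with $\kappa \leftarrow 2\ep$. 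The main obstacle I anticipate is bookkeeping: the multilayer time scales depend on $D$, on the two geometric scales $r_k = \min(\lambda_k^{1/2}, b_0/\sqrt D)$ and $R_k = \max(\lambda_k^{1/2}, b_0/\sqrt D)$, and on the nonlinear power $(2D+2)/(D+2)$, so I will need to take the dominating term in each regime carefully to match the stated constants $C_{D,1}, \dots, C_{D,5}$ without loss of generality.
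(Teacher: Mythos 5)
Your proposal is correct and follows essentially the same route as the paper: the paper's proof of this proposition is exactly "repeat the two-layer shrinkage argument of Proposition \ref{prop:TwoLayer_Shrinkage}, substituting Lemma \ref{lem:MultiLayer_AppBelowLargeKappa} and Corollary \ref{cor:MultiLayer_AppAboveLargeKappa} for the first bullet and Corollaries \ref{cor:MultiLayer_AppBoundSmallKappa} and \ref{cor:MultiLayer_Rectracting} for the second," which is precisely what you carry out, including the stopping-time contradiction for \eqref{eq:MultiLayer_ShrinkageBound} and the choices $\kappa'=M/4$ and $\kappa'=2\ep$, $\delta=2\ep$. The only caveat is a constant-factor bookkeeping looseness in the approach-from-above branch (getting $4\kappa'$ rather than $2\kappa'$), which is present in the paper's own two-layer proof as well and is absorbed into the unspecified constants $C_{D,1},\dots,C_{D,5}$.
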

\begin{proof}
  The proof is similar to that of \cref{prop:TwoLayer_Shrinkage},
  where we use \cref{lem:MultiLayer_AppBelowLargeKappa} and \cref{cor:MultiLayer_AppAboveLargeKappa} for the first part,
  and \cref{cor:MultiLayer_AppBoundSmallKappa} and \cref{cor:MultiLayer_Rectracting} for the second part.
\end{proof}

Let us choose $\nu_1 = \ep / \eta = C n^{-1/2+s_1} \sqrt{\ln n}$.
Define  $M_i = 2^{-i} B$ for $i = 0,1,\dots,I+1$ with $I = \lfloor \log_2 (B/\nu_1) \rfloor$.
Then it is clear that $I \leq \log_2 \sqrt {n} + C$.
Since $\norm{\theta^*_S - \theta_S(0)}_{\infty} = \norm{\theta^*_S}_{\infty} \leq B$,
we use \cref{prop:MultiLayer_Shrinkage} iteratively to obtain that
\begin{align*}
  \norm{\theta^*_S - \theta_S(t)}_{\infty} \leq M_{i+1},\quad \forall t \geq \sum_{j=0}^i T^{(j)},
\end{align*}
where
\begin{align*}
  T^{(i)} = C M_i^{-\frac{2D+2}{D+2}} + C M_i^{-1} \max_{k \in S} R_k^{-D} \ln \frac{R_k}{r_k}
  \leq C M_i^{-\frac{2D+2}{D+2}} + C M_i^{-1} b_0^{-D} \ln n,
\end{align*}
where we use $\lambda_k \asymp k^{-\gamma}$ and $\max S \leq C n^{\kappa/2}$ to obtain
\begin{align*}
  \max_{k \in S} R_k^{-D} \ln \frac{R_k}{r_k} \leq C b_0^{-D} \ln n.
\end{align*}
Now, we have
\begin{align*}
  \overline{T} &\coloneqq \sum_{i=0}^I T^{(i)} \leq C \sum_{i=0}^I M_i^{-\frac{2D+2}{D+2}} + C \sum_{i=0}^I M_i^{-1} \ln n \\
  & \leq  C \nu_1^{-1} b_0^{-D} \ln n + C \nu_1^{-\frac{2D+2}{D+2}}\\
  & \leq C n^{\frac{D+1}{D+2}}.
\end{align*}

Finally, when $t \geq \overline{T}$, we have $\eta M_{I+1} < \ep$.
Choosing $\nu_2 = n^{-1/2} \ln n$, we use the second part of \cref{prop:MultiLayer_Shrinkage} to obtain that
for all $k \in J_{\mr{sig}}(\nu_2)$,
\begin{align*}
  \abs{\theta_k^* - \theta_k(t)} \leq 4 \ep,\quad \forall t \geq \overline{T} + T^{\mr{fin}},
\end{align*}
where
\begin{align*}
  T^{\mr{fin}} &\leq C \max_{k \in J_{\mr{sig}}(\nu_2)} \zk{ \abs{\theta^*_k}^{-1} R_k^{-D} \ln \frac{R}{r} +  \abs{\theta^*_k}^{-\frac{2D+2}{D+2}} \ln^+ \frac{\abs{\theta^*_k}}{\ep}} \\
  &\leq C \nu_2^{-1} b_0^{-D} + C \nu_2^{-\frac{2D+2}{D+2}} \ln n \\
  & \leq C(c_b^{-1} + 1) n^{\frac{D+1}{D+2}}
\end{align*}
On the other hand, for $k \in S \backslash J_{\mr{sig}}(\nu_2)$, we use \cref{eq:MultiLayer_ShrinkageBound_FinalSmallSig} to get
\begin{align*}
  \abs{\theta^*_k - \theta_k(t)} \leq 2\max(\abs{\theta^*_k},4\ep)
\end{align*}
provided $t \geq \overline{T} + C \ep^{-\frac{2D+2}{D+2}}$.
Similar to the argument for the two-layer case,
\begin{align*}
    \sum_{k \in S \backslash J_{\mr{sig}}(\nu_2)} \abs{\theta^*_k - \theta_k(t)}^2
    &= \sum_{k \in S_1 \backslash J_{\mr{sig}}(\nu_2)} \abs{\theta^*_k - \theta_k(t)}^2 + \sum_{k \in S_2 \backslash J_{\mr{sig}}(\nu_2)} \abs{\theta^*_k - \theta_k(t)}^2\\
    & \leq C \sum_{k \in S_1 \backslash J_{\mr{sig}}(\nu_2)} \abs{\theta_k^*}^2
    + C \sum_{k \in S_2 \backslash (S_1 \cup J_{\mr{sig}}(\nu_2))} \ep^2 \\
    & \leq C \Psi(\nu_2) + C n^{\frac{1+s'}{(D+2)\gamma}} \ep^2
\end{align*}

Consequently, we conclude that after $t \geq \underline{C} n^{\frac{D+1}{D+2}} = C(c_b^{-1} + 1) n^{\frac{D+1}{D+2}}$, we have
\begin{align*}
  \norm{\theta^*_S - \theta_S(t)}_2^2
  &\leq C \frac{\ln n}{n} \zk{\Phi\xkm{n^{-1/2}\sqrt {\ln n}} + n^{\frac{1+s'}{(D+2)\gamma}}} + C \Psi\xkm{n^{-1/2}\ln n}.
\end{align*}

\subsubsection{Bounding the Noise}

We will use the second part of \cref{lem:MultiLayer_ErrorControl} to prove the claim that the time $T^{\mr{err}}$ defined in \cref{eq:ErrorTimeMulti}
satisfies $T^{\mr{err}} \geq \bar{C} n^{\frac{D+1}{D+2}}$ for some $\bar{C}$ if $E$ is chosen appropriately.

To apply the lemma, first we have to show \cref{eq:MultiLayer_ErrorControl_Condition} for all $k \in R$, namely
\begin{align}
  \label{eq:Proof_Multi_Error_Bound_Req}
  2^{\frac{D+1}{2}} b_0^D \int_{0}^{t} \xk{\abs{\theta_k^*} + \abs{h_k(s)}} \dd s \leq \ln \frac{b_0/\sqrt {D}}{\lambda_k^{1/2}},
\end{align}
where $h_k(s) = p_k(s) + q_k(s) + r_k(s)$.

Let us denote $T^* = \bar{C} n^{\frac{D+1}{D+2}}$ for some constant $\bar{C}$.
The choice of $b_0$ gives
\begin{align}
  \label{eq:Proof_Multi_TimeTS}
  b_0^D T^* n^{-1/2} \leq \bar{C} C_b.
\end{align}
First, we have
\begin{align*}
  b_0^D \int_{0}^{T^*} \abs{\theta_k^*} \dd s = b_0^D T^* \abs{\theta_k^*} \leq b_0^D T^* n^{-1/2}\sqrt {\ln n} \leq \bar{C} C_b \sqrt {\ln n}.
\end{align*}
Let us denote $\overline{T}^{(i)} = \sum_{j=0}^i \overline{T}^{(j)}$.
Now, for $t \in [\overline{T}^{(i)}, \overline{T}^{(i+1)}]$, using the results in the previous part, we have
$\norm{\theta^*_S - \theta_S(t)}_{\infty} \leq M_i$, so the first bound in \cref{eq:Proof_BasicControls2} gives
\begin{align*}
  \abs{p_k(t)} \leq C M_i n^{-s} \sqrt {\ln k},\quad t \in [\overline{T}^{(i)}, \overline{T}^{(i+1)}].
\end{align*}
Thus,
\begin{align*}
  \int_{0}^{\overline{T}} \abs{p_k(t)}\dd t &\leq C \sum_{i=0}^I M_i n^{-s} \sqrt {\ln k} \cdot T^{(i)} \\
  &\leq C \sum_{i=0}^I M_i n^{-s} \sqrt {\ln k} \cdot \xk{M_i^{-\frac{2D+2}{D+2}} +M_i^{-1} b_0^{-D} \ln n } \\
  &\lesssim \xk{\sum_{i=0}^I M_i^{-\frac{D}{D+2}} n^{-s}  \sqrt {\ln k} + \sum_{i=0}^I b_0^{-D} n^{-s} \sqrt {\ln k}} \\
  & \lesssim v_1^{-\frac{D}{D+2}} + b_0^{-D} n^{-s} \sqrt {\ln k}  \log \nu_1^{-1} \\
  & \lesssim v_1^{-\frac{D}{D+2}} + b_0^{-D} \sqrt {\ln k},
\end{align*}
and
\begin{align*}
  b_0^D \int_{0}^{\overline{T}} \abs{p_k(t)}\dd t \lesssim C_b \sqrt{\ln k}.
\end{align*}
Moreover, for $t \geq \overline{T}$, we have
\begin{align*}
  \abs{p_k(t)} \leq \eta_k \norm{(\theta^* - \theta)_S}_{\infty} \leq C n^{-s} \sqrt {\ln (nk)} \cdot M_{I+1}
  \leq C n^{-1/2} \sqrt {\ln k},
\end{align*}
so combined with \cref{eq:Proof_Multi_TimeTS},
\begin{align*}
  b_0^D \int_{\overline{T}}^{T^*} \abs{p_k(t)}\dd t \leq C \bar{C} C_b \sqrt {\ln k}.
\end{align*}
Moreover, the other two bounds in \cref{eq:Proof_BasicControls2} gives
\begin{align*}
  \abs{q_k + r_k} \leq C n^{-1/2} \sqrt {\ln n + \ln k},
\end{align*}
so
\begin{align*}
  b_0^D \int_{0}^{T^*} \abs{q_k(t) + r_k(t)}\dd t \leq \bar{C} C_b \sqrt {\ln n + \ln k}.
\end{align*}
Combining all these terms, we get
\begin{align}
  \label{eq:Proof_Multi_ErrorBound_1}
  2^{\frac{D+1}{2}} b_0^D \int_{0}^{T^*} \xk{\abs{\theta^*_k} + \abs{h_k(t)}}\dd t
  \leq C \bar{C} C_b \xk{\sqrt {\ln n} + \sqrt {\ln k}}.
\end{align}

Now, by our choice of $R = S^\complement$, we have $\ln \lambda_k^{-1} \geq \frac{1+s'}{D+2}\ln n$ (also $k \geq L \asymp n^{\frac{1+s'}{(D+2)\gamma}}$), and thus
\begin{align*}
  \ln \frac{b_0/\sqrt {D}}{\lambda_k^{1/2}}
  &\geq -\frac{1}{2(D+2)}\ln n + \frac{1}{2}\ln \lambda_k^{-1} - C\\
  &\geq -\frac{1}{2(D+2)}\ln n + \frac{1}{2} \frac{1+s'/2}{1+s'}\ln \lambda_k^{-1} + \frac{1}{2}\frac{s'/2}{1+s'} \ln \lambda_k^{-1} - C \\
  & \geq \frac{s'}{4(D+2)}\ln n + \frac{s' \gamma }{4(1+s')} \ln k - C
\end{align*}
Therefore, as long as $n$ is sufficiently large, \cref{eq:Proof_Multi_Error_Bound_Req} holds for all $k \in R$.
Now, plugging \cref{eq:Proof_Multi_ErrorBound_1} into the upper bound \cref{eq:MultiLayer_ErrorControl} in \cref{lem:MultiLayer_ErrorControl},
we prove the claim.

\subsubsection{Generalization Error}

Now, we can fix $c_{\mr{b}}, C_{\mr{b}}, E$ and take them as constants.
Now, when $t \in [\underline{C} n^{\frac{D+1}{D+2}}, \bar{C} n^{\frac{D+1}{D+2}}]$,
the error term is bounded by
\begin{align*}
  \norm{\theta_R(t)}_2^2 & \leq C \sum_{k \in R} \lambda_k^2 b_0^{2D} \exp(C \bar{C} \xk{\sqrt {\ln n} + \sqrt {\ln k}}) \\
  & \leq C n^{-\frac{D}{D+2}} \exp(C\bar{C} \sqrt {\ln n}) \sum_{k \in R} \lambda_k^2 \exp(2C \sqrt {\ln k})\\
  & \leq C n^{-\frac{D}{D+2}}\exp(C\bar{C} \sqrt {\ln n})  \sum_{k \geq L} \lambda_k^2 \exp(2C \sqrt {\ln k}) \\
  & \leq C n^{-\frac{D}{D+2}} \exp(C\bar{C} \sqrt {\ln n}) L^{1-2\gamma+s} \\
  & \leq C n^{-1+\frac{1+s}{(D+2)\gamma}}.
\end{align*}
Consequently,
\begin{align*}
  \norm{\theta^*_R - \theta_R(t)}_2^2
  \leq 2 \norm{\theta^*_R}_2^2 + 2 \norm{\theta_R(t)}_2^2
  \leq \Psi(n^{-1/2}\sqrt {\ln n}) + C n^{-1+\frac{1+s}{(D+2)\gamma}}.
\end{align*}
Finally, the result follows by combining the signal and noise components.



\section{Other proofs}

\subsection{Proof of \cref{prop:EigLearn}}

We first consider the signal part.
From the proof of \cref{thm:EigenvalueGD} or \cref{thm:EigenvalueDeepGD}, for the final choice of stopping time $t$,
we have
\begin{align*}
  \abs{\theta_k^* - \theta_k(t)} \leq C \ep = C n^{-1/2} \sqrt {\ln n},\quad \forall k \in J_{\mr{sig}}(n^{-\hf} \ln n).
\end{align*}
Therefore, if $\abs{\theta_k^*} \geq C_0 n^{-1/2} \ln n$ for some large constant $C_0$,
we have $\abs{\theta_k(t)} \geq \frac{1}{2} \abs{\theta_k^*}$.
Now, we analyze the one-dimensional dynamics.
For $D=0$, we use \cref{eq:EqTwoLayerP_Conservation} to get
\begin{align*}
  \abs{\theta(t)} = a(t) \abs{\beta(t)} = a(t) \sqrt {a^2(t) - \lambda} \leq a^2(t),
\end{align*}
while for $D > 0$, we apply \cref{eq:MultiLayer_Conservation} to get
\begin{align*}
  \abs{\beta(t)} = \sqrt {a^2(t) - \lambda} \leq a(t),\quad \abs{\beta(t)}=\sqrt {(b^2(t) - b_0^2)/D}\leq b(t).
\end{align*}
and thus
\begin{align*}
  \abs{\theta(t)} = a(t)b^D(t) \abs{\beta(t)} \leq a(t)b^D(t) a^{\frac{1}{D+1}}(t) b^{\frac{D}{D+1}}(t)
  = \zk{a(t) b^D(t)}^{\frac{D+2}{D+1}}.
\end{align*}
Combining the above two cases, we have
\begin{align*}
  \zk{a(t) b^D(t)}^{\frac{D+2}{D+1}}
  \geq \abs{\theta_k(t)} \geq \frac{1}{2} \abs{\theta_k^*},\quad
  \Longrightarrow \quad a(t) b^D(t) \geq c \abs{\theta_k^*}^{\frac{D+1}{D+2}}.
\end{align*}

For the noise part, we have to inspect the ``bounding the noise'' part in the proof of the theorem where we apply \cref{lem:TwoLayer_ErrorControl} (or \cref{lem:MultiLayer_ErrorControl} for the deeper parameterization).
Still, it suffices to focus on the one-dimensional dynamics.
Instead of bounding $\theta$, we use the bound of $\beta$ in \cref{eq:TwoLayer_ErrorControl_Beta} (or \cref{eq:MultiLayer_ErrorControl_Beta})
to get
\begin{align*}
  \abs{a_k(t)b_k^D(t)} 
  &\leq 2^{\frac{D+1}{2}} b_0^D \abs{\beta}
  \leq 2^{\frac{D+1}{2}} \lambda_k^{\hf}  b_0^D \exp(2^{\frac{D+1}{2}} b_0^D \int_{0}^t \xk{\abs{\theta^*_k} + \abs{h_k(\tau)}} \dd \tau) \\
  & = C a_k(0) b_k^D(0) \exp(2^{\frac{D+1}{2}} b_0^D \int_{0}^t \xk{\abs{\theta^*_k} + \abs{h_k(\tau)}} \dd \tau)\\
  & \leq  C a_k(0) b_k^D(0) \exp(C (\sqrt{\ln n} +\sqrt{\ln k})).
\end{align*}


\section{Auxiliary Results}

\subsection{Concentration Inequalities}

The following is the standard Hoeffding's inequality for sub-Gaussian random variables, see, e.g., \citet[Theorem 2.6.2]{vershynin2018_HighdimensionalProbability}.
\begin{lemma}
  \label{lem:HoeffdingSubG}
  Let $X, X_1,\dots,X_n$ be i.i.d.\ $K$-sub-Gaussian random variables.
  Then, for every $t \geq 0$, we have
  \begin{align*}
    \bbP \dk{ \abs{\frac{1}{n}\sum_{i=1}^n X_i - \E X} \geq t} \leq 2 \exp \xk{-\frac{cnt^2}{K^2}},
  \end{align*}
  where $c > 0$ is an absolute constant.
\end{lemma}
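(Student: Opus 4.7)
The plan is to apply the standard Chernoff--MGF argument, which is textbook material (see, e.g., Vershynin's \textit{High-Dimensional Probability}) but worth outlining explicitly. First, I would center the variables by setting $Y_i = X_i - \E X_i$. By standard closure properties of the sub-Gaussian class (centering changes the sub-Gaussian parameter by at most a universal constant, since $|\E X| \lesssim K$), the $Y_i$ remain i.i.d.\ and $K'$-sub-Gaussian with $K' \leq C_0 K$ for an absolute $C_0 > 0$. I would then invoke the equivalent MGF characterization of sub-Gaussianity: there exists an absolute constant $c_1 > 0$ such that
\[
  \E e^{\lambda Y_i} \leq \exp\!\bigl(c_1 \lambda^2 K^2\bigr), \qquad \forall \lambda \in \R.
\]

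Second, I would apply the Chernoff bound. For any $\lambda > 0$, Markov's inequality and independence give
\[
  \bbP\!\left( \frac{1}{n}\sum_{i=1}^n Y_i \geq t \right)
  \leq e^{-\lambda n t}\, \prod_{i=1}^n \E e^{\lambda Y_i}
  \leq \exp\!\bigl( -\lambda n t + c_1 n \lambda^2 K^2 \bigr).
\]
Optimizing the right-hand side in $\lambda$ by choosing $\lambda = t/(2 c_1 K^2)$ yields the upper tail bound $\exp\!\bigl(-n t^2 / (4 c_1 K^2)\bigr)$. A symmetric argument applied to $-Y_i$ produces the matching lower tail bound, and a union bound over the two tails gives the factor $2$ in front of the exponential. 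Setting $c = 1/(4 c_1 C_0^2)$ finishes the proof.

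There is essentially no obstacle here: every step is routine and the only minor care required is bookkeeping of absolute constants to confirm that the final $c$ depends on nothing except universal constants from the sub-Gaussian MGF characterization. Since the paper's assumptions and usage (in \cref{prop:ConcentrationQuantities}) only invoke the inequality with $t \asymp \sqrt{\ln(nk)/n}$ and absorb constants freely, no sharper form is needed; the proof is essentially a one-line citation of Vershynin (Theorem 2.6.2), reproduced above for completeness.
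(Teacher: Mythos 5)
Your proof is correct and matches the paper's intended justification: the paper simply cites Vershynin (Theorem 2.6.2) for this lemma, and your centering plus MGF/Chernoff argument with optimization in $\lambda$ and a two-sided union bound is exactly the standard proof behind that citation. No gaps; the constant bookkeeping is handled adequately.
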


The following lemma gives a Hilbert space version of Hoeffding's inequality, see, e.g., \citet[Theorem 3]{pinelis1986remarks}.
\begin{lemma}
  \label{lem:HoeffdingHilbert}
  Let $H$ be a separable Hilbert space.
  Let $X, X_1,\dots,X_n$ are i.i.d.\ random variables taking values in $H$ with $\norm{X_i} \leq B$ almost surely.
  Then, for every $t\geq 0$,
  \begin{align}
    \bbP \dk{\norm{\frac{1}{n}\sum_{i=1}^n X_i - \E X} \geq t} \leq 2 \exp \xk{-\frac{nt^2}{2B^2}}.
  \end{align}
\end{lemma}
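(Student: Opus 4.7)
The plan is to follow Pinelis's approach for Hilbert-space-valued sums of independent random variables via a Chernoff-style argument, using the parallelogram identity in place of the usual scalar MGF trick. First I would center the variables: set $Y_i = X_i - \E X$, so that $\E Y_i = 0$ and $\norm{Y_i} \leq 2B$ almost surely by the triangle inequality. The problem then reduces to bounding $\bbP\xkm{\norm{\sum_{i=1}^n Y_i} \geq n t}$.

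Next I would establish the Hilbert-space analog of the Hoeffding exponential moment bound,
\begin{align*}
\E \cosh\!\xkm{\lambda \norm{\textstyle \sum_{i=1}^n Y_i}} \leq \cosh(2\lambda B)^n, \qquad \lambda > 0,
\end{align*}
by induction on $n$. The inductive step is the heart of the matter: writing $M_k = \sum_{i \leq k} Y_i$, one expands $\norm{M_k}^2 = \norm{M_{k-1}}^2 + 2 \ang{M_{k-1}, Y_k} + \norm{Y_k}^2$ (this is the step that genuinely needs the inner product structure rather than just a norm), and then invokes a pointwise $\cosh$-convexity inequality of Pinelis type so that, after conditioning on $\caF_{k-1}$ and using $\E Y_k = 0$, the linear cross term vanishes and only a deterministic factor of $\cosh(2\lambda B)$ is picked up. Iterating across $k = 1, \dots, n$ yields the claimed product bound.

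Finally, Markov's inequality applied to the monotone convex function $\cosh$, together with the elementary bounds $\cosh x \leq \exp(x^2/2)$ and $2\cosh x \geq \exp(|x|)$, gives
\begin{align*}
\bbP \xkm{\norm{\textstyle \sum_{i=1}^n Y_i} \geq n t}
\leq \frac{\cosh(2\lambda B)^n}{\cosh(\lambda n t)}
\leq 2 \exp\!\xk{2 n \lambda^2 B^2 - \lambda n t},
\end{align*}
so optimizing $\lambda$ recovers a sub-Gaussian tail of the form stated in the lemma (up to the universal constant in the exponent, which can be sharpened by the tighter Pinelis analysis). The main obstacle is the pointwise $\cosh$ inequality underlying the inductive step: this is where Hilbert-space geometry enters essentially, equivalently the fact that $H$ is $2$-uniformly smooth with constant $1$, and the same argument fails in a generic Banach space. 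Once that pointwise inequality is in hand, the centering step and the final Chernoff optimization are entirely routine.
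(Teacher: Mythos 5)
The paper never proves this lemma: it is quoted verbatim as an auxiliary result with a pointer to Pinelis (1986, Theorem~3), so the only meaningful comparison is between your sketch and that cited theorem. Your outline is indeed the standard Pinelis route (center, establish an exponential moment bound for $\cosh(\lambda\norm{\sum_i Y_i})$ using $2$-smoothness of the Hilbert norm, then Chernoff), and structurally it is the right proof. Two caveats. First, the inductive step is where the entire content lives, and the phrasing ``the linear cross term vanishes and only a deterministic factor of $\cosh(2\lambda B)$ is picked up'' undersells it: $\cosh(\lambda\norm{\cdot})$ does not factor over the decomposition $\norm{M_{k-1}+Y_k}^2=\norm{M_{k-1}}^2+2\ang{M_{k-1},Y_k}+\norm{Y_k}^2$, and what Pinelis actually proves is a second-order (Taylor/differential) inequality for $f\mapsto\cosh(\lambda\norm{f})$ whose conclusion is $\E\cosh(\lambda\norm{f_n})\leq\exp(\tfrac{\lambda^2}{2}\sum_j a_j^2)$ rather than a clean product of $\cosh$'s. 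You correctly identify this as the main obstacle, and since it is exactly the content of the cited theorem, deferring to it is acceptable here.

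Second, and more concretely: the constant you obtain does not match the constant in the statement, and this is not recoverable by ``the tighter Pinelis analysis.'' After centering you only have $\norm{Y_i}\leq 2B$, so the martingale-difference bounds are $a_j=2B$, and optimizing $2\exp\xk{2n\lambda^2B^2-\lambda nt}$ over $\lambda$ yields $2\exp\xk{-nt^2/(8B^2)}$, which is what Pinelis's Theorem~3 gives for this choice of $a_j$. The stated exponent $-nt^2/(2B^2)$ would require $\norm{X_i-\E X_i}\leq B$ rather than $\norm{X_i}\leq B$; as written, the lemma's constant is optimistic by a factor of $4$ in the exponent. This discrepancy is immaterial for every use of the lemma in the paper (the applications in Lemma~\ref{prop:ConcentrationQuantities} absorb such factors into the unspecified constant $C_2$), but a fully rigorous write-up should either weaken the exponent to $-nt^2/(8B^2)$ or strengthen the hypothesis to a bound on the centered variables.
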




\subsection{Results on ordinary differential equations}

\begin{proposition}
  \label{prop:ODE_Power}
  Let $k > 0, p > 1$.
  Consider the ODE $\dot{x} \geq k x^p$, $x(0) = x_0 > 0$,
  we have
  \begin{align*}
      x(t) \geq \xk{x_0^{-(p-1)} - (p-1) kt}^{-\frac{1}{p-1}}, \qq{and}
       \inf \dk{t \geq 0: x(t)\geq M} \leq \zk{(p-1)k x_0^{p-1}}^{-1},\quad M \geq 0.
    \end{align*}
  Similarly, for the ODE $\dot{x} \leq - k x^p$, $x(0) = x_0 > 0$,
  we have
  \begin{align*}
      & x(t) \leq \xk{x_0^{-(p-1)} + (p-1) kt}^{-\frac{1}{p-1}}, \qq{and}
       \inf \dk{t \geq 0: x(t)\leq M} \leq \zk{(p-1)k M^{p-1}}^{-1},\quad  M > 0.
    \end{align*}
%
%

\end{proposition}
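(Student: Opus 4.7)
The plan is to reduce both inequalities to linear differential inequalities via the substitution $u(t) = x(t)^{-(p-1)}$, which is the standard trick for Bernoulli-type ODEs with power nonlinearity. Since $p > 1$, we have $\dot{u} = -(p-1) x^{-p} \dot x$, so the hypothesis $\dot{x} \geq k x^p$ directly yields $\dot{u} \leq -(p-1)k$, and dually $\dot{x} \leq -k x^p$ yields $\dot{u} \geq (p-1)k$. Throughout, $x(t)$ remains strictly positive on its interval of existence (in the first case it is increasing, in the second it decreases but, as we will see from the bound, cannot reach $0$ in finite time), so the substitution is well-defined.

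For the first ODE $\dot x \geq k x^p$, I would integrate the inequality $\dot u \leq -(p-1)k$ from $0$ to $t$ to obtain $u(t) \leq x_0^{-(p-1)} - (p-1)kt$, as long as the right-hand side is positive. Inverting (and noting both sides are positive) gives the claimed lower bound $x(t) \geq \bigl(x_0^{-(p-1)} - (p-1)kt\bigr)^{-1/(p-1)}$. As $t \uparrow t^\star := \bigl[(p-1)k x_0^{p-1}\bigr]^{-1}$ this lower bound tends to $+\infty$, so for any $M \geq 0$ the hitting time $\inf\{t \geq 0 : x(t) \geq M\}$ is bounded by $t^\star$, which is exactly the stated bound (and trivially $0$ if $M \leq x_0$).

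For the second ODE $\dot x \leq -k x^p$, I would integrate $\dot u \geq (p-1)k$ to obtain $u(t) \geq x_0^{-(p-1)} + (p-1)kt$, which after inversion gives the upper bound $x(t) \leq \bigl(x_0^{-(p-1)} + (p-1)kt\bigr)^{-1/(p-1)}$. For the hitting time to level $M > 0$: if $M \geq x_0$ the infimum is $0$ and the bound is trivial; otherwise $x(t) \leq M$ is implied by $x_0^{-(p-1)} + (p-1)kt \geq M^{-(p-1)}$, i.e.\ $t \geq \bigl[M^{-(p-1)} - x_0^{-(p-1)}\bigr]/[(p-1)k]$, which is no larger than $1/[(p-1)k M^{p-1}]$ since $x_0^{-(p-1)} \geq 0$.

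No step here is really a major obstacle — this is a routine Bernoulli-substitution argument. The only minor care needed is to justify that $x(t) > 0$ throughout the interval where the substitution is used (immediate from monotonicity in each case) and to handle the trivial case $M \leq x_0$ (first ODE) or $M \geq x_0$ (second ODE) separately so that the stated hitting-time bounds hold uniformly.
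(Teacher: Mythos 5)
Your proof is correct; the paper states this proposition without proof in its auxiliary results, and your Bernoulli substitution $u(t) = x(t)^{-(p-1)}$, reducing each inequality to a linear one and inverting, is precisely the standard argument the paper implicitly relies on. One cosmetic remark: for the second inequality the differential inequality alone does \emph{not} preclude $x$ reaching $0$ in finite time (e.g.\ $\dot x = -1$ satisfies $\dot x \leq -kx^p$ for small $x>0$), but this is harmless, since your substitution argument is valid on the interval of positivity and both stated conclusions hold trivially once $x \leq 0$.
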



\bibliography{main}

\end{document}